\documentclass{article}
\usepackage[utf8]{inputenc}
\usepackage[T1]{fontenc}
\usepackage{fullpage}
\usepackage{amsmath,amssymb,amsthm,mathtools,bm,mathrsfs,dsfont}
\usepackage{graphicx,booktabs,array}
\usepackage{caption,placeins}
\usepackage[dvipsnames]{xcolor}
\usepackage{xspace}
\usepackage{algorithm,algpseudocode}
\usepackage{tikz}
\usetikzlibrary{arrows.meta,positioning}
\usepackage{microtype}
\usepackage[round,authoryear]{natbib}
\usepackage{xurl}
\usepackage[colorlinks,linkcolor=magenta,citecolor=blue,urlcolor=blue]{hyperref}
\usepackage{cleveref}
\hypersetup{pdftitle={GAV: A scientific experimentation environment for PDE agents},pdfauthor={Muheng Li and Wenlong Mou}}
\newcommand{\E}{\mathbb{E}}

\DeclareMathOperator*{\argmin}{arg\,min}

\makeatletter
\newcommand{\numberthis}{%
  \refstepcounter{equation}%
  \global\let\paper@numberthis@creflabel\cref@currentlabel
  \tag{\theequation}%
  \g@addto@macro\df@tag{%
    \let\cref@currentlabel\paper@numberthis@creflabel}}
\makeatother
\renewcommand{\leq}{\leqslant}
\renewcommand{\geq}{\geqslant}

\theoremstyle{plain}
\newtheorem{theorem}{Theorem}
\newtheorem{proposition}{Proposition}

\newtheorem{lemma}{Lemma}

\theoremstyle{definition}

\newtheorem{assumption}{Assumption}

\theoremstyle{remark}

\theoremstyle{plain}
\crefname{informaltheorem}{theorem}{theorems}
\Crefname{informaltheorem}{Theorem}{Theorems}

\begin{document}

\begin{center}
{\bfseries\LARGE GAV: A scientific experimentation\\environment for PDE agents\par}
\vspace{0.2in}
{\large Muheng Li \qquad Wenlong Mou\par}
\vspace{0.2in}
{\large Department of Statistical Sciences, University of Toronto\par}
{\large Vector Institute\par}
\end{center}

\begin{abstract}
Reliable feedback is essential for coding agents to develop programs for partial differential equations (PDEs). Yet evaluating these programs on new PDE problems often requires expensive reference computations, while fixed datasets restrict the range of experiments. We introduce Generator--Agent--Verifier (GAV), a scientific experimentation environment that makes new PDE problems verifiable by construction. The Generator learns a distribution of continuous reference solutions using data from numerical simulations or experimental measurements to represent a target physical regime. Using the method of manufactured solutions, it works backward from the generated reference solutions to construct matching PDE problems without a separate numerical solve. The Agent designs experiments, investigates failures, and iteratively improves its program. The Verifier evaluates candidate programs against the reference solutions and provides numerical feedback on accuracy and computational cost. This feedback can also guide the generation of new problems that probe the current program's weaknesses. We provide a theoretical analysis of evaluation reliability and guided development. Experiments on Navier--Stokes and Vlasov--Poisson-type equations across different data sources and regimes demonstrate the effectiveness of GAV. In particular, on the numerical Navier--Stokes task, GAV reduces the mean test error of three existing PDE agents by more than an order of magnitude relative to their original setups. Code is available at \url{https://github.com/PotatoJnny/PDE-GAV}.

\end{abstract}

\section{Introduction}
\label{sec:introduction}

Large language model agents increasingly contribute to scientific research by writing, executing, and revising programs.
Recent progress spans mathematical discovery and algorithm design~\citep{romeraparedes2024funsearch,novikov2025alphaevolve}, as well as machine-learning experimentation and scientific software development~\citep{yamada2025aiscientistv2,aygun2026era}.
Yet reliably completing multi-step scientific tasks remains challenging~\citep{bragg2026astabench}.
By testing candidate programs, an agent can identify failures, compare alternatives, and assess whether a revision improves accuracy or efficiency~\citep{aygun2026era,li2025codepde}.
Repeated cycles of proposing, evaluating, and revising programs also offer a way to turn additional test-time computation into better programs~\citep{novikov2025alphaevolve,ye2026simpletes}.
Realizing this potential requires a scientific experimentation environment in which agents can design and run informative experiments and obtain reliable feedback to guide their next steps.

We focus on developing programs for partial differential equation (PDE) tasks, including numerical solution and reduced models for predicting specified macroscopic quantities. In these tasks, an agent not only needs to produce a program that executes successfully, but also to ensure that the program produces accurate results. Reliable numerical feedback is therefore needed to measure progress~\citep{dong2026autopde}.
For large-scale computational problems arising from real-world applications, an effective experimentation environment for PDE agents must address the following needs:
\begin{itemize}
    \item \textbf{Asymmetric computational cost:} Reference-based evaluation compares a candidate program's output with a reference solution, often computed by a high-resolution numerical solver~\citep{li2021fno}. Computing these reference solutions can be prohibitively expensive when an agent evaluates many new problems. The environment must therefore provide reliable feedback without expensive reference computations for every new problem.
    \item \textbf{Data-driven physical regimes:} The effectiveness of a numerical method or a reduced model depends on its operating regime. Different regimes may favor different methods. In real-world applications, the precise physical regime is often unknown a priori, and can be characterized as the distribution of plausible solutions. Therefore, the environment must provide a way to learn and represent the physical regime from simulation or measurement data, and use this knowledge to guide the agent's experimentation and program development.
    \item \textbf{Interactive feedback:} In experimentation environments, agents can iteratively propose experiments, receive feedback, and refine their programs. This interactive loop allows agents to explore the problem space, identify failure modes, and improve their methods. A fixed dataset alone is insufficient; the environment must provide new problems and reliable numerical feedback~as~experiments~require.
\end{itemize}

\begin{samepage}
Recent work has developed PDE agents that generate and refine solvers using numerical feedback~\citep{li2025codepde,fazliani2025pdesharp,dong2026autopde}.
However, their experimental environments do not yet meet all three needs above.
We introduce the Generator--Agent--Verifier (GAV) environment (Figure~\ref{fig:workbench}) to address this gap.
The Generator learns a distribution of reference solutions from simulation or measurement data, then constructs matching PDE problems using the method of manufactured solutions~\citep{salari2000manufactured}.
Each problem comes with an exact reference solution without a separate numerical solve.
The Agent can request new problems and obtain problem inputs for existing problems as its experiments require.
The Verifier evaluates the Agent's program outputs using the corresponding reference solutions under the task's performance objective and returns numerical feedback. This feedback can also guide the Generator toward new problems that probe the current program's weaknesses.
The Agent uses this feedback to revise its program and choose its next experiment.
Our work enhances existing PDE agents by providing the experimentation environment in which they develop their programs.

\end{samepage}

\begin{figure}[t]
\centering

\includegraphics[width=\linewidth]{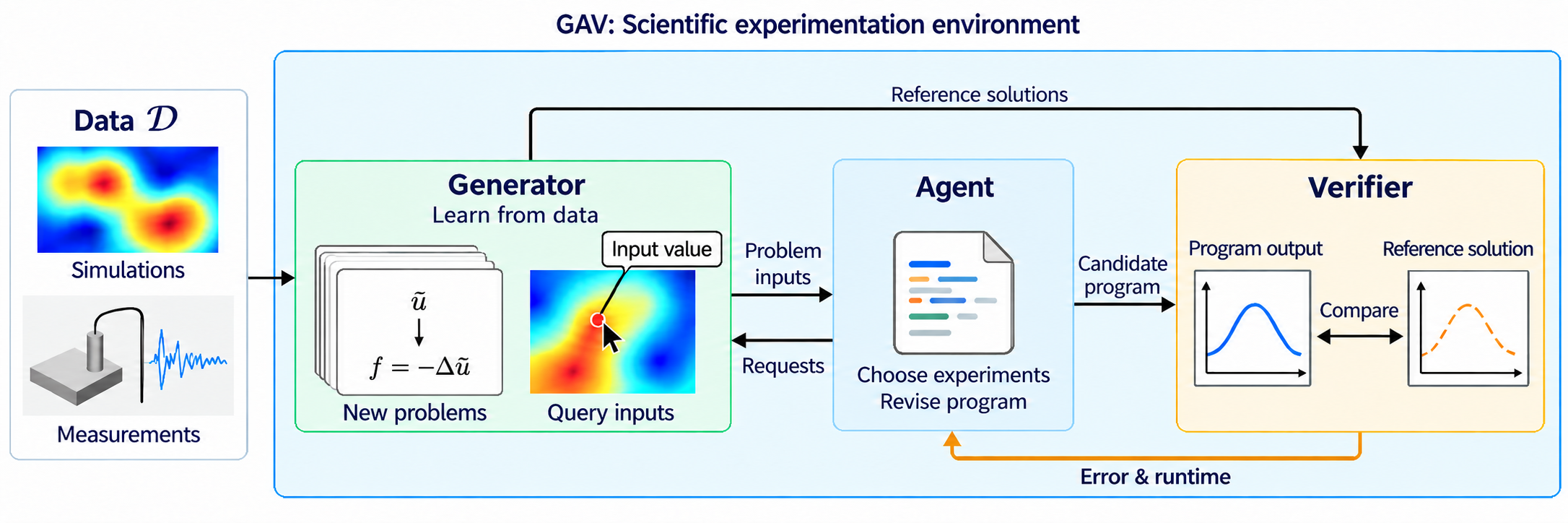}

\caption{\textbf{GAV: a scientific experimentation environment for PDE agents.} The Generator constructs PDE problems and reference solutions from scientific data. The Agent requests problems and input information, tests programs, and uses the Verifier's feedback on error and runtime to guide further experiments and revisions. The diagram illustrates a numerical solution task.}
\label{fig:workbench}

\end{figure}

Specifically, our contributions are as follows:
\begin{itemize}
    \item We construct an experimentation environment from scientific data that supports problem generation, continuous input queries, and evaluation against exact reference solutions, enabling existing PDE agents to conduct experiments throughout program development.
    \item We establish theoretical guarantees for evaluation reliability and guided development. Under stated assumptions, we bound the error in evaluating a fixed program and prove convergence of idealized guided development in a convex setting.
    \item We evaluate GAV on Navier--Stokes and Vlasov--Poisson-type equations using numerical and experimental data. On the numerical Navier--Stokes task, GAV reduces the mean test error of CodePDE, AutoNumerics, and PDE-SHARP by more than an order of magnitude relative to their original experimental setups. Ablations and development case studies examine how the environment's components affect program quality and the Agent's experimental decisions.
\end{itemize}

\section{Problem setup}
\label{sec:task}

\paragraph{Inputs.}
The environment and program-development process are specified by four inputs:

\begin{enumerate}
\item \textbf{PDE specification.}
This specifies a PDE family through its equation form, spatial domain $\Omega$ including any internal obstacles, and types of initial and boundary conditions \citep{li2025codepde,dong2026autopde}.
Time-dependent problems also specify a time interval.
Within this family, a \emph{PDE problem} is represented by $c=(\lambda,f,g)$: the physical parameters $\lambda$, source term $f$, and initial and boundary data $g$.
Its unknown solution $u$ satisfies
\begin{equation}
\mathcal N_\lambda[u] = f, \qquad \mathcal B[u] = g.
\label{eq:pde_problem}
\end{equation}
Here $\mathcal N_\lambda$ is the PDE operator with parameters $\lambda$, and $\mathcal B$ encodes the initial and boundary conditions.
For example, Navier--Stokes equations describe fluid velocity and pressure, with viscosity as a physical parameter and external forcing as a source term.
The Vlasov--Poisson (VP) equations describe a particle distribution over position and velocity coupled to an electric field; the Vlasov--Poisson--Fokker--Planck (VPFP) equations additionally include collisions.

\item \textbf{Data $\mathcal D$.}
We are given discrete solution observations $\mathcal D$ from numerical simulations or experimental measurements, together with their sampling coordinates and any available problem inputs.
When complete problem--solution pairs are available, we write $\mathcal D=\{(c_i,\widehat u_i)\}_{i=1}^{n}$, where $\widehat u_i$ contains the sampled solution values for problem $c_i$.
The data may also provide partial observations from which the Generator constructs continuous reference solutions (Appendix~\ref{app:construction}).

\item \textbf{Program-development task.}
The task specifies the program's inputs, required outputs, and performance objective.
The Agent chooses the computational method and internal representation used to produce those outputs.
We use two tasks as examples:

\emph{Numerical solution.}
A numerical solver computes an approximation to $u$ from problem inputs \citep{li2025codepde}.
For example, the task may require minimizing solution error subject to a runtime limit per problem.

\emph{Reduced models for macroscopic prediction.}
Here a reduced model is a program that predicts specified macroscopic quantities, such as density, electric field, particle flux, pressure, and heat flux in kinetic equations, from the problem inputs.
The reduction concerns the required outputs.
No particular reduced state or macroscopic evolution equation is prescribed.
The Agent aims to minimize prediction error for these quantities, subject to any runtime limit specified by the task.

\item \textbf{Agent setting.}
This specifies the Agent, its development strategy, and its development budget (time and computational resources).
We treat this setting as given \mbox{when studying the environment.}

\end{enumerate}

\paragraph{Output.}
The output of program development is an executable program for the specified task.
The Agent develops this program by using the environment to run experiments and obtain numerical feedback, aiming to meet the task requirements within its development budget.

Our goal is to build an environment that supports this process across different PDE specifications, types of data, tasks, and Agent settings.
We evaluate the environment through the performance of the final program.

\section{GAV: A scientific experimentation environment}
\label{sec:method}

GAV supplies the Agent specified in Section~\ref{sec:task} with problems and numerical feedback for developing its program.
We first describe how the Generator constructs problems with reference solutions (Section~\ref{sec:generation}) and how the Verifier evaluates candidate programs (Section~\ref{sec:selection}).
We then explain how the Agent uses these capabilities to choose further experiments and improve its program (Section~\ref{sec:interaction}), completing the interaction in Figure~\ref{fig:workbench}.

\subsection{Generating PDE problems with reference solutions}
\label{sec:generation}

The Generator aims to construct new PDE problems in the physical regime represented by $\mathcal D$.
We learn a generative model of continuous reference solutions from the available discrete observations.
Sampling from this model yields a reference solution $\widetilde u$ that can be evaluated at any point in the prescribed domain and differentiated as required by the PDE operator.
Representation, training, and sampling details are given in Appendix~\ref{app:construction}.

For each generated reference solution, we construct a corresponding PDE problem using the method of manufactured solutions (MMS)~\citep{salari2000manufactured}.
For prescribed physical parameters~$\lambda$,~we~set
\begin{equation}
f = \mathcal N_\lambda[\widetilde u], \qquad g = \mathcal B[\widetilde u].
\label{eq:manufactured_case}
\end{equation}
This defines a problem $c=(\lambda,f,g)$ for which $\widetilde u$ satisfies Equation~\eqref{eq:pde_problem}.
Each problem therefore comes with a reference solution without a separate numerical solve.
We provide this reference solution to the Verifier for evaluating program outputs.
Repeating this construction supplies new problems for further experiments.
The continuous representation of $\widetilde u$ and the construction in Equation~\eqref{eq:manufactured_case} give continuous representations of the source term $f$ and initial and boundary data $g$.
The Agent's program can therefore obtain values of $f$ and $g$ at the locations needed by its numerical method, without being restricted to the resolution of the original data in $\mathcal D$.

\subsection{Numerical evaluation and feedback}
\label{sec:selection}

The Verifier evaluates candidate programs under the task's performance objective and returns numerical feedback to the Agent.
Because the Generator constructs each problem from a continuous reference solution, the Verifier has an exact reference solution for every generated problem.
It computes the task's required outputs from this reference and compares them with the program's predictions at the evaluation points.
These outputs may be solution values, as in the NS velocity task, or derived quantities, as in the kinetic macroscopic-prediction tasks.
The program may compute additional quantities internally to produce these outputs.

During development, reference solutions remain private to the Verifier; the Agent receives measured errors and runtime.
The same evaluation procedure provides feedback for program development and supports program selection and final evaluation.

\paragraph{Reliability of evaluation.}
We formalize the Verifier's evaluation procedure and provide theoretical guarantees on the accuracy of its loss estimates.
For the paired-data setting, we assume that the problems $c_i$ in $\mathcal D$ are drawn from a target distribution $P$.
The Generator trained on $\mathcal D$ produces new problems from a learned problem distribution $\widehat P$, each paired with a reference solution.

For this analysis, we consider a fixed test set of $M$ problems $\widetilde c_1,\ldots,\widetilde c_M$, sampled independently from $\widehat P$.
For a candidate program $A$ chosen without using this test set, let $\ell_A(c)\geq0$ denote its task loss on a problem $c$, computed by comparing the program's outputs with the corresponding quantities obtained from the reference solution.
We use the empirical mean loss $\widehat\mu_M$ computed on this test set to estimate the expected loss $\mu_P$ under the target distribution $P$:
\[
 \widehat\mu_M=\frac{1}{M}\sum_{j=1}^{M}\ell_A(\widetilde c_j),
 \qquad
 \mu_P=\E_{c\sim P}[\ell_A(c)].
\]
We now present the assumptions and the resulting evaluation guarantee.

\begin{assumption}
\label{ass:evaluation}
(i) $D_{\mathrm{KL}}(P\|\widehat P)\leq\varepsilon$ for some $\varepsilon\geq0$.
(ii)~Let $\mu_{\widehat P}=\E_{c\sim\widehat P}[\ell_A(c)]\in(0,\infty)$.
The normalized loss $\ell_A/\mu_{\widehat P}$ is sub-exponential under $\widehat P$ with parameters $\nu,\kappa\geq0$, i.e.,
\begin{equation}
 \log\E_{\widehat P}\exp\!\left(s\left[\frac{\ell_A}{\mu_{\widehat P}}-1\right]\right)
 \leq \frac{\nu^2 s^2}{2},
 \qquad |s|<1/\kappa.
 \label{eq:normalized-tail}
\end{equation}
When $\kappa=0$, the condition applies to every real $s$.
\end{assumption}

\begin{proposition}
\label{prop:relative-evaluation}
Under \Cref{ass:evaluation}, for sufficiently small $\varepsilon$ and any $\delta\in(0,1)$, we have with probability at least $1-\delta$,
\begin{equation}
 \frac{|\widehat\mu_M-\mu_P|}{\mu_P}
 \leq \frac{a_\varepsilon+b_M}{1-a_\varepsilon},
 \label{eq:relative-evaluation}
\end{equation}
where
\[
 a_\varepsilon=\nu\sqrt{2\varepsilon}+\kappa\varepsilon,
 \qquad
 b_M=\nu\sqrt{\frac{2\log(2/\delta)}{M}}+
       \frac{\kappa\log(2/\delta)}{M}.
\]
Here, ``sufficiently small'' means that $a_\varepsilon<1$.
\end{proposition}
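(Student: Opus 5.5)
The plan is to split the error into a sampling term and a distribution-shift term, bound each relative to $\mu_{\widehat P}$, and then convert the result into a bound relative to $\mu_P$. Set $X=\ell_A/\mu_{\widehat P}-1$, which under $\widehat P$ has mean zero and satisfies \eqref{eq:normalized-tail}. The triangle inequality gives
\[
|\widehat\mu_M-\mu_P|\leq|\widehat\mu_M-\mu_{\widehat P}|+|\mu_{\widehat P}-\mu_P|.
\]
I will show that $|\widehat\mu_M-\mu_{\widehat P}|\leq b_M\,\mu_{\widehat P}$ with probability at least $1-\delta$, and that $|\mu_{\widehat P}-\mu_P|\leq a_\varepsilon\,\mu_{\widehat P}$ deterministically.

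\textbf{Sampling term.} The test problems $\widetilde c_j$ are i.i.d.\ from $\widehat P$, and $A$ does not depend on them, so the $X_j=X(\widetilde c_j)$ are i.i.d.\ sub-exponential with parameters $(\nu,\kappa)$. Their average $\bar X$ is therefore sub-exponential with parameters $(\nu/\sqrt M,\kappa/M)$. A Chernoff bound, optimizing over $s$ in the admissible range, gives
\[
\Pr\bigl(|\bar X|>t\bigr)\leq 2\exp\bigl(-\min\{Mt^2/(2\nu^2),\,Mt/(2\kappa)\}\bigr).
\]
Taking $t=b_M=\nu\sqrt{2\log(2/\delta)/M}+\kappa\log(2/\delta)/M$ makes the right side at most $\delta$. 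The sum form of $b_M$, rather than a max, is convenient here: it dominates both regimes at once. The result is $|\widehat\mu_M/\mu_{\widehat P}-1|\leq b_M$ with the required probability.

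\textbf{Shift term.} This uses the Donsker--Varadhan variational formula: for every admissible $s$,
\[
s\,\E_P[X]\leq D_{\mathrm{KL}}(P\|\widehat P)+\log\E_{\widehat P}e^{sX}\leq\varepsilon+\nu^2s^2/2.
\]
For $s>0$ this gives $\E_P[X]\leq \varepsilon/s+\nu^2 s/2$, and applying it with $-s$ gives the same bound on $-\E_P[X]$. I then minimize over $0<s<1/\kappa$.
\begin{itemize}
\item If the unconstrained optimizer $s^\ast=\sqrt{2\varepsilon}/\nu$ satisfies $s^\ast<1/\kappa$, the bound is $\nu\sqrt{2\varepsilon}$.
\item Otherwise I take $s\uparrow 1/\kappa$. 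The bound becomes $\kappa\varepsilon+\nu^2/(2\kappa)$, and since $1/\kappa\leq s^\ast$ we have $\nu^2/(2\kappa)\leq\nu\sqrt{2\varepsilon}/2$.
\end{itemize}
In either case $|\E_P X|\leq\nu\sqrt{2\varepsilon}+\kappa\varepsilon=a_\varepsilon$. Because $\E_P X=\mu_P/\mu_{\widehat P}-1$, this reads $|\mu_P-\mu_{\widehat P}|\leq a_\varepsilon\mu_{\widehat P}$.

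Two technical points need care in this step. First, Donsker--Varadhan requires $\E_P[X]$ to be well defined. Since $\ell_A\geq0$, $X$ is bounded below, so I can apply the formula to truncations $\min(X,K)$ and pass to the limit by monotone convergence. Second, the edge cases $\nu=0$ or $\kappa=0$ must be checked; $\kappa=0$ means every $s$ is admissible, so the first case always applies.

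\textbf{Combining.} Adding the two bounds gives $|\widehat\mu_M-\mu_P|\leq(a_\varepsilon+b_M)\mu_{\widehat P}$. From the shift bound, $\mu_P\geq(1-a_\varepsilon)\mu_{\widehat P}$, which is positive because $a_\varepsilon<1$; this is exactly where "sufficiently small $\varepsilon$" is used. Dividing by $\mu_P$ yields \eqref{eq:relative-evaluation}. The main obstacle I expect is the shift step: the case analysis for the constrained minimization over $s$ and the measure-theoretic justification of the change of measure.
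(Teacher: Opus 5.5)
Your decomposition and shift-term argument match the paper's proof. Both use the triangle inequality through $\mu_{\widehat P}$, the entropy (Donsker--Varadhan) inequality with truncation from above to get integrability under $P$, and the final division via $\mu_P\geq(1-a_\varepsilon)\mu_{\widehat P}$. Your case analysis in $s$ is correct; the paper avoids it with the single choice $s=\sqrt{2\varepsilon}/(\nu+\kappa\sqrt{2\varepsilon})$.

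The gap is in the sampling term. The tail bound you display, $2\exp(-\min\{Mt^2/(2\nu^2),\,Mt/(2\kappa)\})$, is valid, but it does not give $\delta$ at $t=b_M$. With $x=\log(2/\delta)$,
\[
\frac{Mb_M}{2\kappa}=\frac{x}{2}+\frac{\nu\sqrt{2Mx}}{2\kappa}.
\]
Suppose $\nu\sqrt{2M}<\kappa\sqrt{x}$. Then this linear exponent is smaller than $x$, while the quadratic exponent satisfies $Mb_M^2/(2\nu^2)\geq \kappa^2x^2/(2M\nu^2)>x$. So the minimum is below $x$, and your right-hand side exceeds $2e^{-x}=\delta$. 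For instance, $M=\kappa=1$, $\nu=0.01$, $\delta=0.01$ gives about $0.14$; such parameters are admissible, e.g.\ for a loss with $\ell_A/\mu_{\widehat P}$ uniform on $[0.99,1.01]$.

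The loss comes from weakening the exact second-regime exponent $Mt/\kappa-M\nu^2/(2\kappa^2)$ to $Mt/(2\kappa)$. From your displayed inequality you can only certify the threshold $\nu\sqrt{2x/M}+2\kappa x/M$. So your remark that the sum form of $b_M$ dominates both regimes at once fails in the linear regime.

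To repair it, keep the Chernoff bound in threshold form. For each $0<s<M/\kappa$, Markov's inequality gives $\mathbb{P}\bigl(\bar X>x/s+\nu^2s/(2M)\bigr)\leq e^{-x}$. Write $s=Ms'$ with $0<s'<1/\kappa$. The threshold becomes $(x/M)/s'+\nu^2s'/2$, which is exactly the function you minimized in the shift step with $\varepsilon$ replaced by $x/M$. Your case analysis therefore yields a threshold at most $\nu\sqrt{2x/M}+\kappa x/M=b_M$. When $\nu=0$ this bound is only approached, which continuity of probability handles; alternatively, as the paper notes, $\nu=0$ forces $X=0$ almost surely.

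The paper does this directly with $s=\sqrt{2Mx}/(\nu+\kappa\sqrt{2x/M})$, applies the same bound to $-\bar X$, and takes a union bound. With this change the rest of your proposal goes through.
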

\noindent See~\Cref{app:evaluation-theory} for the proof.

\Cref{ass:evaluation}(i) requires the learned problem distribution to be close to the target distribution in KL divergence.
Under regularity assumptions, flow-matching theory bounds the forward KL divergence in terms of the vector-field approximation error~\citep[Theorem~1]{wildberger2023flowmatching}.
\Cref{ass:evaluation}(ii) requires that the loss does not vary too much across generated problems relative to its mean (a standard condition satisfied, for example, by bounded losses).

In \Cref{prop:relative-evaluation}, the bound on the relative error in estimating $\mu_P$ becomes smaller as $\varepsilon$ decreases and $M$ increases.
\Cref{prop:relative-evaluation} thus supports our Generator--Verifier design: when the learned problem distribution approximates the target distribution and enough independent problems are evaluated, the Verifier can reliably evaluate a fixed program under the target distribution.

\subsection{Agent-directed experimentation}
\label{sec:interaction}

Together, the Generator and Verifier provide an experimentation environment in which the Agent can \emph{test on new problems}, \emph{obtain problem inputs at chosen locations and resolutions}, and \emph{receive error feedback computed from exact reference solutions} to improve its program.
The Agent can use this feedback to decide which experiments to conduct next.

During development, the Agent may need to investigate specific weaknesses or check for overfitting to the learned problem distribution.
We therefore provide two optional extensions to the Generator.

Given a program, \emph{guided generation} seeks out problems on which that program is likely to make larger errors.
We fit a predictor to the Verifier's error feedback and use its gradient to guide sampling, adapting the return guidance of \citet{janner2022planning}.
Implementation details are given~in~Appendix~\ref{app:guided-generation}.

\emph{Perturbation} aims to reduce overfitting to the Generator's learned distribution.
We modify a generated reference solution while preserving the constraints required by the PDE, then use MMS to construct the corresponding problem.
Implementation details are given in Appendix~\ref{app:perturbation}.

Appendix~\ref{app:guided-case} examines problem difficulty under guided generation.
Appendix~\ref{app:perturbation-case} illustrates how perturbation feedback redirects the Agent's development of an NS solver.

\paragraph{Guided development.}
Guided generation seeks problems with large program errors while staying close to the learned problem distribution.
Consider a family of programs $A_\theta$, with parameters $\theta\in\Theta\subset\mathbb R^d$ and task loss $\ell_\theta(c)=\ell_{A_\theta}(c)$.
For a fixed finite KL radius $\rho\geq0$, this gives the~worst-case~objective
\begin{equation}
 F(\theta)=\sup_{R:\,D_{\mathrm{KL}}(R\|\widehat P)\leq\rho}\E_{c\sim R}[\ell_\theta(c)].
 \label{eq:robust-objective}
\end{equation}
For the current program $A_{\theta_t}$, replacing the KL constraint by a penalty gives the objective $\E_R\ell_{\theta_t}-\beta_t^{-1}D_{\mathrm{KL}}(R\|\widehat P)$.
For bounded losses and $\beta_t>0$, its maximizer is the exponential tilt
\begin{equation}
 \frac{dR_t}{d\widehat P}(c)
 =\frac{\exp\{\beta_t\ell_{\theta_t}(c)\}}
 {\E_{\widehat P}\exp\{\beta_t\ell_{\theta_t}\}}.
 \label{eq:exact-guidance-tilt}
\end{equation}
When $\beta_t$ is chosen so that $D_{\mathrm{KL}}(R_t\|\widehat P)=\rho$, this law also solves the constrained problem in~\eqref{eq:robust-objective}.
The reformulation provides a sampling target for guided generation~\citep{feng2025guidance}.

We analyze an idealized setting in which each guidance step returns this exact constrained maximizer, with a finite calibrated $\beta_t>0$ when $\rho>0$.
For $\rho=0$, take $R_t=\widehat P$ and $\beta_t=0$.
Let $f_t(\theta)=\E_{c\sim R_t}[\ell_\theta(c)]$ denote the expected loss on the guided problems; exact maximization gives $f_t(\theta_t)=F(\theta_t)$.
Starting from $\theta_0\in\Theta$, the Agent revises its program with a fixed stabilization parameter $\tau>0$:
\begin{equation}
 \theta_{t+1}\in\argmin_{\theta\in\Theta}
 \left\{f_t(\theta)+\frac\tau2\|\theta-\theta_t\|^2\right\}.
 \label{eq:stabilized-update}
\end{equation}

\begin{assumption}
\label{ass:convex-guidance}
(i)~The set $\Theta$ is nonempty, compact, and convex, and $F$ is convex on $\Theta$.

(ii)~The losses are uniformly bounded under $\widehat P$ and $G$-Lipschitz continuous in $\theta$, for some finite $G\geq0$, on a common set of problems of full $\widehat P$-measure.
\end{assumption}

\begin{theorem}
\label{thm:guided-convergence}
Under \Cref{ass:convex-guidance}, consider the idealized guided updates in~\eqref{eq:stabilized-update}, with $0\leq\beta_t\leq\bar\beta<\infty$ for every $t\geq0$ and $\tau>\bar\beta G^2$.
Then $F(\theta_{t+1})\leq F(\theta_t)$ for all $t\geq0$.
Let $F^\star=\min_{\theta\in\Theta}F(\theta)$ denote the minimum objective value over this program family.
For every integer $T\geq1$, the difference between $F(\theta_T)$ and $F^\star$ satisfies
\begin{equation}
 F(\theta_T)-F^\star
 \leq\frac{2\tau\,\operatorname{diam}(\Theta)^2}{T+1},
 \label{eq:convex-guided-rate}
\end{equation}
where $\operatorname{diam}(\Theta)=\sup_{\theta,\theta'\in\Theta}\|\theta-\theta'\|$.
\end{theorem}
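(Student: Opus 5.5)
The plan is to treat the update~\eqref{eq:stabilized-update} as an inexact proximal-point step on $F$. The key is a sandwich bound relating the surrogate $f_t$ to $F$:
\[
 F(\theta)\;\geq\; f_t(\theta)\;\geq\;F(\theta)-\frac{\bar\beta G^2}{2}\|\theta-\theta_t\|^2 \qquad\text{for all }\theta\in\Theta .
\]
The left inequality is immediate, because $R_t$ is feasible for the supremum in~\eqref{eq:robust-objective}. The right inequality is the main obstacle. To prove it, I would use the dual (Donsker--Varadhan) representation of the KL-constrained supremum:
\[
 F(\theta)=\inf_{\beta>0}\Big\{\beta^{-1}\log\E_{\widehat P}e^{\beta\ell_\theta}+\beta^{-1}\rho\Big\}.
\]
At $\theta_t$ the infimum is attained at the calibrated $\beta_t$, which gives $F(\theta_t)=\beta_t^{-1}\log\E_{\widehat P}e^{\beta_t\ell_{\theta_t}}+\beta_t^{-1}\rho$. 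Evaluating the dual at $\beta_t$ for a general $\theta$ gives
\[
 F(\theta)\leq \beta_t^{-1}\log\E_{\widehat P}e^{\beta_t\ell_\theta}+\beta_t^{-1}\rho .
\]
Writing $\ell_\theta=\ell_{\theta_t}+\Delta$ with $|\Delta|\leq G\|\theta-\theta_t\|$ on the full-measure set, this becomes
\[
 F(\theta)\leq F(\theta_t)+\beta_t^{-1}\log\E_{R_t}e^{\beta_t\Delta}.
\]
Hoeffding's lemma then bounds the log-moment term by $\E_{R_t}\Delta+\beta_t G^2\|\theta-\theta_t\|^2/2$ (the range of $\Delta$ is $2G\|\theta-\theta_t\|$). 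Since $f_t(\theta)=f_t(\theta_t)+\E_{R_t}\Delta$ and $f_t(\theta_t)=F(\theta_t)$, the right inequality follows with $\bar\beta$ in place of $\beta_t$. Boundedness of the losses makes all the exponentials finite. The case $\rho=0$, with $R_t=\widehat P$ and $\beta_t=0$, is trivial since then $f_t=F$.

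With the sandwich in hand, I would show monotonicity. Optimality of $\theta_{t+1}$, compared against $\theta=\theta_t$, gives $f_t(\theta_{t+1})+\frac\tau2\|\theta_{t+1}-\theta_t\|^2\leq F(\theta_t)$. Combining this with the lower sandwich yields
\[
 F(\theta_{t+1})\leq F(\theta_t)-\frac{\tau-\bar\beta G^2}{2}\|\theta_{t+1}-\theta_t\|^2\leq F(\theta_t).
\]
For the rate, I compare against $\theta=(1-\alpha)\theta_t+\alpha\theta^\star$ for $\alpha\in[0,1]$, where $\theta^\star$ is a minimizer of $F$ over $\Theta$; it exists by compactness and the continuity of $F$, which inherits Lipschitzness from the losses. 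Using the minimizing property of $\theta_{t+1}$, then $f_t\leq F$, then convexity of $F$:
\[
 f_t(\theta_{t+1})+\tfrac\tau2\|\theta_{t+1}-\theta_t\|^2\leq F(\theta)+\tfrac\tau2\alpha^2 D^2\leq F(\theta_t)-\alpha\big(F(\theta_t)-F^\star\big)+\tfrac\tau2\alpha^2D^2,
\]
where $D=\operatorname{diam}(\Theta)$. On the left, the lower sandwich together with $\tau>\bar\beta G^2$ gives $F(\theta_{t+1})\leq f_t(\theta_{t+1})+\frac\tau2\|\theta_{t+1}-\theta_t\|^2$. Setting $e_t=F(\theta_t)-F^\star$, this produces the recursion
\[
 e_{t+1}\leq e_t-\alpha e_t+\tfrac{\tau D^2}{2}\alpha^2 \qquad\text{for all }\alpha\in[0,1].
\]

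Finally I would solve the recursion in the standard Frank--Wolfe way. For $t=0$, take $\alpha=1$ to get $e_1\leq \tau D^2/2\leq \tau D^2$. For later steps, optimize $\alpha=\min\{1,e_t/(\tau D^2)\}$, which gives
\[
 e_{t+1}\leq e_t-\frac{e_t^2}{2\tau D^2}\qquad\text{whenever } e_t\leq\tau D^2 .
\]
Inverting, this implies $1/e_{t+1}\geq 1/e_t+1/(2\tau D^2)$. Induction then gives $e_T\leq 2\tau D^2/(T+1)$; the base case is $e_1\leq\tau D^2=2\tau D^2/2$, and the condition $e_t\leq\tau D^2$ persists because the sequence is monotone. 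Apart from the sandwich bound, the remaining steps are routine.
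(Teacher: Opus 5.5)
Your proposal is correct and follows the paper's proof: the same sandwich bound $0\le F(\theta)-f_t(\theta)\le \bar\beta G^2\|\theta-\theta_t\|^2/2$, the same majorization step for monotonicity, and the same recursion $e_{t+1}\le(1-\alpha)e_t+\tau\alpha^2D^2/2$ obtained by comparing with $(1-\alpha)\theta_t+\alpha\theta^\star$. The differences are only in how the sub-steps are carried out: you get the upper sandwich from weak duality at the fixed multiplier $\beta_t$ plus Hoeffding's lemma, while the paper changes the reference measure to $R_t$ and bounds the cumulant via $K''\le a^2$, which yields the same inequality; and you solve the recursion with the adaptive $\alpha=\min\{1,e_t/(\tau D^2)\}$ and reciprocal inversion, which requires setting aside the trivial cases $e_t=0$ and $D=0$, while the paper takes $\alpha=2/(t+2)$ and telescopes with weights $(t+1)(t+2)$.
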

\noindent See~\Cref{app:convergence} for the proof.

The bound on tilting strength, together with Lipschitz losses, gives $F(\theta)-f_t(\theta)\leq\bar\beta G^2\|\theta-\theta_t\|^2/2$ by \Cref{lem:guidance-model}.
The stabilization therefore controls the change in worst expected loss as the Agent improves its program on the current guided problems.

\section{Experiments}
\label{sec:experiments}

In this section, we evaluate the effectiveness of GAV as an experimentation environment for PDE agents.
We first assess the accuracy of programs developed with GAV for different PDEs and scientific tasks.
We then examine whether access to GAV improves the programs developed by existing PDE agents.
Finally, we analyze how individual components of the environment affect~program~performance.

\subsection{Experimental setup}
\label{sec:protocol}

Following the setup in Section~\ref{sec:task}, we specify the PDEs, tasks, data sources, and agent settings used in our experiments, and describe how we evaluate the resulting programs.

\paragraph{PDEs and tasks.}
We study two-dimensional incompressible Navier--Stokes (NS) equations and kinetic equations.
For NS, the task is to predict velocity at requested space--time locations from the initial velocity, boundary data, and forcing on the specified geometry (Appendix~\ref{app:ns-equations}).
The kinetic studies comprise collisionless VP in one spatial and one velocity dimension, and VPFP in two spatial and two velocity dimensions under weak and strong collisions (Appendix~\ref{app:vpfp-equations}).
The program predicts density, electric field, particle flux, central pressure, and central heat flux from the initial distribution, source, and physical parameters.
The Agent chooses the numerical method and internal representation.

\paragraph{Data sources.}
The NS studies use numerical simulations in a square domain with two fixed obstacles and particle image velocimetry (PIV) measurements of a cylinder wake~\citep{tu2014spectral}.
For collisionless VP, local electron velocity profiles from ARTEMIS observations constrain a learned continuous reference model~\citep{mcfadden2008themis,angelopoulos2023themisbesa}.
The observed profiles constrain the velocity distribution, while the continuous model supplies its spatial and temporal structure.
The VPFP studies use separate numerical datasets for weak and strong collisions.
We develop a separate program in each study.
Appendices~\ref{app:pdes} and~\ref{app:environment} describe the data and Generators.

\paragraph{Agent setting.}
For the experiments in Sections~\ref{sec:main-comparison} and~\ref{sec:environment-analysis}, two agents develop programs in parallel over multiple \emph{rounds} (Figure~\ref{fig:agent-loop}).
Both agents start each study from the same initial program.
At the start of each round, both agents receive the current program and a cumulative log of their experiments and selection results.
One explores alternative approaches, while the other improves the current approach.
Each agent conducts experiments and submits a candidate program within its allotted time.
At the end of the round, the candidates and the current program are evaluated on the same problems; the selected program becomes the starting point for the next round.
The coding model, number of rounds, and time budget are parameters of this shared development setting.
We use GPT 5.6 Sol (Sol) and GPT-6 Astra (Astra), with both agents in a run using the same model.
Table~\ref{tab:development-settings} gives the configurations used in each study.

\begin{figure}[t]
\centering
\captionsetup{skip=3pt}
\includegraphics[width=\linewidth,trim=17bp 40bp 6bp 47bp,clip]{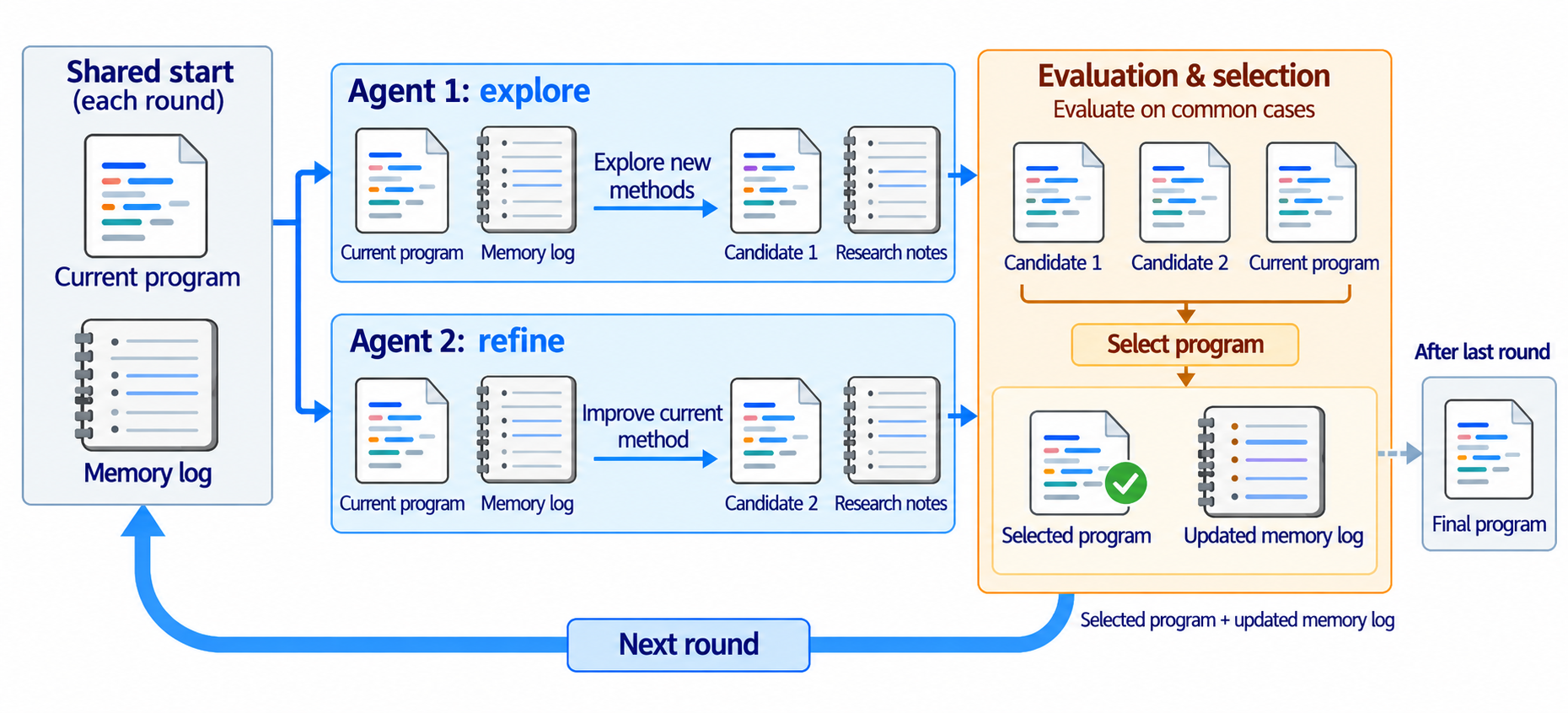}
\caption{\textbf{Multi-turn setting.} The selected program and updated memory log pass to the next round; research notes from both agents are retained regardless of the selection.}
\label{fig:agent-loop}

\end{figure}

\paragraph{Evaluation.}
For the development curves, we evaluate the programs retained after each round on a fixed test set for each task.
Within a task, the test problems and evaluation points are shared across rounds and coding models.
The final program is the one retained after the last round prescribed by the development configuration.
The Verifier compares each program's required outputs with the corresponding reference quantities, using the following task-specific errors.

For NS, we average the relative $\ell_2$ errors of the two velocity components for each problem:
\begin{equation}
\mathrm{Error}_i
=\frac{1}{2}\sum_{k=1}^{2}
\frac{\|u^{\mathrm{pred}}_{i,k}-\widetilde u_{i,k}\|_2}
     {\|\widetilde u_{i,k}\|_2}.
\label{eq:ns-velocity-error}
\end{equation}
Here $u^{\mathrm{pred}}_{i,k}$ and $\widetilde u_{i,k}$ denote the predicted and reference values of velocity component $k$ at the evaluation points for problem $i$.
Appendix~\ref{app:selection-evaluation} specifies the test sets and aggregation.

For kinetic equations, we use a common macroscopic transport score.
For each field $A\in\{\rho,\mathbf E,\mathbf j,\mathbf P,\mathbf q\}$, define
\begin{equation}
e_A=\left(
\frac{\langle\|A^{\mathrm{pred}}-\widetilde A\|^2\rangle}
     {\langle\|\widetilde A\|^2\rangle+a_A^2}
\right)^{1/2},
\qquad
\mathrm{Error}_{\mathrm{kin}}
=\frac{e_\rho+e_{\mathbf E}}{8}
 +\frac{e_{\mathbf j}+e_{\mathbf P}+e_{\mathbf q}}{4}.
\label{eq:kinetic-error}
\end{equation}
Here $\langle\cdot\rangle$ averages over the evaluation points of one problem, and $a_A>0$ is a fixed physical normalization scale calibrated before development.
We use the Euclidean norm for vectors and the Frobenius norm for the pressure tensor.
The reference fields are computed from the manufactured distribution.
We report the arithmetic mean of the per-problem scores.
Appendix~\ref{app:selection-evaluation} gives the calibration, selection, and test protocols.

\subsection{Program development across PDE tasks}
\label{sec:main-comparison}

We study program development with \emph{GAV-Full}, with guided generation and perturbation enabled.
We use the shared development setting in Section~\ref{sec:protocol}, configured for each task as in Table~\ref{tab:development-settings}.
The development curves compare retained programs on generated manufactured problems; separate tests examine accuracy beyond the learned Generators.

\paragraph{NS solvers.}
GAV-Full enables agents to develop accurate solvers for both numerical and PIV-derived NS problems.
On numerical NS, the final program achieves a mean test error of $0.1046\%$, more than an order of magnitude lower than COMSOL's $1.6352\%$ (Table~\ref{tab:ns-solver-comparison}).
To check that this accuracy does not come from overfitting to the learned Generator, we also test the solver on problems constructed directly from numerical solutions not used during development, without sampling the Generator.
Its mean error there is $0.1256\%$, close to the $0.1046\%$ on generated problems (Appendix~\ref{app:interpolation-test}).
Figure~\ref{fig:development-rounds} shows the decline in test error over development for both data sources; Appendix~\ref{app:program-evolution} details the program revisions.

\begin{table}[htbp]
\centering
\caption{\textbf{Numerical NS solver comparison.} Mean relative velocity error (\%) on the same test set; lower is better. Appendix~\ref{app:numerical-baselines} describes the baselines.}
\label{tab:ns-solver-comparison}
\begin{tabular}{@{}lr@{}}
\toprule
Method & Mean error (\%) \\
\midrule
COMSOL~\citep{comsol2025multiphysics} & 1.6352 \\
FEniCSx~\citep{baratta2023dolfinx} & 0.3510 \\
Nektar++~\citep{moxey2020nektar} & 1.9042 \\
GAV-Full (ours) & \textbf{0.1046} \\
\bottomrule
\end{tabular}
\end{table}

\begin{figure}[htbp]
\centering
\includegraphics[width=0.86\linewidth]{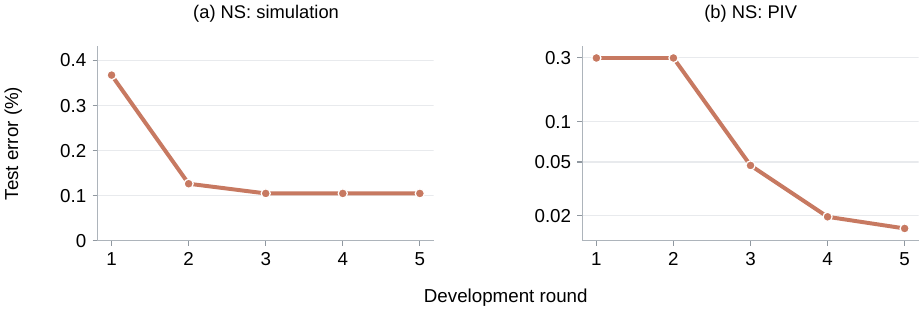}
\caption{\textbf{NS program development with GAV-Full.} Mean relative velocity error (\%) on the numerical and PIV-derived NS test sets over five development rounds. Lower is better; the PIV panel uses a log scale.}
\label{fig:development-rounds}
\end{figure}

\paragraph{Vlasov--Poisson reduced-order models.}
Figure~\ref{fig:kinetic-development} and Table~\ref{tab:kinetic-results} show lower final-round mean error than first-round mean error in all six runs, with reductions from $25.4\%$ to $90.1\%$.
Strong-collision Astra's mean error increases in round 5 and decreases again in round 6; weak-collision Astra reaches its lowest test mean in round 4.
Collisionless VP with Astra has nearly constant mean error after round 3.
Appendix~\ref{app:kinetic-program-evolution} illustrates how agents use feedback to revise their numerical methods and input queries in the Astra runs.

Independent tests constructed without the learned Generators use interpolated VPFP references and VP problems from previously unused ARTEMIS profiles (Appendix~\ref{app:kinetic-independent}).
All six final programs improve over their initial programs, and the R1-to-R6 mean-error reduction transfers in five runs; ARTEMIS Astra's independent mean increases by a factor of $5.69$.

\begin{figure}[htbp]
\centering
\includegraphics[width=\linewidth]{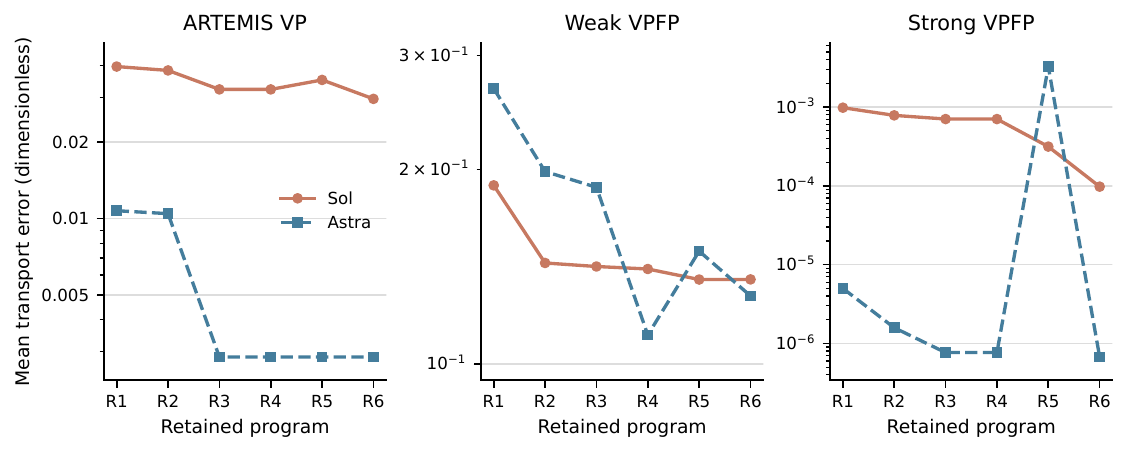}
\caption{\textbf{Kinetic program development with GAV-Full.} Mean transport score from Equation~\eqref{eq:kinetic-error} for programs retained after rounds 1--6. Each task uses the same test set and evaluation queries across both models and all rounds. Scores are dimensionless; all three panels use logarithmic axes and lower is better. R6 is the final program. Strong-collision Astra retains its round-3 program in round 4.}
\label{fig:kinetic-development}
\end{figure}

\begin{table}[htbp]
\centering
\caption{\textbf{First-round and final kinetic programs.} Mean dimensionless transport score on each task's test set, shared by Sol and Astra. Reduction is $100(1-\mathrm{error}_{\mathrm{R6}}/\mathrm{error}_{\mathrm{R1}})$, relative to the first retained program R1. The final program is R6 in all six runs.}
\label{tab:kinetic-results}
\begin{tabular}{@{}llrrr@{}}
\toprule
Task & Model & R1 error & R6 error & Reduction (\%) \\
\midrule
ARTEMIS VP & Sol & 0.03961 & 0.02956 & 25.4 \\
ARTEMIS VP & Astra & 0.01074 & 0.002854 & 73.4 \\
\addlinespace
Weak-collision VPFP & Sol & 0.189 & 0.1352 & 28.5 \\
Weak-collision VPFP & Astra & 0.2668 & 0.1274 & 52.2 \\
\addlinespace
Strong-collision VPFP & Sol & $9.831\times10^{-4}$ & $9.781\times10^{-5}$ & 90.1 \\
Strong-collision VPFP & Astra & $4.999\times10^{-6}$ & $6.719\times10^{-7}$ & 86.6 \\
\bottomrule
\end{tabular}
\end{table}

\paragraph{Solution fields.}
We visualize predicted fields and their errors on fixed examples in Figure~\ref{fig:piv-solution-error} and Appendix~\ref{app:solution-visualizations}.
Figure~\ref{fig:piv-solution-error} compares the first-round (R1) and final (R5) programs on one predetermined PIV-derived manufactured problem.
The final program substantially reduces the spatial velocity errors at the five displayed times, with the full-window vector relative $\ell_2$ error decreasing from $0.0154\%$ to $0.00172\%$.
The VPFP examples show macroscopic fields and errors on fixed generated problems.

\begin{figure}[htbp]
\centering
\captionsetup{skip=3pt}
\includegraphics[width=0.90\linewidth,trim=0bp 2bp 0bp 7bp,clip]{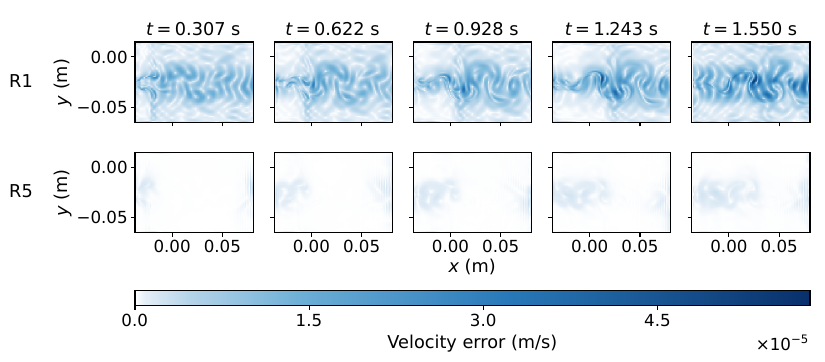}
\caption{\textbf{Velocity error on a PIV-derived manufactured problem.} Rounds 1 (top) and 5 (bottom) use the same generated reference at five times. Color shows $\|\mathbf u^{\mathrm{pred}}-\widetilde{\mathbf u}\|_2$ (m/s) on one full-window linear scale: white at zero, darker blue for larger errors.}
\label{fig:piv-solution-error}

\end{figure}
\FloatBarrier

\subsection{GAV with existing PDE agents}
\label{sec:pde-agent-workflows}

GAV is an experimentation environment designed to support PDE agents with different models and development strategies.
We next evaluate whether GAV improves the performance of \mbox{existing} PDE agents.
We adapt CodePDE~\citep{li2025codepde}, AutoNumerics~\citep{du2026autonumerics}, and PDE-SHARP~\citep{fazliani2025pdesharp} to the NS task with numerical simulation data and follow each method's original development strategy.
Within each method, we compare its original setup (Original) with two configurations of our environment.
Original uses a fixed collection of problems with inputs sampled on prescribed grids.
GAV additionally allows agents to request new problems and query inputs at chosen coordinates.
GAV-Full also enables guided generation and perturbation.
We evaluate all final programs on the same held-out test set.
Appendix~\ref{app:pde-agent-adaptations} describes how each method uses the environment.

GAV and GAV-Full achieve lower test error than Original for all three PDE agents (Figure~\ref{fig:pde-agent-gains}).
GAV reduces mean test error by more than an order of magnitude.
Both variants also use less total agent active time than Original (Appendix~\ref{app:pde-agent-adaptations}, Table~\ref{tab:pde-agent-development-time}).

\begin{figure}[htb]
\centering
\captionsetup{skip=3pt}
\includegraphics[width=0.90\linewidth,trim=0bp 2bp 0bp 1.5bp,clip]{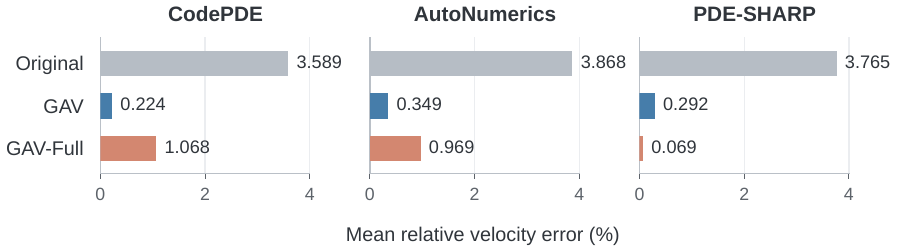}
\caption{Mean relative velocity error (\%) on the numerical NS test set; lower is better.}
\label{fig:pde-agent-gains}

\end{figure}

\subsection{Ablation studies}
\label{sec:environment-analysis}

Three key features support experimentation in GAV: learning from data to represent the target physical regime, MMS construction to provide exact references without a separate numerical solve, and continuous input queries at locations and resolutions chosen by the Agent.
We examine how each feature contributes to program development.

\paragraph{Learning from data.}
We replace the learned reference distribution with a Gaussian-process prior while keeping MMS and continuous input queries.
This tests the role of learning the reference distribution from the target scientific data.

\paragraph{MMS construction.}
MMS constructs problem inputs from an exact reference solution, avoiding a separate numerical solve.
We keep the learned problem distribution unchanged and numerically solve each generated problem to obtain an approximate reference for development feedback.
This requires additional computation time and introduces numerical error.

\paragraph{Continuous input queries.}
Without continuous input queries, agents receive initial velocity and forcing values only on a fixed $64\times64$ spatial grid during development.
They cannot directly query the input fields at locations or spatial resolutions of their choice, but may interpolate the provided grid values.
We keep the learned Generator and MMS.

We compare GAV with the three ablation variants on the NS task using the multi-turn setting in Section~\ref{sec:protocol}.
All runs start from the same initial program and use the shared development setting with the ablation configuration in Table~\ref{tab:development-settings}.
Table~\ref{tab:environment-analysis} reports errors on the same numerical NS test set used in Section~\ref{sec:pde-agent-workflows}.
Appendix~\ref{app:environment-comparisons} gives the implementation details.

\begin{table}[htbp]
\centering
\captionsetup{position=top,skip=3pt}
\caption{Mean relative velocity error (\%) after three rounds on the same NS test set. \mbox{Error / GAV} is the ratio to GAV; lower is better.}
\label{tab:environment-analysis}

\begin{tabular}{@{}lrr@{}}
\toprule
Environment & Mean error (\%) & Error / GAV \\
\midrule
GAV & \textbf{0.441} & 1.0$\times$ \\
w/o learning from data & 1.468 & 3.3$\times$ \\
w/o MMS construction & 0.988 & 2.2$\times$ \\
w/o continuous input queries & 5.211 & 11.8$\times$ \\
\bottomrule
\end{tabular}

\end{table}

Compared with GAV, the conditions without learning from data, MMS construction, or continuous input queries have $3.3$, $2.2$, and $11.8$ times the mean error, respectively.
These results support GAV's core combination of learning reference solutions from data, constructing matching problems through MMS, and providing continuous input queries.

\section{Related work}
\label{sec:related}

\paragraph{PDE agents.}
Recent work uses language model agents to automate PDE solver development through numerical reasoning, code generation, and iterative refinement~\citep{li2025codepde,fazliani2025pdesharp,dong2026autopde,du2026autonumerics}.
Related systems automate simulation workflows using established scientific software, including OpenFOAM and finite-element tools~\citep{chen2024metaopenfoam,yue2025foamagent,deotale2026allfem}.
Within solver development, \citet{li2025codepde} study inference-time search and refinement, while \citet{fazliani2025pdesharp} use mathematical analysis and solver synthesis to organize candidate generation and evaluation.
\citet{dong2026autopde} explicitly represent the solver strategy and revise it using numerical evidence.
Our work enhances the capabilities of existing PDE agents through the design of an experimentation environment.

\paragraph{Environments for scientific agents.}
Recent work has developed environments and benchmarks to support and evaluate agents on scientific tasks~\citep{huang2024mlagentbench,chen2025scienceagentbench,nathani2025mlgym,bragg2026astabench}.
Within this direction, \citet{gandhi2025boxinggym} and \citet{kabra2026llmautoscilab} study how agents choose experiments and use the resulting observations to revise scientific models.
For scientific code generation, \citet{zeng2026lcp} connect testing to program improvement by jointly refining test cases and candidate programs.
GAV advances this direction by turning scientific data into a continuing source of PDE experiments with reliable numerical feedback, without running a reference solver for every new problem.

\section{Discussion}
\label{sec:discussion}

We introduced GAV, a scientific experimentation environment for PDE agents.
It turns scientific data into new PDE problems with reliable numerical feedback, supporting agents as they test ideas, investigate failures, and refine their programs.
We establish theoretical guarantees for evaluation and guided development under stated conditions, and demonstrate substantial gains for existing PDE agents.
These results highlight environment design as a way to enhance agents' ability to solve~scientific~problems.

Future work will extend this approach beyond PDEs by coupling data-informed problem generators with task-specific verifiers, enabling agents to test ideas and improve solutions across scientific domains.

\section*{AI use statement}
As this work studies AI agents for PDE tasks, we use \texttt{gpt-5.6-sol} and \texttt{gpt-6-astra} as agent models in the experiments specified in Appendix~\ref{app:agent-settings}.
We also used AI tools to polish the manuscript and prepare scientific figures.

\bibliography{references}

\begin{thebibliography}{35}
\providecommand{\natexlab}[1]{#1}
\providecommand{\url}[1]{\texttt{#1}}
\expandafter\ifx\csname urlstyle\endcsname\relax
  \providecommand{\doi}[1]{doi: #1}\else
  \providecommand{\doi}{doi: \begingroup \urlstyle{rm}\Url}\fi

\bibitem[Angelopoulos et~al.(2023)Angelopoulos, Carlson, and
  McFadden]{angelopoulos2023themisbesa}
Vassilis Angelopoulos, Charles~W. Carlson, and James~P. McFadden.
\newblock {THEMIS-B}: {ESA} electron/ion energy fluxes and moments, 2023.
\newblock URL
  \url{https://spase-metadata.org/NASA/NumericalData/THEMIS/B/ESA/PT3S}.
\newblock Data set; metadata accessed 2026-10-19.

\bibitem[Ayg{\"u}n et~al.(2026)Ayg{\"u}n, Belyaeva, Comanici, Coram, Cui,
  Garrison, Johnston, Kast, McLean, Norgaard, Shamsi, Smalling, Thompson,
  Venugopalan, Williams, He, Martinson, Plomecka, Wei, Zhou, Zhu, Abraham,
  Brand, Bulanova, Cardille, Co, Ellsworth, Joseph, Kane, Krueger, Kartiwa,
  Liebling, Lueckmann, Raccuglia, Wang, Chou, Manyika, Matias, Platt, Dorfman,
  Mourad, and Brenner]{aygun2026era}
Eser Ayg{\"u}n, Anastasiya Belyaeva, Gheorghe Comanici, Marc Coram, Hao Cui,
  Jake Garrison, Renee Johnston, Anton Kast, Cory~Y. McLean, Peter Norgaard,
  Zahra Shamsi, David Smalling, James Thompson, Subhashini Venugopalan,
  Brian~P. Williams, Chujun He, Sarah Martinson, Martyna Plomecka, Lai Wei,
  Yuchen Zhou, Qian-Ze Zhu, Matthew Abraham, Erica Brand, Anna Bulanova,
  Jeffrey~A. Cardille, Chris Co, Scott Ellsworth, Grace Joseph, Malcolm Kane,
  Ryan Krueger, Johan Kartiwa, Dan Liebling, Jan-Matthis Lueckmann, Paul
  Raccuglia, Xuefei~Julie Wang, Katherine Chou, James Manyika, Yossi Matias,
  John~C. Platt, Lizzie Dorfman, Shibl Mourad, and Michael~P. Brenner.
\newblock An {AI} system to help scientists write expert-level empirical
  software.
\newblock \emph{Nature}, 654:\penalty0 909--916, 2026.
\newblock \doi{10.1038/s41586-026-10658-6}.
\newblock URL \url{https://www.nature.com/articles/s41586-026-10658-6}.

\bibitem[Baldasso et~al.(2022)Baldasso, Pereira, and Reis]{baldasso2022large}
Rangel Baldasso, Alan Pereira, and Guilherme Reis.
\newblock Large deviations for interacting diffusions with path-dependent
  {McKean--Vlasov} limit.
\newblock \emph{The Annals of Applied Probability}, 32\penalty0 (1):\penalty0
  665--695, 2022.
\newblock \doi{10.1214/21-AAP1692}.
\newblock URL \url{https://arxiv.org/abs/1912.00440v2}.

\bibitem[Baratta et~al.(2023)Baratta, Dean, Dokken, Habera, Hale, Richardson,
  Rognes, Scroggs, Sime, and Wells]{baratta2023dolfinx}
Igor~A. Baratta, Joseph~P. Dean, J{\o}rgen~S. Dokken, Michal Habera, Jack~S.
  Hale, Chris~N. Richardson, Marie~E. Rognes, Matthew~W. Scroggs, Nathan Sime,
  and Garth~N. Wells.
\newblock {DOLFINx}: The next generation {FEniCS} problem solving environment,
  2023.
\newblock URL \url{https://doi.org/10.5281/zenodo.10447666}.

\bibitem[Bragg et~al.(2026)Bragg, D'Arcy, Balepur, Bareket, Dalvi, Feldman,
  Haddad, Hwang, Jansen, Kishore, Majumder, Naik, Rahamimov, Richardson, Singh,
  Surana, Tiktinsky, Vasu, Wiener, Anastasiades, Candra, Dunkelberger, Emery,
  Evans, Hamada, Huff, Kinney, Latzke, Lochner, Lozano-Aguilera, Nguyen, Rao,
  Tanaka, Vlahos, Clark, Downey, Goldberg, Sabharwal, and
  Weld]{bragg2026astabench}
Jonathan Bragg, Mike D'Arcy, Nishant Balepur, Dan Bareket, Bhavana Dalvi,
  Sergey Feldman, Dany Haddad, Jena~D. Hwang, Peter Jansen, Varsha Kishore,
  Bodhisattwa~Prasad Majumder, Aakanksha Naik, Sigal Rahamimov, Kyle
  Richardson, Amanpreet Singh, Harshit Surana, Aryeh Tiktinsky, Rosni Vasu, Guy
  Wiener, Chloe Anastasiades, Stefan Candra, Jason Dunkelberger, Dan Emery, Rob
  Evans, Malachi Hamada, Regan Huff, Rodney Kinney, Matt Latzke, Jaron Lochner,
  Ruben Lozano-Aguilera, Cecile Nguyen, Smita Rao, Amber Tanaka, Brooke Vlahos,
  Peter Clark, Doug Downey, Yoav Goldberg, Ashish Sabharwal, and Daniel~S.
  Weld.
\newblock {AstaBench}: Rigorous benchmarking of {AI} agents with a scientific
  research suite.
\newblock In \emph{International Conference on Learning Representations}, 2026.
\newblock URL \url{https://arxiv.org/abs/2510.21652v2}.

\bibitem[Chen et~al.(2024)Chen, Zhu, Zhou, and Ren]{chen2024metaopenfoam}
Yuxuan Chen, Xu~Zhu, Hua Zhou, and Zhuyin Ren.
\newblock {MetaOpenFOAM}: An {LLM}-based multi-agent framework for {CFD}.
\newblock arXiv preprint arXiv:2407.21320, 2024.
\newblock URL \url{https://arxiv.org/abs/2407.21320v1}.

\bibitem[Chen et~al.(2025)Chen, Chen, Ning, Zhang, Wang, Yu, Li, Liao, Wei, Lu,
  Dey, Xue, Baker, Burns, Adu-Ampratwum, Huang, Ning, Gao, Su, and
  Sun]{chen2025scienceagentbench}
Ziru Chen, Shijie Chen, Yuting Ning, Qianheng Zhang, Boshi Wang, Botao Yu,
  Yifei Li, Zeyi Liao, Chen Wei, Zitong Lu, Vishal Dey, Mingyi Xue, Frazier~N.
  Baker, Benjamin Burns, Daniel Adu-Ampratwum, Xuhui Huang, Xia Ning, Song Gao,
  Yu~Su, and Huan Sun.
\newblock {ScienceAgentBench}: Toward rigorous assessment of language agents
  for data-driven scientific discovery.
\newblock In \emph{International Conference on Learning Representations}, 2025.
\newblock URL
  \url{https://proceedings.iclr.cc/paper_files/paper/2025/hash/f12b4df26344f3be803c06b555252efe-Abstract-Conference.html}.

\bibitem[{COMSOL AB}(2025)]{comsol2025multiphysics}
{COMSOL AB}.
\newblock {COMSOL Multiphysics} v. 6.4.
\newblock Stockholm, Sweden, 2025.
\newblock URL \url{https://www.comsol.com}.

\bibitem[Deotale et~al.(2026)Deotale, Srinivasan, Golestanian, Tian, Zhang,
  Vlachos, and Gomez]{deotale2026allfem}
Rushikesh Deotale, Adithya Srinivasan, Mahmoud Golestanian, Yuan Tian, Tianyi
  Zhang, Pavlos Vlachos, and Hector Gomez.
\newblock {ALL-FEM}: Agentic large language models fine-tuned for finite
  element methods.
\newblock \emph{Computer Methods in Applied Mechanics and Engineering},
  457:\penalty0 118985, 2026.
\newblock \doi{10.1016/j.cma.2026.118985}.
\newblock URL \url{https://doi.org/10.1016/j.cma.2026.118985}.

\bibitem[Dong et~al.(2026)Dong, Zhang, Wang, Hao, Zhuang, Liu, Wang, Liu, and
  Jin]{dong2026autopde}
Huanshuo Dong, Keyao Zhang, Hong Wang, Zhezheng Hao, Zhiwei Zhuang, Ziyan Liu,
  Jiacong Wang, Gengyuan Liu, and Xin Jin.
\newblock {AutoPDE}: Reliable agentic {PDE} solving via explicitly represented
  solver strategies.
\newblock arXiv preprint arXiv:2606.10752, 2026.
\newblock URL \url{https://arxiv.org/abs/2606.10752v1}.

\bibitem[Du et~al.(2026)Du, Sun, and Yang]{du2026autonumerics}
Jianda Du, Youran Sun, and Haizhao Yang.
\newblock {AutoNumerics}: An autonomous, {PDE}-agnostic multi-agent pipeline
  for scientific computing.
\newblock arXiv preprint arXiv:2602.17607, 2026.
\newblock URL \url{https://arxiv.org/abs/2602.17607v1}.

\bibitem[Fazliani and Udell(2025)]{fazliani2025pdesharp}
Shaghayegh Fazliani and Madeleine Udell.
\newblock {PDE-SHARP}: {PDE} solver hybrids through analysis and refinement
  passes.
\newblock arXiv preprint arXiv:2511.00183, 2025.
\newblock URL \url{https://arxiv.org/abs/2511.00183v2}.

\bibitem[Feng et~al.(2025)Feng, Yu, Deng, Hu, and Wu]{feng2025guidance}
Ruiqi Feng, Chenglei Yu, Wenhao Deng, Peiyan Hu, and Tailin Wu.
\newblock On the guidance of flow matching.
\newblock In \emph{Proceedings of the 42nd International Conference on Machine
  Learning}, volume 267 of \emph{Proceedings of Machine Learning Research},
  pages 16993--17029. PMLR, 2025.
\newblock URL \url{https://proceedings.mlr.press/v267/feng25s.html}.

\bibitem[Gandhi et~al.(2025)Gandhi, Li, Goodyear, Bhatia, Li, Bhaskar, Zaman,
  and Goodman]{gandhi2025boxinggym}
Kanishk Gandhi, Michael~Y. Li, Lyle Goodyear, Agam Bhatia, Louise Li, Aditi
  Bhaskar, Mohammed Zaman, and Noah~D. Goodman.
\newblock {BoxingGym}: Benchmarking progress in automated experimental design
  and model discovery.
\newblock In \emph{NeurIPS Workshop on Scaling Environments for Agents}, 2025.
\newblock URL \url{https://arxiv.org/abs/2501.01540v2}.

\bibitem[Huang et~al.(2024)Huang, Vora, Liang, and
  Leskovec]{huang2024mlagentbench}
Qian Huang, Jian Vora, Percy Liang, and Jure Leskovec.
\newblock {MLAgentBench}: Evaluating language agents on machine learning
  experimentation.
\newblock In \emph{Proceedings of the 41st International Conference on Machine
  Learning}, volume 235, pages 20271--20309. PMLR, 2024.
\newblock URL \url{https://proceedings.mlr.press/v235/huang24y.html}.

\bibitem[Janner et~al.(2022)Janner, Du, Tenenbaum, and
  Levine]{janner2022planning}
Michael Janner, Yilun Du, Joshua Tenenbaum, and Sergey Levine.
\newblock Planning with diffusion for flexible behavior synthesis.
\newblock In \emph{Proceedings of the 39th International Conference on Machine
  Learning}, volume 162 of \emph{Proceedings of Machine Learning Research},
  pages 9902--9915. PMLR, 2022.
\newblock URL \url{https://proceedings.mlr.press/v162/janner22a.html}.

\bibitem[Kabra et~al.(2026)Kabra, Abhyankar, Desai, Iyer, and
  Reddy]{kabra2026llmautoscilab}
Sanchit Kabra, Nikhil Abhyankar, Saaketh Desai, Prasad~P. Iyer, and Chandan~K.
  Reddy.
\newblock {LLM-AutoSciLab}: Closed-loop scientific discovery via active
  experimentation with {LLMs}.
\newblock arXiv preprint arXiv:2605.24043, 2026.
\newblock URL \url{https://arxiv.org/abs/2605.24043v1}.

\bibitem[Karras et~al.(2022)Karras, Aittala, Aila, and
  Laine]{karras2022elucidating}
Tero Karras, Miika Aittala, Timo Aila, and Samuli Laine.
\newblock Elucidating the design space of diffusion-based generative models.
\newblock \emph{Advances in neural information processing systems},
  35:\penalty0 26565--26577, 2022.

\bibitem[Li et~al.(2025)Li, Marwah, Shen, Sun, Risteski, Yang, and
  Talwalkar]{li2025codepde}
Shanda Li, Tanya Marwah, Junhong Shen, Weiwei Sun, Andrej Risteski, Yiming
  Yang, and Ameet Talwalkar.
\newblock Codepde: An inference framework for llm-driven pde solver generation.
\newblock \emph{arXiv preprint arXiv:2505.08783}, 2025.

\bibitem[Li et~al.(2021)Li, Kovachki, Azizzadenesheli, Liu, Bhattacharya,
  Stuart, and Anandkumar]{li2021fno}
Zongyi Li, Nikola Kovachki, Kamyar Azizzadenesheli, Burigede Liu, Kaushik
  Bhattacharya, Andrew Stuart, and Anima Anandkumar.
\newblock Fourier neural operator for parametric partial differential
  equations.
\newblock In \emph{International Conference on Learning Representations}, 2021.
\newblock URL \url{https://openreview.net/forum?id=c8P9NQVtmnO}.

\bibitem[Liu et~al.(2023)Liu, Gong, and Liu]{liu2023rectified}
Xingchao Liu, Chengyue Gong, and Qiang Liu.
\newblock Flow straight and fast: Learning to generate and transfer data with
  rectified flow.
\newblock In \emph{International Conference on Learning Representations}, 2023.
\newblock URL \url{https://openreview.net/forum?id=XVjTT1nw5z}.

\bibitem[McFadden et~al.(2008)McFadden, Carlson, Larson, Ludlam, Abiad,
  Elliott, Turin, Marckwordt, and Angelopoulos]{mcfadden2008themis}
J.~P. McFadden, C.~W. Carlson, D.~Larson, M.~Ludlam, R.~Abiad, B.~Elliott,
  P.~Turin, M.~Marckwordt, and V.~Angelopoulos.
\newblock The {THEMIS ESA} plasma instrument and in-flight calibration.
\newblock \emph{Space Science Reviews}, 141\penalty0 (1--4):\penalty0 277--302,
  2008.
\newblock \doi{10.1007/s11214-008-9440-2}.
\newblock URL \url{https://doi.org/10.1007/s11214-008-9440-2}.

\bibitem[Moxey et~al.(2020)Moxey, Cantwell, Bao, Cassinelli, Castiglioni, Chun,
  Juda, Kazemi, Lackhove, Marcon, Mengaldo, Serson, Turner, Xu, Peir\'o, Kirby,
  and Sherwin]{moxey2020nektar}
D.~Moxey, C.~D. Cantwell, Y.~Bao, A.~Cassinelli, G.~Castiglioni, S.~Chun,
  E.~Juda, E.~Kazemi, K.~Lackhove, J.~Marcon, G.~Mengaldo, D.~Serson,
  M.~Turner, H.~Xu, J.~Peir\'o, R.~M. Kirby, and S.~J. Sherwin.
\newblock {Nektar++}: Enhancing the capability and application of high-fidelity
  spectral/$hp$ element methods.
\newblock \emph{Computer Physics Communications}, 249:\penalty0 107110, 2020.
\newblock \doi{10.1016/j.cpc.2019.107110}.

\bibitem[Nathani et~al.(2025)Nathani, Madaan, Roberts, Bashlykov, Menon, Moens,
  Budhiraja, Magka, Vorotilov, Chaurasia, Hupkes, Cabral, Shavrina, Foerster,
  Bachrach, Wang, and Raileanu]{nathani2025mlgym}
Deepak Nathani, Lovish Madaan, Nicholas Roberts, Nikolay Bashlykov, Ajay Menon,
  Vincent Moens, Amar Budhiraja, Despoina Magka, Vladislav Vorotilov, Gaurav
  Chaurasia, Dieuwke Hupkes, Ricardo~Silveira Cabral, Tatiana Shavrina, Jakob
  Foerster, Yoram Bachrach, William~Yang Wang, and Roberta Raileanu.
\newblock {MLGym}: A new framework and benchmark for advancing {AI} research
  agents.
\newblock arXiv preprint arXiv:2502.14499, 2025.
\newblock URL \url{https://arxiv.org/abs/2502.14499v1}.

\bibitem[Novikov et~al.(2025)Novikov, V{\~u}, Eisenberger, Dupont, Huang,
  Wagner, Shirobokov, Kozlovskii, Ruiz, Mehrabian, Kumar, See, Chaudhuri,
  Holland, Davies, Nowozin, Kohli, and Balog]{novikov2025alphaevolve}
Alexander Novikov, Ng{\^a}n V{\~u}, Marvin Eisenberger, Emilien Dupont, Po-Sen
  Huang, Adam~Zsolt Wagner, Sergey Shirobokov, Borislav Kozlovskii, Francisco
  J.~R. Ruiz, Abbas Mehrabian, M.~Pawan Kumar, Abigail See, Swarat Chaudhuri,
  George Holland, Alex Davies, Sebastian Nowozin, Pushmeet Kohli, and Matej
  Balog.
\newblock {AlphaEvolve}: A coding agent for scientific and algorithmic
  discovery.
\newblock arXiv preprint arXiv:2506.13131, 2025.
\newblock URL \url{https://arxiv.org/abs/2506.13131v1}.

\bibitem[Raissi et~al.(2019)Raissi, Perdikaris, and
  Karniadakis]{raissi2019physics}
Maziar Raissi, Paris Perdikaris, and George~E Karniadakis.
\newblock Physics-informed neural networks: A deep learning framework for
  solving forward and inverse problems involving nonlinear partial differential
  equations.
\newblock \emph{Journal of Computational Physics}, 378:\penalty0 686--707,
  2019.

\bibitem[Romera-Paredes et~al.(2024)Romera-Paredes, Barekatain, Novikov, Balog,
  Kumar, Dupont, Ruiz, Ellenberg, Wang, Fawzi, Kohli, and
  Fawzi]{romeraparedes2024funsearch}
Bernardino Romera-Paredes, Mohammadamin Barekatain, Alexander Novikov, Matej
  Balog, M.~Pawan Kumar, Emilien Dupont, Francisco J.~R. Ruiz, Jordan~S.
  Ellenberg, Pengming Wang, Omar Fawzi, Pushmeet Kohli, and Alhussein Fawzi.
\newblock Mathematical discoveries from program search with large language
  models.
\newblock \emph{Nature}, 625:\penalty0 468--475, 2024.
\newblock \doi{10.1038/s41586-023-06924-6}.
\newblock URL \url{https://www.nature.com/articles/s41586-023-06924-6}.

\bibitem[Salari and Knupp(2000)]{salari2000manufactured}
Kambiz Salari and Patrick Knupp.
\newblock Code verification by the method of manufactured solutions.
\newblock Technical Report SAND2000-1444, Sandia National Laboratories, 2000.
\newblock URL \url{https://digital.library.unt.edu/ark:/67531/metadc702130/}.

\bibitem[Trefethen(2000)]{trefethen2000spectral}
Lloyd~N. Trefethen.
\newblock \emph{Spectral Methods in {MATLAB}}.
\newblock Society for Industrial and Applied Mathematics, 2000.
\newblock \doi{10.1137/1.9780898719598}.
\newblock URL \url{https://doi.org/10.1137/1.9780898719598}.

\bibitem[Tu et~al.(2014)Tu, Rowley, Kutz, and Shang]{tu2014spectral}
Jonathan~H. Tu, Clarence~W. Rowley, J.~Nathan Kutz, and Jessica~K. Shang.
\newblock Spectral analysis of fluid flows using sub-nyquist-rate {PIV} data.
\newblock \emph{Experiments in Fluids}, 55\penalty0 (9):\penalty0 1805, 2014.
\newblock \doi{10.1007/s00348-014-1805-6}.
\newblock URL \url{https://doi.org/10.1007/s00348-014-1805-6}.

\bibitem[Wildberger et~al.(2023)Wildberger, Dax, Buchholz, Green, Macke, and
  Sch{\"o}lkopf]{wildberger2023flowmatching}
Jonas Wildberger, Maximilian Dax, Simon Buchholz, Stephen Green, Jakob~H.
  Macke, and Bernhard Sch{\"o}lkopf.
\newblock Flow matching for scalable simulation-based inference.
\newblock In \emph{Advances in Neural Information Processing Systems},
  volume~36, pages 16837--16864, 2023.
\newblock \doi{10.52202/075280-0737}.
\newblock URL
  \url{https://proceedings.neurips.cc/paper_files/paper/2023/file/3663ae53ec078860bb0b9c6606e092a0-Paper-Conference.pdf}.

\bibitem[Yamada et~al.(2025)Yamada, Lange, Lu, Hu, Lu, Foerster, Clune, and
  Ha]{yamada2025aiscientistv2}
Yutaro Yamada, Robert~Tjarko Lange, Cong Lu, Shengran Hu, Chris Lu, Jakob
  Foerster, Jeff Clune, and David Ha.
\newblock The {AI Scientist-v2}: Workshop-level automated scientific discovery
  via agentic tree search.
\newblock arXiv preprint arXiv:2504.08066, 2025.
\newblock URL \url{https://arxiv.org/abs/2504.08066v1}.

\bibitem[Ye et~al.(2026)Ye, Lin, Tang, Luo, Thapa, Yang, Su, Yang, Liu, Li, Li,
  Sun, Gao, Ding, He, Zhang, Sun, Wang, Zhong, Shen, Li, Lu, Cui, He, Ma, Li,
  Baoyin, Choi, Ermon, Chu, Li, Xu, and Zou]{ye2026simpletes}
Haotian Ye, Haowei Lin, Jingyi Tang, Yizhen Luo, Rahul Thapa, Caiyin Yang,
  Chang Su, Rui Yang, Ruihua Liu, Rundao Li, Zeyu Li, Pengwei Sun, Chong Gao,
  Dachao Ding, Guangrong He, Miaolei Zhang, Lina Sun, Wenyang Wang, Yuchen
  Zhong, Zhuohao Shen, Puheng Li, Pan Lu, Bianxiao Cui, Di~He, Jianzhu Ma,
  Junfeng Li, Hexi Baoyin, Yejin Choi, Stefano Ermon, Xiaowen Chu, Tongyang Li,
  Yuzhi Xu, and James Zou.
\newblock Structured scaling of {AI} discovery across diverse scientific
  domains.
\newblock arXiv preprint arXiv:2604.19341, 2026.
\newblock URL \url{https://arxiv.org/abs/2604.19341v2}.

\bibitem[Yue et~al.(2025)Yue, Somasekharan, Zhang, Cao, and
  Pan]{yue2025foamagent}
Ling Yue, Nithin Somasekharan, Tingwen Zhang, Yadi Cao, and Shaowu Pan.
\newblock {Foam-Agent}: An end-to-end composable multi-agent framework for
  automating {CFD} simulation in {OpenFOAM}.
\newblock arXiv preprint arXiv:2509.18178, 2025.
\newblock URL \url{https://arxiv.org/abs/2509.18178v1}.

\bibitem[Zeng et~al.(2026)Zeng, Zhang, Li, Qi, and Chen]{zeng2026lcp}
Zihang Zeng, Jiaquan Zhang, Pengze Li, Yuan Qi, and Xi~Chen.
\newblock {AI}-for-science low-code platform with bayesian adversarial
  multi-agent framework.
\newblock In \emph{International Conference on Learning Representations}, 2026.
\newblock URL \url{https://openreview.net/forum?id=Cug26Y0RlT}.

\end{thebibliography}
\bibliographystyle{plainnat}

\clearpage
\appendix

\section{PDEs and data sources}
\label{app:pdes}

\subsection{Navier--Stokes equations}
\label{app:ns-equations}

On a two-dimensional fluid domain $\Omega$, the velocity $\mathbf u(t,\mathbf x)$ and pressure $p(t,\mathbf x)$ satisfy
\begin{equation}
\partial_t \mathbf{u} + (\mathbf{u}\cdot\nabla)\mathbf{u}
= -\nabla p + \nu\Delta\mathbf{u} + \mathbf{f},
\qquad \nabla\cdot\mathbf{u}=0.
\label{eq:ns}
\end{equation}
Here $\nu>0$ is the kinematic viscosity, $\mathbf f(t,\mathbf x)$ is the forcing, and the divergence constraint enforces incompressibility.
Each problem specifies the domain, viscosity, initial velocity $\mathbf u(0,\mathbf x)=\mathbf u_0(\mathbf x)$, and boundary conditions.

\paragraph{Numerical data.}
We obtain discrete solution data by numerically solving $100{,}000$ problems in a unit square containing a fixed kite-shaped obstacle and a triangular baffle (Figure~\ref{fig:ns-data-sources}a).
The outer walls and both obstacles impose the no-slip condition $\mathbf u=0$, and the viscosity is $\nu=0.01$.
Initial velocities and forcing vary across problems, with ten families of forcing that include shear, rotating vortices, and localized pulses.
We integrate the equations with a finite-difference projection method, choosing time steps from advective and diffusive stability limits.
For each problem, we sample velocity and pressure on a $64\times64$ cell-centered grid at $32$ uniformly spaced times in $[0,0.12]$.

We then use MMS to construct matching forcing from the sampled velocity and pressure, approximating derivatives by finite differences.
This gives a dataset of $100{,}000$ problems, each paired with the discrete solution data used to construct it.

\begin{figure}[htbp]
\centering
\includegraphics[width=\linewidth]{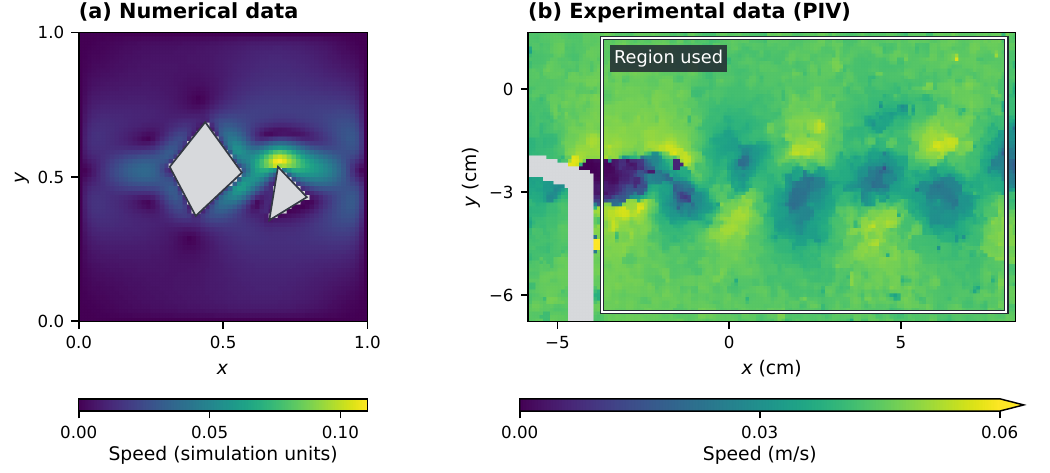}
\caption{\textbf{Navier--Stokes data sources.}
Examples of discrete solution data used to construct the environment.
Color shows fluid speed at a single time.
(a) Numerical simulation in a square domain with two solid obstacles (gray).
(b) PIV measurements of flow behind a cylinder.
The white rectangle marks the measurement region used to construct problems; gray areas indicate missing measurements.
Each panel uses its own color scale; PIV speeds above $0.06$\,m/s use the top color.}
\label{fig:ns-data-sources}
\end{figure}

\paragraph{Experimental data.}
We use planar particle image velocimetry (PIV) measurements of flow past a cylinder in a water channel at Reynolds number $413$, described in \citet{tu2014spectral}.
The dataset contains $8{,}000$ velocity fields sampled at $20$\,Hz, each with two velocity components on a $135\times80$ grid.
For problems constructed from these measurements, we use a kinematic viscosity of $\nu\approx1.0\times10^{-6}$\,m$^2$/s.

To construct problems from these measurements, we first retain every second grid point along each spatial direction.
We then select the largest rectangle whose grid points all have valid measurements at every time.
The selected region contains $57\times39$ measurement locations and no solid obstacles (Figure~\ref{fig:ns-data-sources}b).
For each problem, velocity measurements at $32$ consecutive sampling times provide the discrete solution data over a $1.55$\,s interval.

To form the paired problem inputs in Section~\ref{sec:task}, we take initial and time-dependent boundary velocities from the measured velocity in each observation window.
We construct the forcing from the measured velocity, using zero pressure as a construction convention.

\FloatBarrier

\subsection{Kinetic equations}
\label{app:vpfp-equations}

The kinetic equations describe a particle distribution $F(t,\mathbf x,\mathbf v)\geq0$ coupled to a self-consistent electric field.
We use collisionless VP and collisional VPFP problems.
In dimensionless form, the system is
\begin{align}
\partial_t F + \mathbf v\cdot\nabla_{\mathbf x}F
+ \mathbf E\cdot\nabla_{\mathbf v}F
&= \nu_{\mathrm c}\nabla_{\mathbf v}\cdot
\bigl(\mathbf v F + T_{\mathrm v}\nabla_{\mathbf v}F\bigr) + S,
\label{eq:vpfp}\\
-\Delta_{\mathbf x}\phi &= \rho-\rho_{\mathrm b},
\qquad \mathbf E=-\nabla_{\mathbf x}\phi,
\qquad \rho=\int_{\mathbb R^d}F\,\mathrm d\mathbf v.
\label{eq:vpfp-poisson}
\end{align}
Here $d$ is the spatial and velocity dimension, $\rho$ is the particle density, $\rho_{\mathrm b}=1$ is the uniform neutralizing background, and $S$ is a source term.
The electric field is written as an acceleration, with the charge convention absorbed into its definition.
For VPFP, $\nu_{\mathrm c}>0$ is the collision rate and $T_{\mathrm v}>0$ is the temperature parameter of the linear Fokker--Planck operator, which combines velocity drag and diffusion.
Setting $\nu_{\mathrm c}=0$ gives the collisionless VP equation.

Both VPFP studies use $d=2$, $\mathbf x\in[0,2\pi)^2$, $\mathbf v\in\mathbb R^2$, and $T_{\mathrm v}\in[0.8,1.2]$.
The collision rate ranges over $[0.02,0.08]$ in the weak-collision study and $[10,30]$ in the strong-collision study.
The ARTEMIS-informed VP study uses $d=1$, $x\in[0,2\pi)$, and $v\in\mathbb R$.
All three use periodic spatial boundaries, a zero-mean electric field, and $t\in[0,1]$.
Reference distributions and their moments are defined over the full velocity space; input queries use finite velocity windows.

\paragraph{Program inputs and outputs.}
The program receives the equation parameters, initial distribution $F_0$, and source $S$.
It predicts the electric field and the following velocity moments:
\begin{align}
\rho&=\int F\,\mathrm d\mathbf v,
&\mathbf j&=\int\mathbf v F\,\mathrm d\mathbf v,
&\mathbf u&=\mathbf j/\rho,\nonumber\\
\mathbf P&=\int(\mathbf v-\mathbf u)(\mathbf v-\mathbf u)^\top F\,\mathrm d\mathbf v,
&\mathbf q&=\frac12\int|\mathbf v-\mathbf u|^2(\mathbf v-\mathbf u)F\,\mathrm d\mathbf v.
&&\label{eq:vpfp-macros}
\end{align}
Here $\mathbf j$ is particle flux, $\mathbf P$ is central pressure, and $\mathbf q$ is central heat flux, with unit particle mass.
VPFP programs return ten scalar fields: $\rho$, two components each of $\mathbf E$ and $\mathbf j$, the three independent entries of $\mathbf P$, and two components of $\mathbf q$.
VP programs return $M_k=\int v^kF\,\mathrm dv$ for $k=0,1,2,3$ and $E$.
The Verifier converts these raw moments to $\rho=M_0$, $j=M_1$, $P=M_2-M_1^2/M_0$, and $q=(M_3-3uM_2+2M_0u^3)/2$, using $u=M_1/M_0$ from each answer separately.

\paragraph{Strong-collision numerical data.}
We draw the collision rate and temperature independently and uniformly from the strong-collision ranges above.
The initial density is a normalized exponential Fourier field with four random modes, wave numbers up to four, and Gaussian coefficients of standard deviation $0.1$ in log density.
The conditional velocity distribution is a spatially varying mixture of four to ten anisotropic Gaussian components, with the component count sampled uniformly.
Their base centers are uniform on $[-2,2]^2$, their base standard deviations are log-uniform on $[0.65,1.8]$, and their correlations are uniform on $[-0.6,0.6]$.
Periodic Fourier fields modulate their weights, centers, and widths.
The simulation source has twelve signed Gaussian--Hermite wave packets, with velocity polynomial degree at most three and a common amplitude scale sampled uniformly from $[5\times10^{-4},5\times10^{-3}]$.
The numerical solver uses Fourier differentiation in space, conservative finite-volume velocity fluxes with a Chang--Cooper discretization of the collision term, and fourth-order Runge--Kutta time integration.
Each trajectory is sampled at $33$ uniformly spaced times on a $16\times16$ spatial grid and a $21\times21$ velocity grid covering $[-8,8]^2$.
The internal time step is capped at $10^{-3}$ and further restricted by the stability estimate with CFL factor $0.18$.
The numerical trajectories are projected into the continuous representation described in Appendix~\ref{app:vpfp-generator}.
The projected distribution then determines matching source terms through MMS.
Generator fitting and evaluation use separate trajectory splits, specified in that appendix.

\paragraph{Weak-collision numerical data.}
We use a separate numerical corpus in the weak-collision range, with coarse numerical solves followed by continuous projection.
It records the same $33$ times, $16\times16$ spatial grid, and $21\times21$ velocity grid on $[-8,8]^2$.
The Generator models macroscopic velocity and stress together with microscopic distribution coefficients, as described in Appendix~\ref{app:weak-generator}.

\paragraph{ARTEMIS observations.}
We use electron observations from the Electrostatic Analyzer (ESA) on ARTEMIS P1, also known as THEMIS-B~\citep{mcfadden2008themis,angelopoulos2023themisbesa}.
The inputs are calibrated Level-2 full electron energy-flux distributions, together with magnetic-field, spacecraft-potential, and moment data from the corresponding \texttt{THB\_L2\_ESA} products.
The two observation windows are 00:00--01:00 UTC on 23 October 2011 and 04:00--05:00 UTC on 29 July 2016.
After quality filtering and spacecraft-potential correction, angular integration gives a signed velocity marginal along the local magnetic field, represented using $256$ velocity bins.
These local profiles constrain the Generator's distribution.
The continuous model supplies periodic space--time reference fields, and MMS supplies their matching source terms.

\section{Environment construction}
\label{app:environment}

\subsection{Reference solutions and problem generation}
\label{app:construction}

We construct a separate Generator for each data source.
Each Generator follows the same construction: fit a continuous representation to the available data, define a sampling distribution over that representation, and use MMS to obtain problem inputs from sampled reference solutions.
The representation and sampling model depend on the data and PDE; the sampled reference determines the public inputs and the Verifier's reference quantities through Equation~\eqref{eq:manufactured_case}.
We describe the data splits, continuous representation, learned distribution, and problem construction for each study below.

\subsubsection{NS Generator}
\label{app:ns-generator}

\paragraph{Data splits.}
For the numerical data in Appendix~\ref{app:ns-equations}, we use $80{,}000$ problems to fit the Generator.
We draw validation data from a separate set of $10{,}000$ problems to choose its settings.
The remaining $10{,}000$ problems are excluded from Generator fitting.

For PIV, velocity measurements at the first $3{,}600$ sampling times provide training data, and those at the next $800$ provide validation data for choosing the Generator's settings.
Each problem uses measurements at $32$ consecutive times; shifting the start by four sampling intervals yields $893$ training problems and $193$ validation problems.
The training and validation sets use disjoint time intervals, although problems within each set share measurements.
Later measurements are excluded from Generator fitting and selection.
Table~\ref{tab:ns-generator-settings} gives the Generator settings for the two NS studies.

\paragraph{Continuous representation.}
We represent the velocity through a streamfunction $\psi$, with $\widetilde{\mathbf u}=(\partial_y\psi,-\partial_x\psi)$, so that incompressibility holds by construction~\citep{raissi2019physics}.
We use a spectral representation~\citep{trefethen2000spectral}: tensor products of spatial cosine functions and temporal Chebyshev polynomials for both the streamfunction and pressure.
The spatial basis includes the constant function.
We evaluate the spatial basis in coordinates normalized to $[0,1]$, while taking velocity derivatives with respect to physical coordinates.
For problems generated from the numerical NS data, the domain extends to the outermost grid centers, with coordinate bounds $[1/128,127/128]^2$.

For the numerical data, we multiply the streamfunction by a boundary mask that enforces zero velocity on the outer walls and obstacle surfaces.
The mask rises from zero at a solid boundary to one in the fluid interior, with a transition width of $0.05$ in normalized coordinates.
Both the mask and its first spatial derivatives vanish on each solid boundary, so differentiating the masked streamfunction gives zero velocity there.
For PIV, the observation rectangle lies within the fluid and allows inflow and outflow.
We extend the cosine basis interval to $[-0.25,1.25]$ along each normalized spatial axis to allow nonzero boundary velocities.
The observation rectangle remains $[0,1]^2$, and its boundary velocities are taken from the reference solution and may vary with time.

\begin{samepage}
For problem $i$, let $\widehat{\mathbf u}_i(t_j,\mathbf x_m)$ denote the observed velocity at time $t_j$ and location $\mathbf x_m$.
We omit spatial basis functions whose velocity contribution at the observation points is zero or numerically negligible.
At each sampled time, we fit the remaining spatial coefficients by
\[
a_j=\arg\min_a\left\{
\sum_m\left\|\widetilde{\mathbf u}_a(\mathbf x_m)-\widehat{\mathbf u}_i(t_j,\mathbf x_m)\right\|_2^2
+\eta\|a\|_2^2\right\}.
\]
Here $\widetilde{\mathbf u}_a$ is the velocity obtained from the streamfunction with spatial coefficients $a$; the sum runs over observed fluid locations.
The penalty $\eta\|a\|_2^2$ discourages large coefficients.
For each spatial basis function used in this fit, we evaluate its velocity contribution with unit coefficient and sum the squared components over the observation points.
In both studies, we set $\eta$ to $10^{-6}$ times the median of these sums, so that the penalty scales with the velocity basis.
We then fit the coefficients $a_j$ across sampled times with Chebyshev polynomials by regularized least squares, with the penalty computed from the temporal basis in the same way.
This gives a streamfunction that is continuous in both space and time.
For the numerical data, pressure is fitted after removing its spatial mean at each time.
For PIV, pressure is set to zero, following the convention in Appendix~\ref{app:ns-equations}.
\end{samepage}

\paragraph{Learning the distribution.}
Let $w_i$ be the vector of $K$ fitted space--time streamfunction coefficients for training problem $i$, with $N$ training problems.
We subtract the training mean and divide by a shared root-mean-square scale:
\[
\bar w=\frac{1}{N}\sum_{i=1}^{N}w_i,
\qquad
\sigma=\left(\frac{1}{NK}\sum_{i=1}^{N}\|w_i-\bar w\|_2^2\right)^{1/2},
\qquad
\widetilde w_i=\frac{w_i-\bar w}{\sigma}.
\]
Here $\bar w$ averages each coefficient across problems, while the scalar $\sigma$ pools all $K$ coefficients and all $N$ problems.
For the numerical data, we normalize pressure separately using its own mean and scale; PIV pressure remains zero.
This expresses each solution relative to the training mean and balances the two fields while preserving amplitude differences across problems.

We concatenate the scaled field coefficients and apply principal component analysis (PCA), retaining the directions with the largest variation across training problems.
We choose the smallest tested number of components whose median validation velocity reconstruction error is within $5\%$ of the best value.
Finally, we divide each retained PCA coordinate by its training standard deviation to obtain a whitened vector $z$.
We learn the distribution of these vectors with rectified flow~\citep{liu2023rectified}, training a flow velocity $V$ by minimizing
\begin{equation}
\mathbb E\!\left[\left\|V(z_s,s)-(z-\epsilon)\right\|_2^2\right],
\qquad z_s=(1-s)\epsilon+sz,
\label{eq:appendix-flow-training}
\end{equation}
Here $z$ is sampled from the whitened training vectors, $\epsilon$ is a standard Gaussian vector of the same dimension, and $s\sim\operatorname{Unif}[0,1]$; these draws are independent.
The flow time $s$ is distinct from physical time $t$.
At generation, we integrate the learned flow from Gaussian noise at $s=0$ to $s=1$ using $256$ Heun steps.
We undo the whitening, reconstruct the coefficients in the retained PCA basis, and restore the field scales and training mean.
The resulting coefficients define a continuous reference solution.

The flow model is a residual multilayer perceptron with eight blocks of width $768$, SiLU activations, and layer normalization, conditioned on flow time.
We use AdamW with a peak learning rate of $2\times10^{-4}$, $1{,}000$ warmup steps, and cosine decay to one tenth of the peak rate.
The Generator is fitted before program development and remains fixed during development; guided generation modifies sampling as described in Appendix~\ref{app:guided-generation}.

\paragraph{Problem construction.}
We substitute the generated velocity and pressure into Equation~\eqref{eq:ns} to obtain the matching forcing, following Equation~\eqref{eq:manufactured_case}.
The reference velocity determines the initial and boundary data and supplies the Verifier's evaluation values.

\begin{table}[t]
\centering
\caption{\textbf{NS Generator settings.}}
\label{tab:ns-generator-settings}
\begin{tabular}{@{}lcc@{}}
\toprule
Setting & Numerical data & PIV data \\
\midrule
Cosine basis functions per spatial axis & \multicolumn{2}{c}{$17$} \\
Chebyshev polynomial degree in time & \multicolumn{2}{c}{$12$} \\
Retained principal components & $256$ & $32$ \\
\midrule
Batch size & $512$ & $128$ \\
Training steps & $30{,}000$ & $5{,}000$ \\
Checkpoint used (training step) & $15{,}000$ & $5{,}000$ \\
\bottomrule
\end{tabular}
\end{table}

\FloatBarrier

\subsubsection{Shared construction for kinetic problems}
\label{app:kinetic-construction}

For both VPFP regimes and collisionless VP, the Generator supplies a continuous reference distribution $\widetilde F$.
We obtain its density by integration over velocity and solve the periodic Poisson equation for the zero-mean electric field $\widetilde{\mathbf E}$.
We define the source by substituting that same distribution into Equation~\eqref{eq:vpfp}:
\begin{equation}
S=\partial_t\widetilde F+\mathbf v\cdot\nabla_{\mathbf x}\widetilde F
+\widetilde{\mathbf E}\cdot\nabla_{\mathbf v}\widetilde F
-\nu_{\mathrm c}\nabla_{\mathbf v}\cdot
\bigl(\mathbf v\widetilde F+T_{\mathrm v}\nabla_{\mathbf v}\widetilde F\bigr).
\label{eq:vpfp-manufactured-source}
\end{equation}
Collisionless VP uses $\nu_{\mathrm c}=0$.
Initial data are the reference distribution at $t=0$, and spatial periodicity is built into each representation.
The problem inputs and the Verifier's reference quantities therefore come from the same continuous distribution.
The three Generators differ in how they represent and sample this distribution.

\subsubsection{Strong-collision VPFP Generator}
\label{app:vpfp-generator}

\paragraph{Data splits.}
We use $2{,}048$ numerical trajectories for Generator fitting and $128$ to monitor validation denoising loss.
Three further sets of $128$ trajectories each are held out for calibration and reference-distribution diagnostics.
The splits contain distinct equation seeds, and coefficient normalization uses only the training trajectories.

\paragraph{Continuous representation.}
We write $\widetilde F=\widetilde\rho\,\widetilde g$, where $\widetilde g$ is a normalized conditional velocity density.
For $\widetilde\rho$, we fix the constant Fourier mode to one and rescale the nonconstant coefficients so that the magnitude of the density fluctuation stays below one.
This preserves positivity and spatial mean density one.
For $\widetilde g$, we square an expansion in $120$ tensor-product Hermite functions of total degree at most $14$ and divide the result by its integral over $\mathbb R^2$.
Orthonormality makes this integral the squared norm of the Hermite coefficient vector.
We mix the normalized density with a Maxwellian of weight $10^{-5}$ to keep it strictly positive.
We fit the Hermite amplitude to the square root of the discrete conditional velocity density by weighted least squares.
Spatial Fourier expansions then represent the fitted coefficient fields, with mode orders six for density and four for the Hermite coefficients.
Piecewise cubic Hermite interpolation through $33$ control times supplies continuous time dependence, with slopes computed from neighboring control values.

\paragraph{Learning the distribution.}
We train a conditional diffusion model with EDM preconditioning~\citep{karras2022elucidating}.
The model is conditioned on the collision rate, temperature parameter, and complete initial density and Hermite coefficient fields.
It denoises coefficient values at the $32$ future control times; the initial field remains fixed during sampling.
We normalize coefficient groups using the training set and whiten their future temporal coordinates conditional on the initial values.
The denoiser has four residual space--time convolutional blocks of width $256$.
We use AdamW with learning rate $2\times10^{-4}$, zero weight decay, and batch size four, and retain the fixed checkpoint after $40{,}000$ updates.
The loss weights the density- and Hermite-coefficient denoising terms equally.
Training noise levels satisfy $\log\sigma\sim\mathcal N(-0.6,1.8^2)$, clipped to $\sigma\in[0.002,80]$, with EDM data scale one and gradient-norm clipping at one.
At generation, we sample the same initial-distribution law used for the numerical data and project its initial field into the Fourier--Hermite representation.
We generate the future coefficients with $64$ Heun steps, using the EDM noise schedule with exponent seven and no added sampling noise, then invert the normalization and construct the interpolation slopes.
After sampling, the generated field determines the source.

\paragraph{Problem construction.}
We apply the construction in Appendix~\ref{app:kinetic-construction}, solving the periodic Poisson equation in Fourier space.
The Verifier computes strong-collision reference moments from the Hermite representation over $\mathbb R^2$.

\subsubsection{Weak-collision VPFP Generator}
\label{app:weak-generator}

\paragraph{Data splits.}
We use $512$ numerical trajectories for fitting and $128$ for validation, with three further sets of $128$ reserved for calibration and reference-distribution diagnostics.

\paragraph{Continuous representation.}
The representation uses $33$ time controls and a Fourier--Hermite base with $120$ Hermite channels.
The moving-frame decoder first standardizes the underlying velocity density and then imposes the generated macroscopic mean and covariance.
The resulting continuous distribution is nonnegative and retains the microscopic degrees of freedom.

\paragraph{Learning the distribution.}
A joint macro diffusion model represents velocity and stress coordinates, with separate outputs for persistent and time-varying global stress.
A conditional micro diffusion model represents the full kinetic coefficients in temporal-increment coordinates.
The moving-frame decoder combines the outputs of these two models.
Both denoisers have four blocks of width $256$ and use EDM preconditioning~\citep{karras2022elucidating}.
We train them with AdamW, batch size four, zero weight decay, and learning rates $2\times10^{-4}$ for the macro model and $10^{-5}$ for the micro model.
Micro-model training also includes an auxiliary source penalty evaluated on generated states from a frozen replay bank.
We fix the checkpoints at $5{,}000$ and $20{,}000$ updates, respectively, and use validation for diagnostics.
Both models use $64$ denoising steps at generation.

\paragraph{Problem construction.}
We derive the initial distribution, electric field, and source from the resulting continuous distribution using the shared kinetic construction in Appendix~\ref{app:kinetic-construction}.

\subsubsection{ARTEMIS-informed VP Generator}
\label{app:vp-generator}

\paragraph{Data splits.}
The profile data contain $899$ fitting profiles and $432$ held-out profiles, organized into $225$ and $108$ observation blocks.
The split is internal to the two observation windows in Appendix~\ref{app:vpfp-equations}.
Within these observation windows, six conditioning groups distinguish event, geometry, and quality categories.

\paragraph{Continuous representation.}
We encode the square roots of the $256$-bin probability masses using a $16$-component PCA fitted only on the fitting split, and retain relative log density as an additional coordinate.
A learned continuous lift maps these profile coordinates to a nonnegative reference distribution.
It has $123{,}414$ parameters, hidden width $128$, two residual modulation blocks, and $32$ density Fourier modes.
Squared, normalized profile amplitudes and smooth logistic bin atoms represent the velocity density; learned modulations provide continuous spatial and temporal dependence.

\paragraph{Learning the distribution.}
Within each conditioning group, a finite-sample prior samples piecewise-linear interpolations between adjacent nodes constructed from the fitting profiles and ordered by standardized log density.
We train the continuous lift with AdamW, learning rate $3\times10^{-4}$, weight decay $10^{-4}$, and batches of $31$ profiles.
Training stops after $56$ epochs and restores the checkpoint from epoch $46$, selected using a fixed held-out criterion combining variation, profile reconstruction, and source diagnostics.
The model and profile prior remain fixed during program development.

\paragraph{Problem construction.}
We apply the shared kinetic construction in Appendix~\ref{app:kinetic-construction} with $\nu_{\mathrm c}=0$.
The Verifier computes the base distribution's full-line moments analytically and uses deterministic quadrature corrections for velocity-map perturbations.
This construction produces new verifiable VP problems informed by the observed profile distribution.

\FloatBarrier

\subsection{Guided generation}
\label{app:guided-generation}

Guided generation fits predictors to program-error feedback and uses their gradients to modify sampling while leaving the base Generator fixed.
After sampling, each Generator decodes the continuous reference and applies the same MMS construction used for unguided problems.
These learned guidance rules pursue the loss-tilted target in Equation~\eqref{eq:exact-guidance-tilt}, which gives more weight to problems with larger program errors.
\Cref{thm:guided-convergence} analyzes the idealized step that samples exactly from this target.

\paragraph{NS.}
For the rectified-flow Generator in Appendix~\ref{app:ns-generator}, we approximate this step by adapting the return guidance of \citet{janner2022planning}.
Their method learns a return predictor and uses its gradient to guide sampling; here we predict program error and add its gradient to the flow velocity.

Given a program and a set of unguided, unperturbed problems, we compute its error on each problem as the mean of the two velocity components' relative $\ell_2$ errors, as in Equation~\eqref{eq:ns-velocity-error}.
For feedback problem $i$, let $z_i$ be its whitened PCA vector.
We standardize the errors by subtracting their mean and dividing by their standard deviation across the feedback problems, and denote the resulting target by $e_i$.
We train a scalar predictor $h(z_s,s)$ by minimizing
\[
\mathbb E_{i,\epsilon,s}\!\left[\bigl(h(z_s,s)-e_i\bigr)^2\right],
\qquad z_s=(1-s)\epsilon+sz_i.
\]
Here $i$ is sampled uniformly from the feedback problems, independently of the Gaussian noise $\epsilon$ and uniform flow time $s$ in Equation~\eqref{eq:appendix-flow-training}.
This gives a differentiable prediction of standardized program error at intermediate flow states.

During generation, we evaluate the following velocity at the current whitened PCA vector $z$ and flow time $s$:
\[
V_{\mathrm{guided}}(z,s)
=V(z,s)+\gamma(z,s)\nabla_z h(z,s).
\]
The nonnegative factor $\gamma(z,s)$ controls the norm of the added guidance relative to the original flow velocity $V(z,s)$; Table~\ref{tab:ns-extension-settings} gives the target ratio and its upper bound.
The gradient is taken with respect to the whitened PCA coordinates and points toward increasing predicted error.
We convert the final PCA vector back to a continuous reference solution as in Appendix~\ref{app:ns-generator}, then use MMS (Equation~\eqref{eq:manufactured_case}) to construct the matching problem inputs.

For the numerical NS study in Section~\ref{sec:main-comparison}, guided generation during development starts in round two.
Before rounds two through five, we evaluate the current program on the first $32$ successfully acquired unguided, unperturbed development problems and fit the predictor to these errors.
For PIV, we use $64$ unguided, unperturbed feedback problems: a new set for the initial program before round one, and the preceding selection set's unguided problems for later rounds.
Both agents share this fixed predictor during the round.
For our CodePDE, AutoNumerics, and PDE-SHARP adaptations (Section~\ref{sec:pde-agent-workflows}), GAV-Full uses a common predictor fitted before development and held fixed.

\paragraph{Kinetic feedback.}
Before each round, we fit guidance using the current program's transport errors on a fixed prefix of unguided, unperturbed feedback problems.
The first guidance model in every run uses $80$ problems, split into $64$ fitting and $16$ held-out problems.
Later rounds use $32$ problems, split into $26$ and six, except that the second-round guidance models in the weak- and strong-collision Sol runs retain the $80$-problem split.
The split is fixed before noisy replicas are formed, and both agents share the fitted guidance model throughout the round.

\paragraph{Strong-collision VPFP.}
The predictor has two residual space--time convolutional blocks of width $32$ and takes the noisy future coefficients, noise level, physical parameters, and complete initial field as inputs.
We train it for $2{,}000$ AdamW updates with learning rate $2\times10^{-4}$, weight decay $0.01$, batch size four, and gradient-norm limit one.
Validation is diagnostic; we use the final checkpoint and share it between both agents for the round.

For a valid feedback transport error $e_i$, define the bounded target $r_i=\log(1+e_i)/(1+\log(1+e_i))$ and standardize it using the valid fitting targets' mean and population standard deviation to obtain $\widehat r_i$.
The scalar predictor $h$ is trained by squared-error regression of $\exp(h)$ onto weights proportional to $\exp(0.5\widehat r_i)$, normalized to mean one over the fitting problems.
Invalid evaluations have regression target zero; an auxiliary validity-classification loss has weight $0.1$ and does not enter the sampling score.
For noisy future coefficients $z_\sigma$, let $c$ denote the physical parameters and fixed initial field.
Guided sampling replaces the EDM denoiser by
\[
D_{\mathrm G}(z_\sigma,\sigma,c)
=D(z_\sigma,\sigma,c)+\lambda_\sigma\sigma^2
\operatorname{clip}_1\!\left(\nabla_{z_\sigma}h(z_\sigma,\sigma,c)\right),
\]
where $\operatorname{clip}_1$ limits the gradient's Euclidean norm to one.
Guidance acts only for $\sigma\in[0.002,2]$ and leaves the initial field fixed.
The gain targets a ratio of $0.1$ between the RMS guidance drift and the base drift $(z_\sigma-D)/\sigma$, capped at $0.25$, with $\lambda_\sigma\leq10^6$.
Numerically vanishing gradients or base drift disable the update.
We retain the base model's $64$-step Heun sampler.

\paragraph{Weak-collision VPFP.}
We use separate guidance heads for the macro and micro stages of the Generator.
Each head has width $32$ and two blocks combining spatial Fourier mixing with nonperiodic temporal convolutions.
Both heads use $2{,}000$ AdamW updates with learning rate $2\times10^{-4}$, batch size four, and gradient-norm limit one.
The micro head conditions on the clean macro code.
Targets use the same standardized bounded-error exponential weighting and relative-drift guidance rule as in the strong-collision study.
The base networks and continuous decoder remain frozen.

\paragraph{Collisionless VP.}
Guidance acts on three primitive Gaussian coordinates while preserving the Generator's condition stream.
For each observed condition, we fit a degree-two normalized probabilists' Hermite expansion by closed-form ridge regression with penalty $10$, using a log-error target.
Nonconstant guidance terms are enabled only if that condition has at least two held-out examples, positive held-out Spearman correlation, and top-quartile lift above one.
Unsupported conditions retain zero guidance gradients.
Sampling uses the resulting bounded transport rule with the base model fixed.

\subsection{Perturbation}
\label{app:perturbation}

Perturbation lets the Agent explore local variations of generated problems.
We transform a generated continuous reference solution, choosing the transformation to preserve the constraints required by the PDE.
We then use MMS (Equation~\eqref{eq:manufactured_case}) to construct matching problem inputs, so the perturbed solution is the exact reference for the new problem.
Each implementation below specifies which reference quantities change and which remain fixed.

\paragraph{NS.}
The Generator in Appendix~\ref{app:ns-generator} uses cosine mode indices $0,\ldots,16$ along each spatial axis.
We retain these coefficients and add a streamfunction perturbation $\delta\psi$ using modes with nonnegative integer indices $k_x,k_y$ satisfying $17\leq\max(k_x,k_y)\leq20$.
The streamfunction representation gives the corresponding velocity change:
\[
\psi\longmapsto\psi+\delta\psi,
\qquad
\widetilde{\mathbf u}\longmapsto
\widetilde{\mathbf u}+(\partial_y\delta\psi,-\partial_x\delta\psi).
\]
Pressure is unchanged, and the added spatial coefficients are drawn independently from a Gaussian distribution.
The perturbation's time dependence is proportional to $\bigl[(t-t_0)/(t_1-t_0)\bigr]^2$ on the problem's physical time interval $[t_0,t_1]$.
We normalize the space--time coefficient vector of $\delta\psi$ to unit $\ell_2$ norm, then scale it to the amplitude in Table~\ref{tab:ns-extension-settings}.

The streamfunction construction preserves incompressibility, and the quadratic time factor preserves the initial velocity and its first time derivative.
For the numerical domain, the same geometry mask preserves the no-slip boundaries.
For PIV, we recompute the boundary velocities from the perturbed reference solution.
Appendix~\ref{app:perturbation-case} illustrates how perturbation feedback informs revisions to an NS solver.

\paragraph{Strong-collision VPFP.}
We embed an unguided or guided reference distribution into an expanded representation, increasing the density Fourier order from six to eight, the spatial order of the Hermite coefficient fields from four to six, and the velocity Hermite total degree from $14$ to $18$.
We add six new density Fourier modes and twelve kinetic coefficient directions outside the original representation.
Their time controls and first derivatives vanish at $t=0$, preserving the initial distribution.
The bounded density and normalized squared-Hermite construction retain positivity, spatial mean density one, and conditional velocity normalization.
We then recompute the electric field and source from the perturbed distribution.

Kinetic time controls sample normalized $t^2\sin(\pi k t+\varphi)$, with $k$ uniform on $\{1,\ldots,5\}$ and $\varphi$ uniform on $[-\pi,\pi]$.
For each density mode's sine--cosine coefficient pair, we use a normalized cubic path $t^2R_\varphi(1+a t,b t)$, where $R_\varphi$ is a planar rotation, $a$ is uniform on $[-0.45,0.45]$, and $b$ has a random sign and magnitude uniform on $[0.15,0.75]$.
This pair stays nonzero for $t>0$.
The density and kinetic direction scales are $0.025/\sqrt{6}$ and $0.055/\sqrt{12}$, respectively.
We try at most four directions, starting each at amplitude one and halving its amplitude at most nine times.
We accept a perturbation only when it satisfies the checks in Table~\ref{tab:vpfp-perturbation-settings}.

The checks use $256$ uniformly sampled space--time probes and $512$ phase-space probes, with velocity coordinates sampled from $\mathcal N(0,T_{\mathrm v}I)$.
Macroscopic displacement is the joint relative $\ell_2$ change in $(\rho,\mathbf j,K)$, normalized by the parent fields, where $K=\frac12\int|\mathbf v|^2F\,\mathrm d\mathbf v$ is kinetic-energy density.
Source displacement is normalized by the larger of the parent and child source norms.
Coefficient RMS quantities are square roots of the mean squared Euclidean coefficient norm over probes; the time-derivative change is additionally multiplied by the interval length.
Acceptance is determined on these finite probe sets.

\begin{table}[htbp]
\centering
\caption{\textbf{Strong-collision VPFP perturbation acceptance settings.} All listed checks must pass for an accepted child problem. Initial-state checks use the same spatial probes at $t=0$.}
\label{tab:vpfp-perturbation-settings}
\begin{tabular}{@{}lc@{}}
\toprule
Quantity & Accepted range or bound \\
\midrule
Joint relative macroscopic change & $[0.002,0.20]$ \\
Hermite-coefficient RMS change & $[0.005,0.35]$ \\
Relative source change & $[10^{-4},0.80]$ \\
Mean squared norm of added Hermite coefficients & $\geq10^{-5}$ \\
RMS change in coefficient time derivative & $\leq4$ \\
Initial relative macroscopic change & $\leq2\times10^{-6}$ \\
Initial Hermite-coefficient RMS change & $\leq2\times10^{-6}$ \\
\bottomrule
\end{tabular}
\end{table}

\paragraph{Weak-collision VPFP.}
We perturb the moving-frame velocity and Cholesky controls coherently, retaining the complete microscopic code, density, and positive floor.
A randomly chosen spatial mode of order at most two is multiplied by a normalized $C^1$ temporal envelope whose initial value and derivative vanish.
Velocity, log-diagonal, and off-diagonal amplitudes are $0.03\sqrt{T_{\mathrm v}}$, $0.02$, and $0.02\sqrt{T_{\mathrm v}}$, respectively.
These perturbations stay within the native spatial representation and preserve the initial distribution.
The continuous decoder recomputes the source from the perturbed distribution; the density and its Poisson field are unchanged.

\paragraph{Collisionless VP.}
We apply a smooth orientation-preserving velocity map $T$ within each of the $256$ sensor bins.
The map fixes bin boundaries and is the identity outside the sensor window.
The transformed reference
$\widetilde F_{\mathrm{child}}(t,x,v)=\widetilde F(t,x,T(v))T'(v)$
preserves nonnegativity, each bin's mass, the density, and the Poisson field.
Higher moments are recomputed from the parent's full-line moments plus quadrature corrections within the window.
We apply the same velocity map to the initial distribution to obtain the child initial condition at $t=0$.
We derive the child source from the same transformed reference by the chain rule.

\begin{table}[htbp]
\centering
\caption{\textbf{NS guidance and perturbation settings.} The numerical and PIV studies share these settings. Here $[t_0,t_1]$ is the physical time interval.}
\label{tab:ns-extension-settings}
\begin{tabular}{@{}lc@{}}
\toprule
Parameter & Value \\
\midrule
\multicolumn{2}{@{}l}{\emph{Error predictor $h$}} \\
Residual blocks & $6$ \\
Hidden width & $512$ \\
Embedding dimension for flow time $s$ & $128$ \\
Training steps & $5{,}000$ \\
Batch size & $256$ \\
Optimizer & AdamW \\
Learning rate & $2\times10^{-4}$ \\
Weight decay & $10^{-4}$ \\
Gradient clipping threshold ($\ell_2$) & $1$ \\
\midrule
\multicolumn{2}{@{}l}{\emph{Guided sampling}} \\
Target guidance-to-flow norm ratio & $0.1$ \\
Maximum guidance-to-flow norm ratio & $0.25$ \\
\midrule
\multicolumn{2}{@{}l}{\emph{Streamfunction perturbation}} \\
Spatial cosine mode indices & $17\leq\max(k_x,k_y)\leq20$ \\
Physical time factor & $\bigl[(t-t_0)/(t_1-t_0)\bigr]^2$ \\
Norm of added coefficients ($\ell_2$) & $2.862\times10^{-6}$ \\
\bottomrule
\end{tabular}
\end{table}

\FloatBarrier

\section{Experimental details}
\label{app:protocol}

\subsection{Agent settings and prompts}
\label{app:agent-settings}
We use the multi-turn loop in Figure~\ref{fig:agent-loop}, with the coding model, number of rounds, and time budget set for each study as in Table~\ref{tab:development-settings}.
Both agents within a run use the same coding model.
At the start of each round, they receive the same current program and cumulative memory log.
One agent explores alternative approaches, while the other refines the current approach; both are free to change methods in response to numerical feedback.
The selected program, both agents' research notes, and the evaluation results pass to the next round.
Program selection is described in Appendix~\ref{app:selection-evaluation}.
Sol denotes \texttt{gpt-5.6-sol} with maximum reasoning effort, and Astra denotes \texttt{gpt-6-astra} with ultra reasoning effort.

\begin{table}[htbp]
\centering
\caption{\textbf{Configurations of the shared multi-turn development setting.} Time budgets are per agent per round and include tool calls and waiting. Each kinetic task has separate Sol and Astra runs.}
\label{tab:development-settings}
\begin{tabular}{@{}lccc@{}}
\toprule
Study & Coding model & Rounds & Time budget (h) \\
\midrule
Numerical and PIV-derived NS & Sol & 5 & 4 \\
Kinetic tasks & Sol & 6 & 3 \\
Kinetic tasks & Astra & 6 & 3 \\
NS ablations & Sol & 3 & 2 \\
\bottomrule
\end{tabular}
\end{table}

The prompt templates below separate the shared task instructions from the two roles.
Bracketed fields specify the study's equation and task, public inputs, required outputs, evaluation criterion, and development budget.

\begin{quote}
\begin{minipage}{\linewidth}
\noindent\textbf{Shared task prompt (template).}
Develop a program for [PDE and task] that maps [public inputs] to [required outputs].
Improve its performance under [evaluation criterion] within [development budget].
Use the current program and memory log as context.
Choose your methods and experiments using the problems and tools available in this environment.
Use numerical feedback to assess your changes.
Before the round ends, submit your strongest program and leave concise research notes describing the methods tried, observations, and useful next steps.
\end{minipage}
\end{quote}

\begin{quote}
\begin{minipage}{\linewidth}
\noindent\textbf{Exploration role.}
Focus primarily on exploring new approaches to the task. Pursue the ideas you judge most promising for producing a stronger program. Use the current program and available evidence as context, and follow useful findings wherever they lead. You have full freedom to choose methods, representations, and experiments, including combining new ideas with improvements to existing approaches.
\end{minipage}
\end{quote}

\begin{quote}
\begin{minipage}{\linewidth}
\noindent\textbf{Refinement role.}
Focus primarily on improving the current approach. Use the current program and available evidence to identify where further work could produce a stronger program. You decide what to retain, change, or replace, and follow useful findings wherever they lead. You have full freedom to choose methods, representations, and experiments, including a different approach when it offers a better path forward.
\end{minipage}
\end{quote}

\paragraph{Numerical NS initial program and resources.}
The initial program holds the queried initial velocity fixed throughout the time interval.
The run uses seed $101$ and allocations with four NVIDIA L40S GPUs, $64$ CPU cores, and $160$\,GiB of host memory.

\paragraph{Kinetic initial programs.}
Each kinetic task has separate Sol and Astra runs, both starting from the same task-specific initial program.
For VPFP, the initial program samples the public initial density on a $32\times32$ grid, holds it fixed, computes its periodic Poisson field, and uses a static Maxwellian closure: $\mathbf j=\mathbf q=0$ and $\mathbf P=\rho T_{\mathrm v}I$.
For VP, the initial program computes raw velocity moments through third order using $128$-point Gauss--Legendre quadrature on the declared velocity window, solves periodic Poisson for the initial electric field, and holds these fields fixed.
The agents are free to replace these approximations.

\paragraph{Kinetic development settings.}
Each run uses seed $101$, giving one development trajectory per task and coding model.
The kinetic runs use four NVIDIA H100 GPUs, with the same $64$ CPU cores and $160$\,GiB of host memory as the numerical NS study.
Each agent may acquire at most $32$ new development problems per round, including four initial unguided problems and all perturbed children.
Each agent may nominate at most one primary candidate for selection.
We use $128$ hidden evaluation points per development problem and $256$ per selection or test problem.
The base Generators remain fixed throughout development.

\subsection{Program selection and final evaluation}
\label{app:selection-evaluation}
\begin{samepage}
\paragraph{Program selection.}
At the end of each round in the setting of Appendix~\ref{app:agent-settings}, we compare the current program and the agents' nominated candidates on the same selection problems, using the same evaluation points and execution budget per problem.
We retain the program with the lowest mean error among those that complete the evaluation with finite errors.
If neither candidate improves on the current program, we retain it for the next round.

\paragraph{Final evaluation.}
After the prescribed last development round, we evaluate the retained program on a separate test set.
The final round is fixed by the development configuration in Table~\ref{tab:development-settings}; test results do not determine which round is reported as final.
Within each study, all programs use the same test problems and evaluation points, with the task-specific error measure defined in Section~\ref{sec:protocol}.
For Figures~\ref{fig:development-rounds} and~\ref{fig:kinetic-development}, we apply this test evaluation to the program retained after each round.
\end{samepage}

\paragraph{Program execution budget.}
The per-problem execution limit is $900$\,s for the NS and kinetic studies in Section~\ref{sec:main-comparison} and $300$\,s for the comparisons in Sections~\ref{sec:pde-agent-workflows} and~\ref{sec:environment-analysis}.

\paragraph{NS evaluation.}
For generated NS problems, evaluation times are sampled uniformly over the problem's time interval and spatial locations uniformly over the fluid domain.
The NS studies use $96$ evaluation points per selection problem and $128$ per test problem, shared by the programs being compared.
On generated NS sets, selection and testing average the mean errors of unguided, guided, and perturbed problems with equal weights.
On the interpolated test set, we average all per-problem errors equally.
Evaluation against discrete solution data uses the available fluid locations and, for PIV, the measurement times.

\paragraph{Numerical solver comparisons.}
\label{app:numerical-baselines}
\label{app:comsol}
Table~\ref{tab:ns-solver-comparison} reports the comparison on the main numerical NS test set.
The baselines solve the same test problems as GAV-Full, with the same geometry, viscosity, boundary conditions, and continuous input queries, and are scored at the same points.
Our FEniCSx solver is adapted from a public DOLFINx implementation.\footnote{\url{https://github.com/jorgensd/dolfinx_ipcs}}

\paragraph{Kinetic evaluation.}
For VPFP, the evaluation points are sampled uniformly over the space--time domain.
For VP, we first sample one noninitial native time and then sample spatial locations uniformly over the periodic domain; public source access is restricted to the sampled time horizon.
Programs receive the scoring query after preparation.
For VP, the time horizon is public during preparation, while the spatial evaluation coordinates remain hidden.
The same coordinates are used for all output fields and programs in each comparison.

Selection uses fresh unguided, guided, and perturbed unguided problems each round, with guided problems targeting the entering program. We use a mixture of $16+16+16=48$ problems.
Selection ranks programs by the mean score in Equation~\eqref{eq:kinetic-error} alone.
Additional unguided evaluations provide guidance feedback for the selected program, separately from that round's selection.

The fixed test set contains four groups of 16 problems: unguided, guided, perturbed unguided, and perturbed guided.
Its guided problems use the initial guidance model.
Within each task, the test problems and queries are shared across Sol, Astra, and all six retained rounds; they do not determine program selection.
Strong-collision Astra retains its round-3 program in round 4.
The 36 round records therefore correspond to 35 distinct programs and $35\times64=2{,}240$ program--problem scores.

\paragraph{Kinetic normalization.}
The scales in Equation~\eqref{eq:kinetic-error} are frozen from independent reference-only calibration data.
Calibration uses $32$ VP problems and $64$ problems from each VPFP regime, with $128$ points per problem.
Each VPFP regime has eight strata formed by four collision-rate bins crossed with two temperature bins.
For each field, calibration averages its mean-square norm first within each stratum, then equally across strata within a task, and finally equally across the three kinetic tasks; the square root gives the fixed scale.
In the dimensionless units used here, $(a_\rho,a_{\mathbf E},a_{\mathbf j},a_{\mathbf P},a_{\mathbf q})$ is approximately
$(1.00413,\,0.0698186,\,0.478835,\,2.93076,\,0.990077)$.
The same scales and weights are used in all six runs.
Pressure errors use the full tensor norm, including both off-diagonal entries in two dimensions.
For VP, predicted and reference raw moments are centered using their own density and mean velocity before comparison.

Table~\ref{tab:problem-sets} summarizes the problem sets used for initial development, selection, and testing.

\begin{table}[htbp]
\centering
\caption{Problem sets for initial development, program selection, and evaluation. Counts are per dataset. Unguided problems are sampled directly from the learned Generator. NS perturbations use unguided parents; kinetic test sets include both unguided and guided parents. Selection sets in Section~\ref{sec:main-comparison} are refreshed each round; test sets are fixed. Numerical NS uses $192$ selection problems in round one ($64$ per source) and $32$ in subsequent rounds. Selection sets for Sections~\ref{sec:pde-agent-workflows} and~\ref{sec:environment-analysis} are fixed. The numerical NS test set is shared with COMSOL, FEniCSx, and Nektar++.}
\label{tab:problem-sets}

\begin{tabular}{@{}>{\raggedright\arraybackslash}p{0.27\linewidth}>{\raggedright\arraybackslash}p{0.17\linewidth}r>{\raggedright\arraybackslash}p{0.39\linewidth}@{}}
\toprule
Study & Use & Size & Composition \\
\midrule
Numerical NS (Section~\ref{sec:main-comparison}) & Selection, R2--R5 & 32 & 11 unguided, 11 guided, 10 perturbed \\
 & Test & 32 & 11 unguided, 11 guided, 10 perturbed \\
\midrule
PIV-derived NS (Section~\ref{sec:main-comparison}) & Selection per round & 192 & 64 unguided, 64 guided, 64 perturbed \\
 & Test & 768 & 256 unguided, 256 guided, 256 perturbed \\
\midrule
Existing PDE agents (Section~\ref{sec:pde-agent-workflows}) & Initial development & 64 & Generated NS problems shared across environments \\
\midrule
NS comparisons (Sections~\ref{sec:pde-agent-workflows} and~\ref{sec:environment-analysis}) & Selection & 24 & 8 unguided, 8 guided, 8 perturbed \\
 & Test & 96 & 32 unguided, 32 guided, 32 perturbed \\
\midrule
Interpolated NS (Appendix~\ref{app:interpolation-test}) & Test & 96 & Interpolated numerical solutions \\
\midrule
Kinetic studies (Section~\ref{sec:main-comparison}) & Selection & 48 & 16 unguided, 16 guided, 16 perturbed unguided \\
 & Test & 64 & 16 each: unguided, guided, perturbed unguided, perturbed guided \\
\midrule
Independent VPFP (Appendix~\ref{app:kinetic-independent}) & Test per regime & 8 & Interpolated numerical reference distributions \\
Independent VP (Appendix~\ref{app:kinetic-independent}) & Test & 6 & 12 unused ARTEMIS profiles from two observation windows \\
\bottomrule
\end{tabular}

\end{table}
\FloatBarrier

\subsection{Adapting existing PDE agents}
\label{app:pde-agent-adaptations}
GAV provides the experimentation environment for the three methods in Section~\ref{sec:pde-agent-workflows} to develop NS solvers.
The agents use this environment to request new problems, query initial velocity and forcing, and obtain numerical feedback during their existing development stages.
Our NS reimplementations preserve each method's procedure for proposing, revising, and selecting programs, with the same workflow settings across its three environment variants.
All use \texttt{gpt-5.6-sol} with maximum reasoning effort.
Each method completes its prescribed stages without a fixed wall-clock development limit; per-problem execution limits are given in Appendix~\ref{app:selection-evaluation}.

Original, GAV, and GAV-Full start from the same fixed collection of 64 NS problems sampled from the learned Generator in Appendix~\ref{app:ns-generator}.
Each problem provides initial velocity on a $64\times64$ cell-centered grid, together with forcing and reference velocities on that grid at 32 uniformly spaced times in $[0,0.12]$.
Original uses this collection throughout development; the GAV variants additionally let agents request new problems and query initial velocity and forcing at chosen coordinates.
GAV-Full also provides guided generation and perturbation.
Only the Verifier has access to the reference solutions of newly generated problems.
It evaluates submitted programs against these solutions and returns numerical error feedback to the agents.

\paragraph{CodePDE.}
CodePDE~\citep{li2025codepde} independently generates solver candidates, repairs execution failures, and refines programs using successful implementations and their measured errors.
Our NS implementation follows the authors' \href{https://github.com/LithiumDA/CodePDE/tree/26938c5832ddd0a957426dff6f702b54649c14af}{released refinement code}.
We use three initial candidate branches followed by three refinement branches, each receiving the same fixed pool of valid initial candidates and their evaluation results.
In Original, generation, debugging, and refinement use the fixed data collection and candidate evaluation results.
With GAV, these same stages can request further problems, query inputs at chosen coordinates, and evaluate revised programs before submission.

\paragraph{AutoNumerics.}
AutoNumerics~\citep{du2026autonumerics} uses specialized agents to formulate a PDE, propose and select numerical plans, and implement and repair the chosen plans.
Our implementation follows the authors' \href{https://github.com/Daviddjddu/Autonumerics/tree/adc25f7c32e278d63b9ff8a1f6d882da4ec2f560}{released planning and execution workflow}.
It proposes ten numerical plans, selects up to five, and tests each implementation on coarse and fine grids.
An agent selects the final solver from these execution reports, prioritizing accuracy and stability, and a final reasoning stage explains the choice.
In Original, coding and repair use the fixed sampled inputs.
With GAV, these stages can acquire further problems and query inputs at the coordinates required by the chosen spatial and temporal discretization.

\paragraph{PDE-SHARP.}
PDE-SHARP~\citep{fazliani2025pdesharp} begins with mathematical analysis and independent solver candidates.
Judge agents nominate candidates, receive numerical feedback, and revise or combine their approaches.
Our NS implementation follows the paper's algorithm and appendix prompts, with four initial candidates, three judges, and two rounds of refinement.
In Original, candidate generation and refinement use the fixed problem collection and shared evaluation feedback.
With GAV, both stages can request additional problems and conduct numerical experiments before nominating a program, allowing the judges to use these observations when deciding what to retain or change.

The common selection set in Table~\ref{tab:problem-sets} supplies numerical feedback for each method's selection procedure.
CodePDE and PDE-SHARP retain the valid candidate with the lowest mean error; AutoNumerics uses its final selection agent.
We then evaluate all selected programs on the common test set.

\paragraph{Agent active time.}
We record total agent active time across all development stages.
Table~\ref{tab:pde-agent-development-time} reports these times for the runs in Section~\ref{sec:pde-agent-workflows}.
For all three methods, both GAV and GAV-Full achieve lower test error (Figure~\ref{fig:pde-agent-gains}) with less total agent active time than Original.

\begin{table}[htbp]
\centering
\caption{Total agent active time (hours).}
\label{tab:pde-agent-development-time}

\begin{tabular}{@{}lrrr@{}}
\toprule
Method & Original & GAV & GAV-Full \\
\midrule
CodePDE & 11.77 & 5.59 & 8.52 \\
AutoNumerics & 13.22 & 9.71 & 8.61 \\
PDE-SHARP & 20.80 & 13.44 & 12.36 \\
\bottomrule
\end{tabular}

\end{table}
\FloatBarrier

\subsection{Ablation study details}
\label{app:environment-comparisons}
We describe the implementations of the ablation conditions in Section~\ref{sec:environment-analysis}.
In every condition, agents can request newly generated problems throughout development.
The studies use the same development strategy as the main experiments (Appendix~\ref{app:agent-settings}), with three development rounds and a two-hour budget per agent per round.
GAV is the reference condition: its learned Generator samples continuous reference solutions (Appendix~\ref{app:ns-generator}), MMS constructs their matching problem inputs, and agents can query these inputs at arbitrary coordinates.
Program selection follows Appendix~\ref{app:selection-evaluation}; the problem sets are listed in Table~\ref{tab:problem-sets}.

\paragraph{Without learning from data.}
To examine the role of learning from the target data, we replace GAV's learned Generator with a Gaussian-process prior that is not fitted to the data.
We construct continuous reference solutions using the spatial cosine and temporal Chebyshev bases in Appendix~\ref{app:ns-generator}, drawing independent Gaussian coefficients for the streamfunction and pressure.
We apply the same boundary mask to the streamfunction, enforcing zero velocity on the outer walls and obstacle surfaces.
MMS then constructs each problem's initial velocity and forcing from the sampled reference solution.
Agents can query these inputs at chosen coordinates.

\paragraph{Without MMS construction.}
To test the role of MMS, we use an \emph{input-first} procedure while retaining GAV's learned problem distribution.
We obtain each problem's initial velocity and forcing from the same learned Generator and retain continuous input queries.
We then solve each problem with the incompressible Navier--Stokes solver in Nektar++~\citep{moxey2020nektar} to obtain the Verifier's reference solution for development feedback.
This takes additional computation time and yields an approximate reference subject to spatial and temporal discretization error.

\paragraph{Without continuous input queries.}
This condition tests the value of querying problem inputs beyond a fixed grid.
Problem generation and program development follow the same procedure as in GAV.
For each problem, the initial velocity and forcing available to agents are restricted to values sampled on a fixed $64\times64$ grid.
The restriction applies during development; program selection and final evaluation use the common continuous input interface.

\clearpage

\section{Additional results and case studies}
\label{app:additional-results}

\subsection{Evaluation beyond learned Generators}
\label{app:additional-evaluations}

To check whether program accuracy transfers beyond the learned Generator's problem distribution, we construct additional references directly from numerical solutions or unused observations and use MMS to obtain matching problem inputs.
These problems are used only for evaluation; the programs are frozen before this evaluation and compared on the same problems and scoring points within each task.

\subsubsection{Numerical NS problems}
\label{app:interpolation-test}
To check for overfitting to the learned Generator, we evaluate the final numerical NS programs from the main experiments on a separate set of 96 interpolated problems.

We start from 192 numerical solutions unused during agent development and pair similar solutions within the same flow scenario.
We fit continuous streamfunctions and pressure fields to the sampled solution data so that the derivatives required by MMS can be evaluated.
We interpolate each pair by taking a weighted combination of these fields to obtain a new reference solution.
MMS provides the corresponding forcing.
This construction uses the geometry and continuous representation of the main experiments, with references obtained directly from the numerical data.
This local interpolation keeps the test solutions close to the original data, providing an independent check of accuracy in the same flow regimes.

The final GAV-Full program from Section~\ref{sec:main-comparison} completes all $96$ problems and achieves a mean relative velocity error of $0.1256\%$.
We use the same $128$ fixed scoring points per problem for all programs and average the per-problem errors equally over the $96$ problems.
Both GAV variants improve all three existing PDE agents compared with Original (Table~\ref{tab:pde-agent-additional}).

\begin{table}[htbp]
\centering
\caption{Mean relative velocity error (\%) of the final numerical NS programs from the main experiments on 96 interpolated test problems. Lower is better.}
\label{tab:pde-agent-additional}

\begin{tabular}{@{}llr@{}}
\toprule
Agent setting & Environment & Mean error (\%) \\
\midrule
Multi-turn setting & GAV-Full & 0.1256 \\
\midrule
CodePDE & Original & 3.1231 \\
 & GAV & 0.1890 \\
 & GAV-Full & 1.0555 \\
\midrule
AutoNumerics & Original & 3.4970 \\
 & GAV & 0.3086 \\
 & GAV-Full & 0.9567 \\
\midrule
PDE-SHARP & Original & 3.5203 \\
 & GAV & 0.2744 \\
 & GAV-Full & 0.0738 \\
\bottomrule
\end{tabular}

\end{table}

Table~\ref{tab:environment-interpolation} gives the corresponding results for the environment comparison in Section~\ref{sec:environment-analysis}.
The ordering of the four conditions is unchanged: GAV has the lowest error, while the condition without continuous input queries has an error $13.5$ times that of GAV.
These results indicate that GAV helps agents develop more accurate programs, with gains that transfer to problems constructed independently of the learned Generator.

\begin{table}[htbp]
\centering
\caption{Mean relative velocity error (\%) for the environment comparison on 96 interpolated NS test problems. Lower is better.}
\label{tab:environment-interpolation}

\begin{tabular}{@{}lr@{}}
\toprule
Environment & Mean error (\%) \\
\midrule
GAV & \textbf{0.440} \\
w/o learning from data & 1.561 \\
w/o MMS construction & 0.864 \\
w/o continuous input queries & 5.915 \\
\bottomrule
\end{tabular}

\end{table}
\FloatBarrier

\subsubsection{Kinetic problems}
\label{app:kinetic-independent}
The kinetic evaluation uses eight weak-collision VPFP problems, eight strong-collision VPFP problems, and six ARTEMIS-informed VP problems constructed without the learned Generators.
We evaluate the initial program S0 and the programs retained after each of the six development rounds, using the same problems and hidden scoring coordinates for both coding models within each task.
The initial approximations are those described in Appendix~\ref{app:agent-settings}: static initial density and Maxwellian closure for VPFP, and static initial velocity moments and Poisson field for VP.
In particular, the VPFP initial program uses a $32\times32$ spatial grid, while the VP initial program uses the public 32-point spatial panel and 128-point Gauss--Legendre velocity quadrature.
All scores use Equation~\eqref{eq:kinetic-error} with the frozen calibration from Appendix~\ref{app:selection-evaluation}.

\paragraph{VPFP reference construction.}
Each VPFP reference starts from a separately simulated coarse trajectory, with 65 snapshots over $t\in[0,1]$ on the periodic domain $[0,2\pi)^2$.
The strong-collision set uses a $5\times5$ spatial grid, a $9\times9$ velocity grid, and collision rates in $[10,30]$; the weak-collision set uses $7\times7$, $17\times17$, and $[0.02,0.08]$, respectively.
Both sets use temperature parameters in $[0.8,1.2]$ and velocity nodes in $[-6,6]$ along each axis.
We interpolate the square root of the sampled distribution using cardinal Gaussian radial basis functions in velocity, a full Fourier representation in space, and cubic splines in time.
The Gaussian width is $0.8$ times the velocity-node spacing.
Squaring the interpolant gives a nonnegative distribution; division by its spatially averaged mass fixes the mean density to one.
Velocity moments and the periodic Poisson field are evaluated from this representation, and the VPFP operator applied to it supplies the manufactured source, including the derivative of the mass normalization.
The reference is the normalized continuous interpolant, with Gaussian velocity tails on $\mathbb R^2$.
For each problem, all programs are scored at the same 256 uniformly sampled space--time points.

\paragraph{VP reference construction from unused ARTEMIS profiles.}
The six VP problems use twelve observed velocity profiles, paired within two observation periods: 2011-10-23, 02:00--03:00 UTC, and 2016-07-29, 06:00--07:00 UTC.
Each period supplies three pairs; the selected profiles occupy distinct observation blocks and are excluded from all audited historical Generator training sets.
We apply the ARTEMIS velocity-histogram reduction and normalization to the new profiles.
For each histogram, we convolve the uniform density in each velocity bin with a Gaussian whose standard deviation is one quarter of that bin's width, and sum with the normalized observed bin masses.
This produces a positive, normalized profile on $\mathbb R$, with smoothing redistributing mass across the original velocity bins.

For each pair of smoothed profiles $g_a$ and $g_b$, we manufacture
\begin{align*}
 F(t,x,v)&=\rho(t,x)\bigl[w(t,x)g_a(v)+(1-w(t,x))g_b(v)\bigr],\\
 \rho(t,x)&=1+0.12\cos(x-2\pi t+\phi)+0.04\cos(2x+\pi t+\psi),\\
 w(t,x)&=0.5+0.3\sin(kx-2\pi t+\chi).
\end{align*}
For problems indexed by $i=0,\ldots,5$, the fixed phases are $\phi=2\pi i/6$, $\psi=\pi(i+1)/4$, and $\chi=\pi(i+1)/7$, and the mixing modes are $k=(1,2,4,8,2,4)$.
Thus $\rho\geq0.84$ and $0.2\leq w\leq0.8$ on the periodic domain $x\in[0,2\pi)$.
The zero-mean field satisfies $\partial_xE=\rho-1$, and the manufactured source is $\partial_tF+v\partial_xF+E\partial_vF$.
Derivatives and velocity moments through third order are analytic.
The public velocity-query window remains $[-55/3,55/3]$, while reference moments integrate the full velocity line.
Each program receives only the initial distribution and source queries through the common endpoint $t=0.875$, and is evaluated at the same 256 hidden spatial locations at that endpoint.
The observations determine the velocity profiles; the spatial and temporal variation is manufactured.

\paragraph{Transfer across reference constructions.}
Table~\ref{tab:kinetic-independent-endpoints} and Figure~\ref{fig:kinetic-independent-development} show that every final program has lower mean error than S0 on its independent set.
The R1-to-R6 improvement also transfers in five of the six runs: $31.7\%$ for VP Sol, $58.7\%$ and $42.7\%$ for weak-collision Sol and Astra, and $77.5\%$ and $85.2\%$ for strong-collision Sol and Astra.
For VP Astra, the independent mean increases from $1.522\times10^{-6}$ at R1 to $8.662\times10^{-6}$ at R6, a factor of $5.69$, while remaining below S0.
Table~\ref{tab:kinetic-independent-all-rounds} reports all retained rounds, including intermediate increases and plateaus; R6 is the final program specified by the development protocol.

These small sets test transfer to additional manufactured reference families.
Their shapes are constrained by the coarse VPFP trajectories and the prescribed ARTEMIS spatial mixing modes; their difficulty is not matched to the generated test sets.
The accuracy results therefore concern these forced reference families.
With the same score definition and calibration, weak-collision Sol has a final mean of $0.03437$ here versus $0.1352$ on the generated test set.
Strong-collision Astra achieves a mean of $2.176\times10^{-9}$ on its interpolated-reference set.
Table~\ref{tab:kinetic-independent-macros} reports the five component errors of the final programs.
Heat flux remains the dominant contribution for weak-collision Sol, accounting for $81.3\%$ of its total score.

\begin{table}[t]
\centering
\caption{Mean dimensionless transport error on the independent fixed banks (six ARTEMIS cases and eight cases for each VPFP regime). The signed change is $100(\mathrm{error}_{R6}/\mathrm{error}_{R1}-1)$; positive values indicate higher error. R6 is the prespecified final round.}
\label{tab:kinetic-independent-endpoints}
\begin{tabular}{llrrrr}
\toprule
Family & Model & S0 & R1 & R6 & Change (\%) \\
\midrule
ARTEMIS VP & Sol & 0.1044 & 0.007703 & 0.005259 & -31.73 \\
ARTEMIS VP & Astra & 0.1044 & $1.522\times 10^{-6}$ & $8.662\times 10^{-6}$ & +469.09 \\
Weak VPFP & Sol & 0.5652 & 0.08331 & 0.03437 & -58.74 \\
Weak VPFP & Astra & 0.5652 & 0.3857 & 0.2211 & -42.68 \\
Strong VPFP & Sol & 0.2054 & 0.001883 & $4.233\times 10^{-4}$ & -77.52 \\
Strong VPFP & Astra & 0.2054 & $1.472\times 10^{-8}$ & $2.176\times 10^{-9}$ & -85.22 \\
\bottomrule
\end{tabular}
\end{table}
\begin{table}[t]
\centering
\caption{Mean dimensionless transport error on the independent MMS banks: six ARTEMIS cases and eight cases for each VPFP regime. The same cases and evaluation queries are used for both models in every round. S0 is the initial program; R1--R6 are the retained development programs. All scheduled cases are included, and R6 is the prespecified final round.}
\label{tab:kinetic-independent-all-rounds}
\begin{tabular}{lrrrrrr}
\toprule
 & \multicolumn{2}{c}{ARTEMIS VP} & \multicolumn{2}{c}{Weak VPFP} & \multicolumn{2}{c}{Strong VPFP} \\
\cmidrule(lr){2-3}\cmidrule(lr){4-5}\cmidrule(lr){6-7}
Round & Sol & Astra & Sol & Astra & Sol & Astra \\
\midrule
S0 & 0.1044 & 0.1044 & 0.5652 & 0.5652 & 0.2054 & 0.2054 \\
R1 & 0.007703 & $1.522\times 10^{-6}$ & 0.08331 & 0.3857 & 0.001883 & $1.472\times 10^{-8}$ \\
R2 & 0.007637 & $1.725\times 10^{-6}$ & 0.03311 & 0.2143 & 0.001846 & $2.346\times 10^{-9}$ \\
R3 & 0.005257 & $8.662\times 10^{-6}$ & 0.03297 & 0.2882 & 0.001845 & $2.165\times 10^{-9}$ \\
R4 & 0.005259 & $8.662\times 10^{-6}$ & 0.03458 & 0.2882 & 0.001845 & $2.165\times 10^{-9}$ \\
R5 & 0.005259 & $8.662\times 10^{-6}$ & 0.03437 & 0.2882 & 0.001838 & $2.174\times 10^{-9}$ \\
R6 & 0.005259 & $8.662\times 10^{-6}$ & 0.03437 & 0.2211 & $4.233\times 10^{-4}$ & $2.176\times 10^{-9}$ \\
\bottomrule
\end{tabular}
\end{table}
\begin{table}[htbp]
\centering
\caption{\textbf{Final macroscopic errors on independent kinetic problems.} Mean per-problem normalized RMS error for each field group in Equation~\eqref{eq:kinetic-error}, evaluated for R6 on six ARTEMIS-informed VP problems or eight problems per VPFP regime. These are dimensionless component errors, before weighting: density and electric field each have weight $1/8$, and particle flux, pressure, and heat flux each have weight $1/4$. Vector errors use the Euclidean norm and VPFP pressure uses the full tensor Frobenius norm. Lower is better.}
\label{tab:kinetic-independent-macros}
\begin{tabular}{@{}llrrrrr@{}}
\toprule
Task & Model & $e_\rho$ & $e_{\mathbf E}$ & $e_{\mathbf j}$ & $e_{\mathbf P}$ & $e_{\mathbf q}$ \\
\midrule
ARTEMIS VP & Sol & 0.002574 & 0.01625 & 0.008603 & 0.002552 & $4.708\times10^{-4}$ \\
ARTEMIS VP & Astra & $3.062\times10^{-6}$ & $2.765\times10^{-5}$ & $9.302\times10^{-6}$ & $1.937\times10^{-6}$ & $8.055\times10^{-6}$ \\
Weak VPFP & Sol & 0.001524 & 0.008831 & 0.009259 & 0.01128 & 0.1118 \\
Weak VPFP & Astra & 0.041 & 0.4014 & 0.2016 & 0.09325 & 0.3682 \\
Strong VPFP & Sol & $2.574\times10^{-5}$ & $1.435\times10^{-4}$ & $1.896\times10^{-4}$ & $2.160\times10^{-4}$ & 0.001203 \\
Strong VPFP & Astra & $1.220\times10^{-10}$ & $1.092\times10^{-9}$ & $4.464\times10^{-10}$ & $5.534\times10^{-10}$ & $7.099\times10^{-9}$ \\
\bottomrule
\end{tabular}
\end{table}

\begin{figure}[htbp]
\centering
\includegraphics[width=\linewidth]{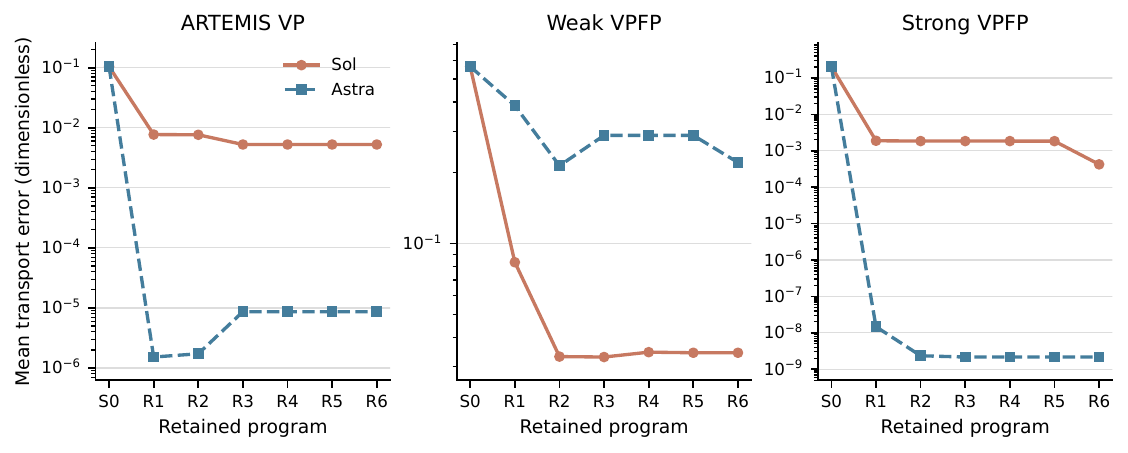}
\caption{\textbf{Kinetic programs on independently constructed problems.} Mean dimensionless transport score from Equation~\eqref{eq:kinetic-error} for the initial program S0 and the programs retained after rounds 1--6. The VP set contains six MMS problems constructed from twelve unused ARTEMIS profiles; each VPFP regime has eight interpolated-reference MMS problems. Programs within each task share the same problems and hidden queries, using the frozen scoring calibration. VP is evaluated at $t=0.875$; VPFP uses space--time queries. All vertical axes are logarithmic, and lower is better. The independent sets are used for evaluation only; R6 remains the final program.}
\label{fig:kinetic-independent-development}
\end{figure}

\FloatBarrier

\subsection{Problem difficulty under guided generation}
\label{app:guided-case}

In a separate five-round numerical NS run, we examine whether guided generation supplies harder problems for the current program.
Each guidance update uses $64$ unguided feedback problems.
At each update, we train the guidance model using error feedback from the current round's program.
Keeping that program fixed, we compare its mean error on the current round's problems with that on new problems generated using the updated guidance.

\begin{figure}[htbp]
\centering
\includegraphics[width=0.75\linewidth]{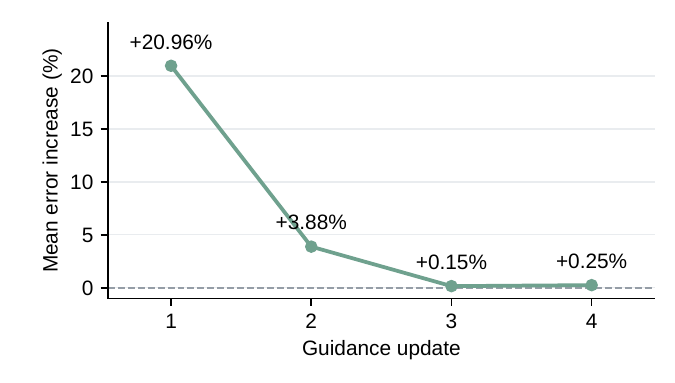}
\caption{\textbf{Percentage increase in mean error after guidance updates.} Each comparison fixes the program and measures the joint relative $\ell_2$ velocity error on the old and new problem sets.}
\label{fig:guided-difficulty}
\end{figure}

The mean error is higher on the new problems at all four updates (Figure~\ref{fig:guided-difficulty}).
These observations show that guided generation can supply more challenging problems for the current program as development proceeds.
\FloatBarrier

\subsection{Program development case studies}
\label{app:development-case-studies}
\label{app:kinetic-program-evolution}

These case studies examine how the Agents use new problems, input queries, and numerical feedback from GAV to decide which revisions to keep.
Figures~\ref{fig:program-evolution} and~\ref{fig:kinetic-evolution-artemis}--\ref{fig:kinetic-evolution-strong} show the Explorer's and Refiner's final submissions, with blue marking the selected programs and arrows identifying their recorded starting programs.
Development comparisons come from archived Agent logs; Figures~\ref{fig:development-rounds} and~\ref{fig:kinetic-development} report the separate fixed-test results.
The three kinetic cases follow the Astra runs, with a refreshed 48-problem selection bank in each of six rounds.

\subsubsection{Numerical NS}
\label{app:program-evolution}
We trace the numerical NS experiment with GAV-Full in Section~\ref{sec:main-comparison}.
Figure~\ref{fig:program-evolution} shows its submitted and retained programs over five rounds.

\begin{figure}[!htbp]
\centering
\includegraphics[width=0.96\linewidth]{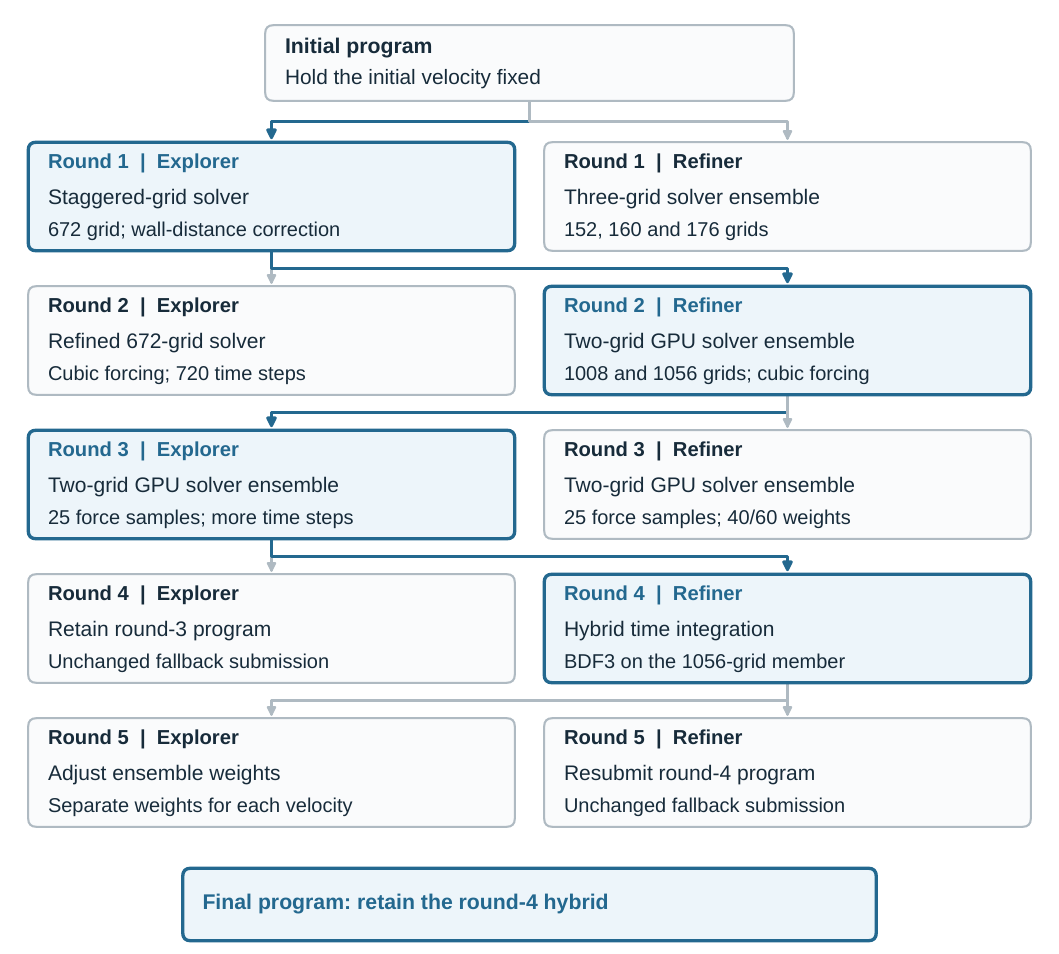}
\caption{\textbf{Numerical NS program evolution over five rounds.} Each round shows the Explorer's and Refiner's final submissions. Blue marks selected submissions; arrows connect each submission to its starting program. Round 5 retains the incumbent from round 4, shown at the bottom. Grid sizes are given per spatial axis.}
\label{fig:program-evolution}
\end{figure}

\paragraph{Choosing spatial resolutions through feedback.}
In round 2, the Refiner compares staggered-grid solvers at different resolutions using the same problems and evaluation points from GAV.
Finer grids do not consistently reduce error: the representation of polygonal obstacles changes with the grid.
The Agent queries initial velocity and forcing on each grid and tests combinations of their predictions.
The selected program averages two GPU solvers on $1008\times1008$ and $1056\times1056$ grids.

\paragraph{Using input queries to reduce computation.}
In round 3, the Explorer probes the forcing at selected times and finds that a degree-$24$ polynomial represents its temporal variation accurately.
It replaces local cubic interpolation from $61$ forcing snapshots with a Chebyshev reconstruction from $25$ queried snapshots.
Comparisons on shared problems show nearly unchanged error with less computation.
The Agent uses this saving to test more time steps, reconstructing forcing in small blocks to limit memory use.
The resulting program is retained after selection.

\paragraph{Testing changes on perturbed problems.}
\label{app:perturbation-case}
In round 4, the Refiner tests a third-order backward differentiation formula (BDF3) for time integration.
Applying it to both grids slightly worsens the mean error on a shared pair of development problems.
Changing only the $1056\times1056$ member improves both velocity-component errors on two additional perturbed problems.
The submitted program retains the previous time integrator on the other grid and is selected for the next round.

\paragraph{Retaining a program after further tests.}
In round 5, the Explorer adjusts the weights used to combine the two grids, choosing separate weights for the two velocity components.
The change improves the mean error on a small development set but does not improve on the current program on the round's selection problems.
The Refiner resubmits the current program, which is retained.
The final solver therefore remains the round-4 combination, and its round-5 test error is unchanged.

\FloatBarrier

\subsubsection{ARTEMIS-informed VP}

\paragraph{Allocating input queries.}
The first selected program uses Fourier--discontinuous-Galerkin transport and corrects density to agree with the Poisson field.
In round 3, the Explorer jointly changes velocity and temporal resolution on the same unguided problem and scoring coordinates.
It replaces 160 six-node velocity cells and 65 source times with 80 eight-node cells and 129 source times, reducing the transport score from $0.002363$ to $0.0007866$.
The Refiner wins the subsequent selection with a different program that combines projected initial data with a 129-time source panel.

\paragraph{Checking spatial source reconstruction.}
In round 5, the Explorer accepts or rejects spatial source reconstruction separately for each velocity column, querying rejected columns directly.
On one shared unguided development problem, the revised sampler reduces error from $0.003964$ to $0.001632$ and improves all five scored quantities.
The final round retains the Refiner's guarded source-time enrichment.
The common-test score stays near $0.002854$ from R3 through R6.
Their independent-reference behavior is reported separately in Appendix~\ref{app:kinetic-independent}.

\begin{figure}[p]
\centering
\includegraphics[width=\linewidth]{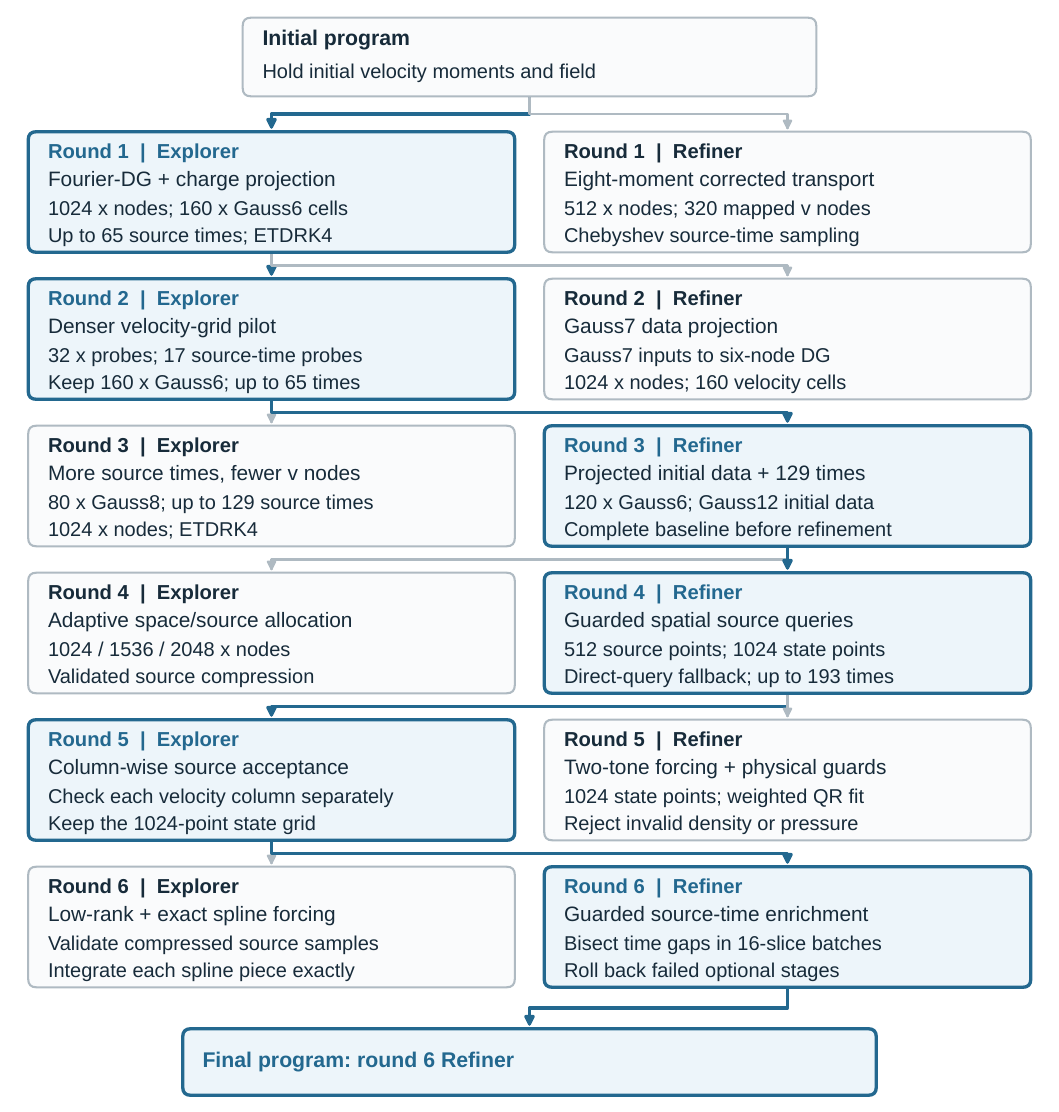}
\caption{\textbf{ARTEMIS-informed VP program evolution with Astra.} Each row shows the Explorer's and Refiner's final nominations; blue identifies the program selected on that round's 48 problems, and arrows identify recorded starting programs. DG denotes discontinuous Galerkin, Gauss$k$ denotes $k$ quadrature nodes per velocity cell, and ETDRK4 is fourth-order exponential Runge--Kutta. Source-time counts are refinement targets; an earlier complete panel may be returned. R6 is the final program; common-test errors are shown in Figure~\ref{fig:kinetic-development}.}
\label{fig:kinetic-evolution-artemis}
\end{figure}

\subsubsection{Weak-collision VPFP}

\paragraph{Revising the macroscopic response.}
Early programs combine initial moments, kinetic charge evolution, and approximate source responses.
In round 4, the Explorer replaces the local Hermite response by an Ornstein--Uhlenbeck response with cubic velocity quadrature and two density-feedback passes.
The kernel describes free collisional transport; the electric-field response remains approximate.
On one shared guided development problem, the nominated program lowers the score from $0.16335$ to $0.03634$, improving all five component errors, and is subsequently selected on the round's full bank.

\paragraph{Diagnosing grids and refining quadrature.}
Round 5 adds midpoint source samples.
A replay diagnostic reveals spatial aliasing across incompatible grids, prompting residual corrections on a common coarser grid whose size divides the original grid size.
In round 6, the Refiner adapts the initial velocity quadrature and blends response quadratures separately along each velocity axis.
On a shared unguided/guided development pair, the nominated program improves all five quantities on both problems and lowers the two-problem mean from $0.08842$ to $0.06504$.
On the common test set, the retained R4, R5, and R6 programs score $0.1109$, $0.1495$, and $0.1274$, respectively.

\begin{figure}[p]
\centering
\includegraphics[width=\linewidth]{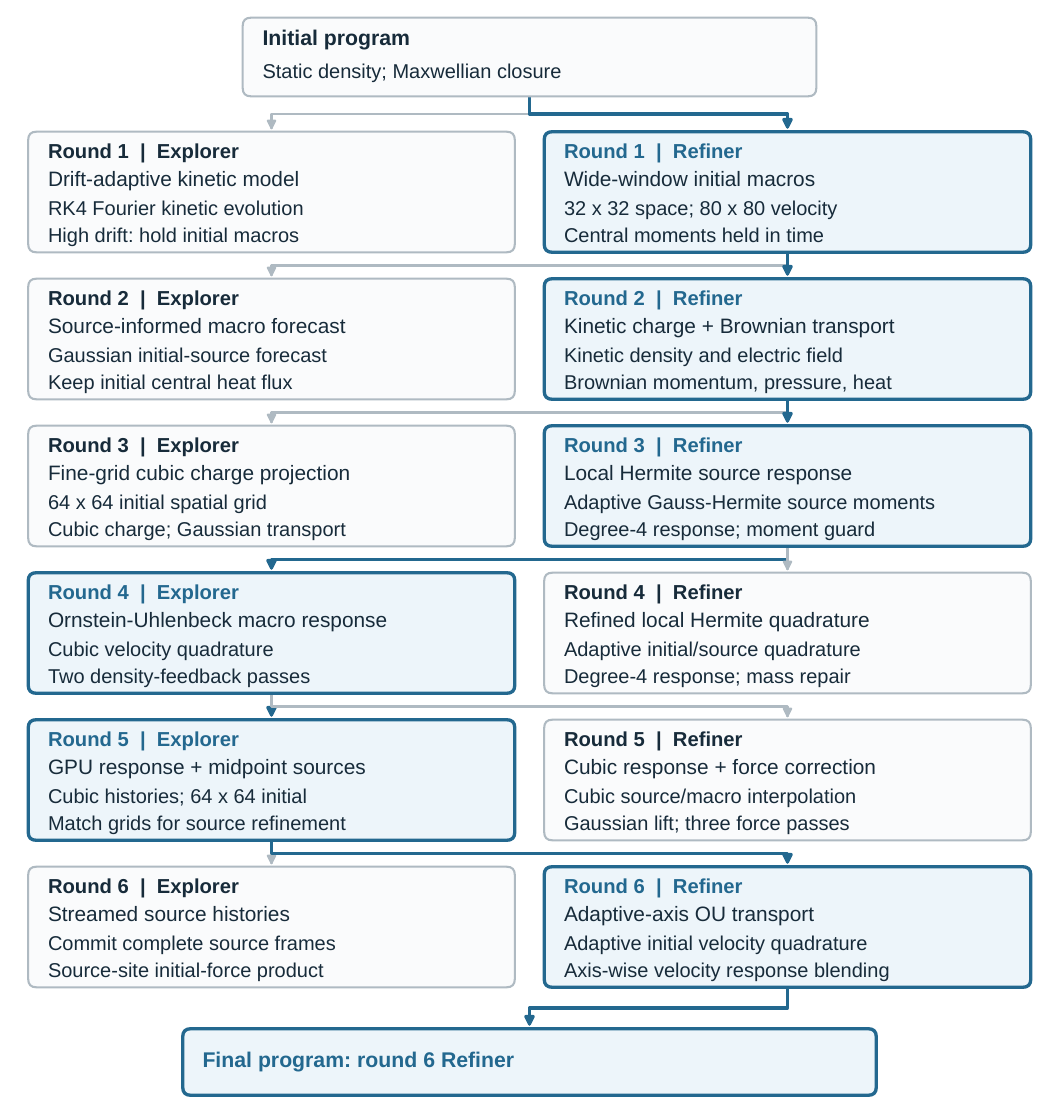}
\caption{\textbf{Weak-collision VPFP program evolution with Astra.} The two columns contain the final nominations from each round, with blue marking the program retained on the common 48-problem selection bank. Arrows show recorded starting programs. OU denotes Ornstein--Uhlenbeck and RK4 denotes fourth-order Runge--Kutta. Grid dimensions count sampling nodes. The programs predict the required macroscopic quantities using different internal approximations. Figure~\ref{fig:kinetic-development} reports their errors on the separate test bank.}
\label{fig:kinetic-evolution-weak}
\end{figure}

\subsubsection{Strong-collision VPFP}

\paragraph{Resolving temporal source structure.}
Public source probes reveal sharp changes in the time derivative and overshoot from global interpolation.
The first-round Refiner therefore fits separate cubic segments in its Fourier--Hermite transport program.
On one shared development problem, a precursor of the nominated program has $3.75$ times smaller error than a linear low-rank source alternative.
Later rounds adapt source resolution and add a low-rank source-density reconstruction checked at held-out times and spatial locations.

\paragraph{Retaining a program after further tests.}
In round 4, both new moment-reconstruction proposals pass eligibility checks but have higher error than the incumbent on the 48 selection problems.
The round-3 program is therefore retained unchanged.

\paragraph{Revising source probes.}
Round 5 selects the Explorer's separable source projection and faster interpolation assembly.
Its common-test mean rises to $3.27\times10^{-3}$, with three perturbed unguided problems contributing $99.9\%$ of the total error.
In round 6, the Refiner changes the geometry of sixteen public-source probes; captured-source diagnostics show better conditioning of the reconstructed density basis.
This program wins the new selection bank and achieves $6.72\times10^{-7}$ on the common test bank.
This test score evaluates the retained program as a whole; the contribution of the probe change was not measured separately.

\begin{figure}[p]
\centering
\includegraphics[width=\linewidth]{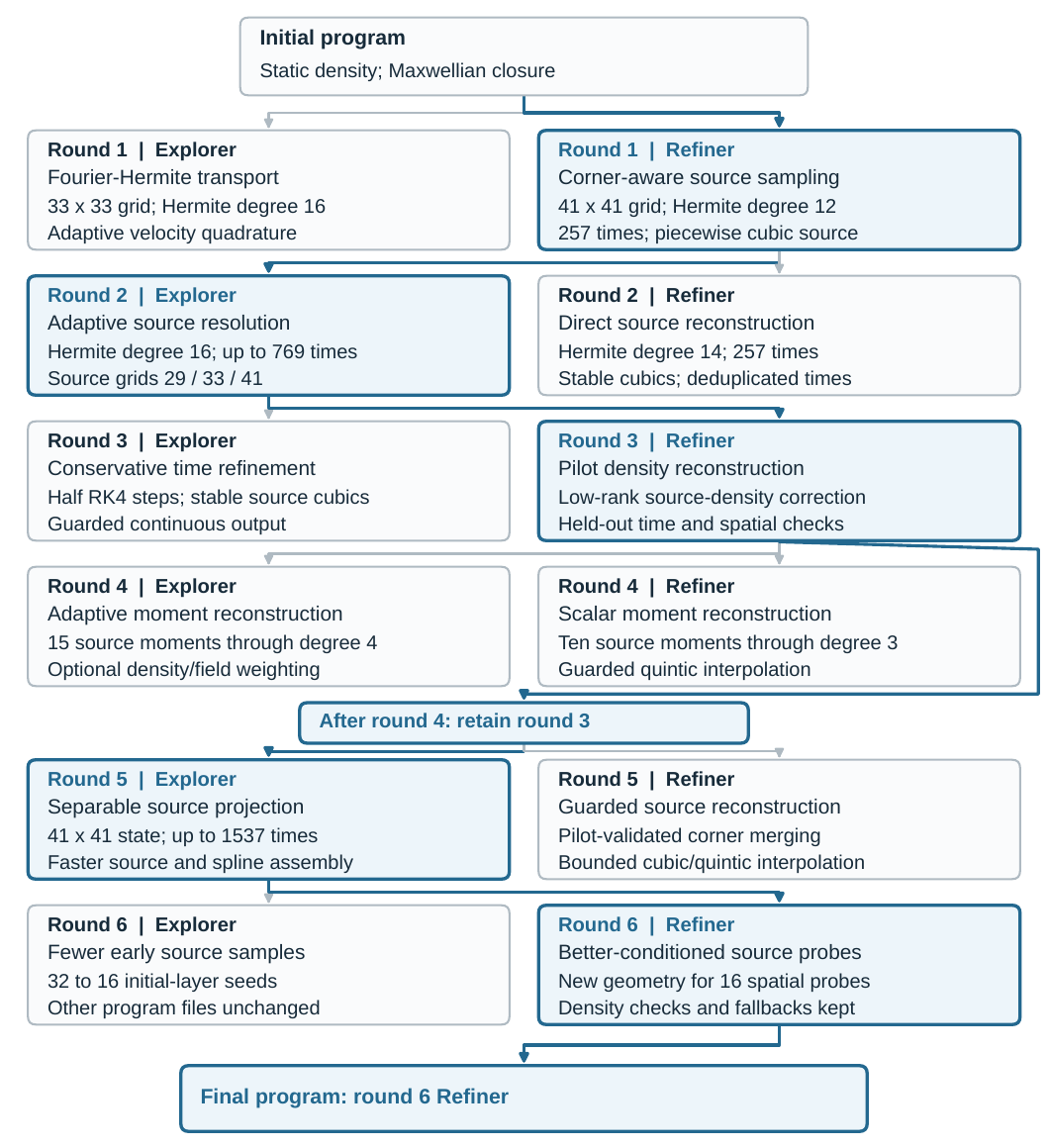}
\caption{\textbf{Strong-collision VPFP program evolution with Astra.} Final Explorer and Refiner nominations over six rounds. Blue marks the program retained on each round's 48 selection problems; arrows indicate recorded starting programs. Round 4 retains the round-3 Refiner program, shown by the bypass arrow and intervening box. Both round-5 candidates therefore start from R3. Grid sizes count nodes per spatial axis, Hermite degree specifies the velocity expansion, and RK4 is fourth-order Runge--Kutta. Test errors, including the R5 increase, are reported separately in Figure~\ref{fig:kinetic-development}.}
\label{fig:kinetic-evolution-strong}
\end{figure}
\FloatBarrier

\subsection{Predicted fields and errors}
\label{app:solution-visualizations}

For the PIV-derived NS and VPFP comparisons below, we use the first unguided problem in each task's test bank, fixed before examining its visualization errors.
The frozen programs within each task share the same problem and manufactured reference.
These diagnostics illustrate spatial and temporal behavior; Figures~\ref{fig:development-rounds} and~\ref{fig:kinetic-development} report performance over the full generated test banks.

\subsubsection{PIV-derived NS}
We visualize the programs retained after each development round on the fixed PIV example.
The continuous manufactured references and saved predictions are shared throughout each comparison; plotting adds no interpolation or smoothing.

Relative $\ell_2$ curves reduce over the spatial grid at each time; full-window values reduce over all saved space--time samples.
These diagnostics use unweighted sums without a denominator floor.
Zero error against a zero reference is assigned zero; nonzero error against a zero reference is undefined.
Positive errors use logarithmic axes.

\paragraph{Fields and diagnostics.}
The example covers $[0,1.55]$\,s at 193 times on a $96\times64$ spatial grid.
Its reference solution is sampled from the Generator fitted to the PIV measurements.
The rectangular wake region contains no solid obstacle; its outer boundary velocities follow the manufactured reference, and velocity diagnostics include these boundary points.
We compute vorticity, $\omega=\partial_xu_2-\partial_yu_1$, by applying the same finite-difference operator to reference and predicted velocities on complete interior stencils.
The vector velocity diagnostic combines both components in one norm, whereas Equation~\ref{eq:ns-velocity-error} averages their separate relative errors.

Figure~\ref{fig:piv-error-time} retains all 193 times, including brief R5 error peaks near both ends of the window.
R2 retains R1 and produces identical predictions.
R4 has a smaller full-window vector relative $\ell_2$ error than R5 on this problem ($0.00141\%$ versus $0.00172\%$).
Figures~\ref{fig:piv-reference-fields} and~\ref{fig:piv-round-errors} show the corresponding fields and round-by-round spatial errors.

\begin{figure}[htbp]
\centering
\includegraphics[width=\linewidth]{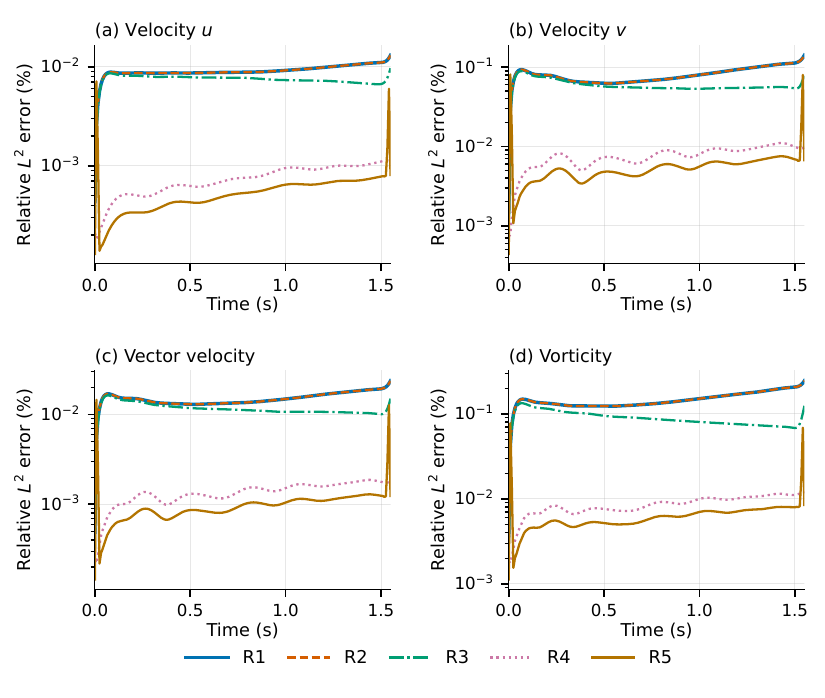}
\caption{\textbf{PIV errors over the complete time window.} Spatial relative $\ell_2$ errors (\%) for each velocity component, the joint velocity vector, and vorticity at all 193 query times. R2 coincides with R1. These are single-problem grid diagnostics.}
\label{fig:piv-error-time}
\end{figure}

\begin{figure}[p]
\centering
\includegraphics[width=0.90\linewidth]{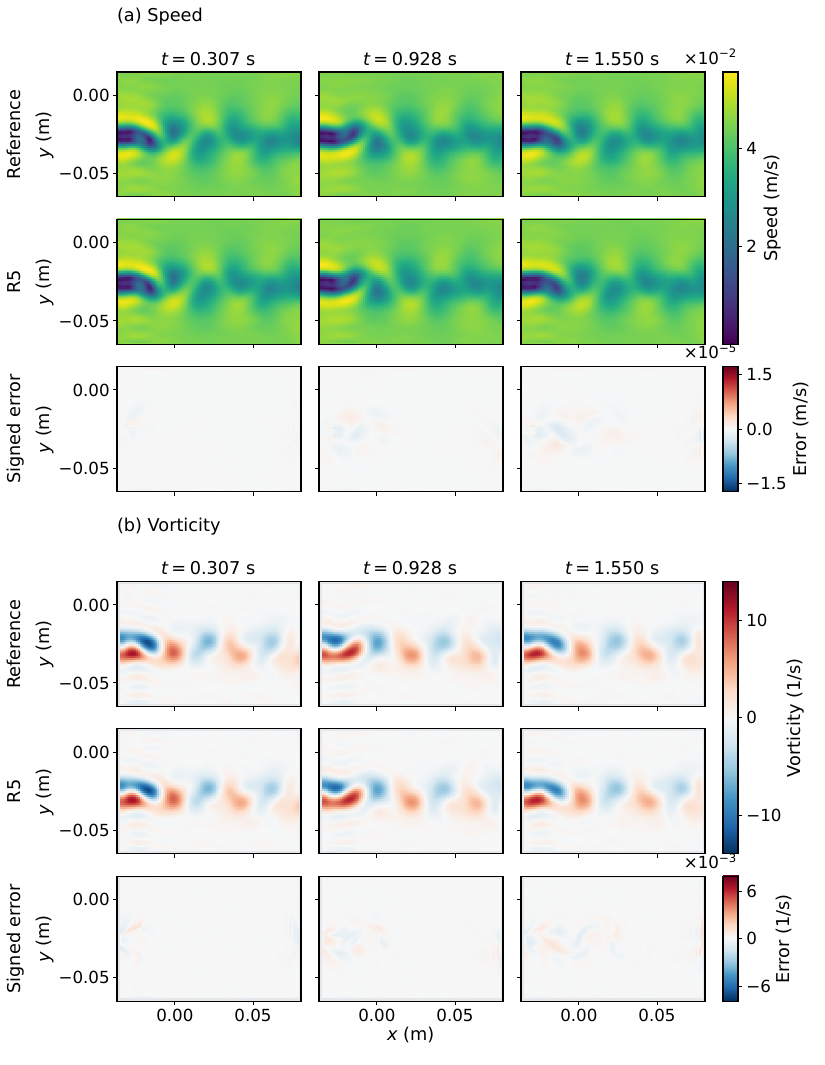}
\caption{\textbf{Final-program fields on the PIV-derived example.} Reference, R5 prediction, and signed error (prediction minus reference) for speed and vorticity at $t=0.307$, $0.928$, and $1.55$\,s. Each field uses one scale for reference and prediction and a separate symmetric error scale, fixed over the full time window. Gray vorticity pixels lie outside the common interior-stencil support.}
\label{fig:piv-reference-fields}
\end{figure}

\begin{figure}[p]
\centering
\includegraphics[width=\linewidth]{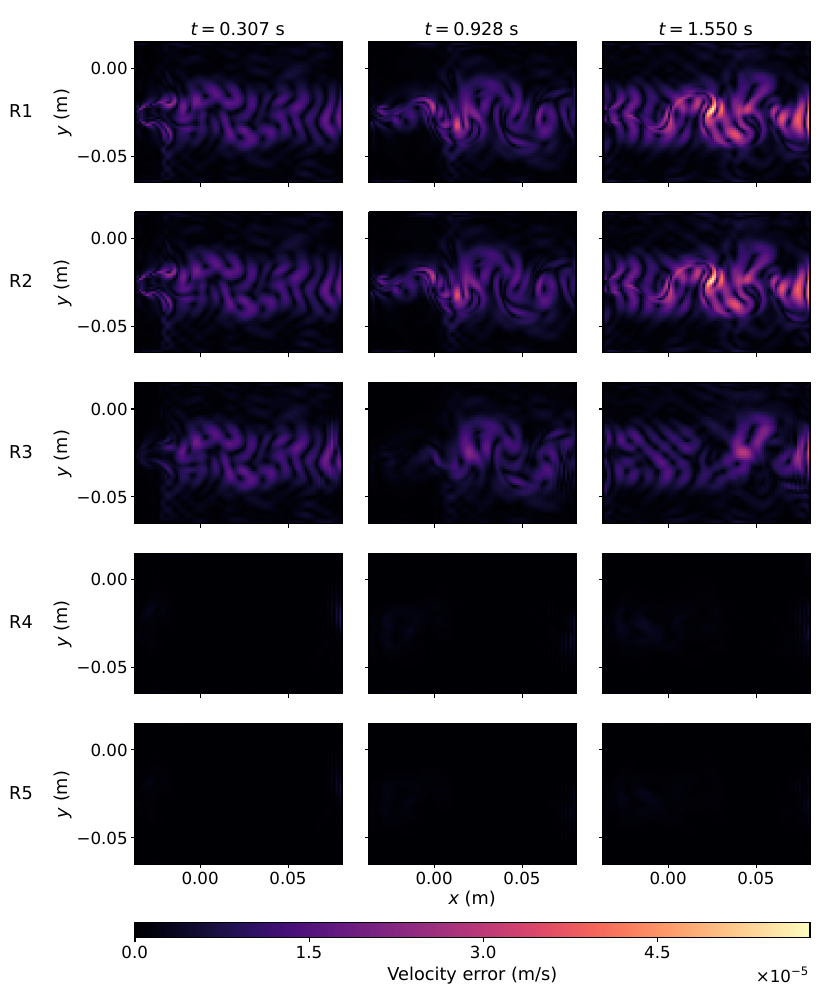}
\caption{\textbf{Velocity error across PIV development rounds.} Pointwise vector error in m/s for the retained programs R1--R5 on the same manufactured problem at three times. All panels share one linear scale determined over all five programs and all 193 times. R2 repeats R1 because its program is unchanged.}
\label{fig:piv-round-errors}
\end{figure}
\FloatBarrier

\subsubsection{VPFP}
\label{app:kinetic-visualizations}

The strong- and weak-collision examples have different references.
We show dense-field evaluations for strong-collision Sol and Astra and weak-collision Sol on generated VPFP problems.
The error panels report pointwise differences in the nondimensional model units, rather than the normalized transport score used in the aggregate tables.

Each frozen program is prepared once and queried on a $64\times64$ spatial grid at nine times in $[0,1]$.
The figures display the five times $0$, $0.25$, $0.5$, $0.75$, and $1$.
Figure~\ref{fig:kinetic-strong-reference} shows all five reference macroscopic quantities in the strong-collision example.
Figure~\ref{fig:kinetic-strong-errors} compares R1 and R6 through Sol's density error and Astra's central-pressure error, retaining one linear color scale for each comparison.
Figure~\ref{fig:kinetic-weak-sol-errors} shows the weak-collision heat-flux errors for every retained round.
The common scales expose both reductions and residual spatial structure.

\begin{figure}[p]
\centering
\includegraphics[width=\linewidth]{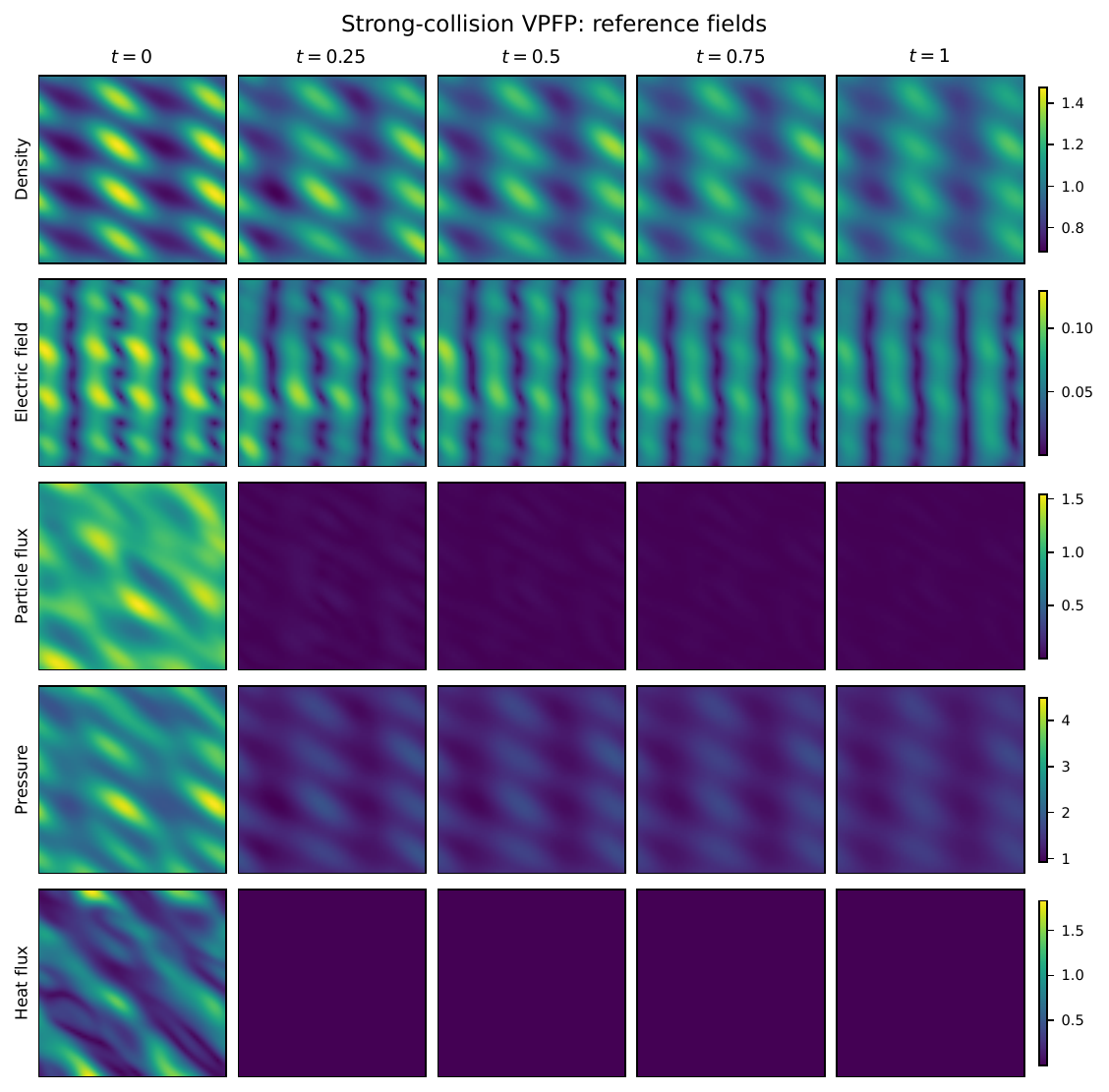}
\caption{\textbf{Reference macroscopic fields for the strong-collision VPFP example.} The first unguided test problem, sampled on a $64\times64$ grid over $[0,2\pi)^2$ at the five displayed times. Horizontal and vertical directions are $x$ and $y$. Rows show density, electric-field magnitude, particle-flux magnitude, central-pressure Frobenius norm, and central-heat-flux magnitude, in nondimensional model units. Each row uses a common linear scale across times. These are manufactured reference fields.}
\label{fig:kinetic-strong-reference}
\end{figure}

\begin{figure}[p]
\centering
\includegraphics[width=\linewidth]{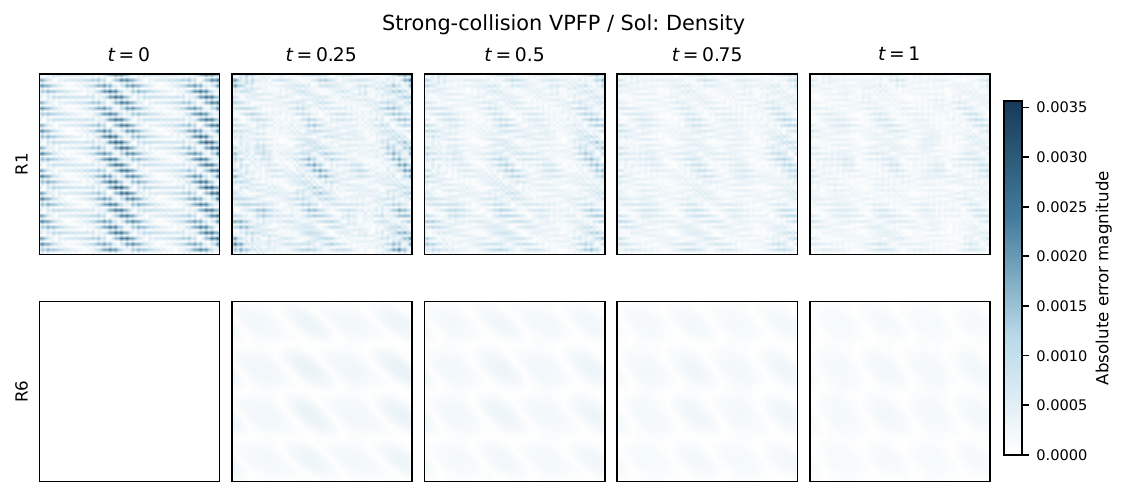}
\includegraphics[width=\linewidth]{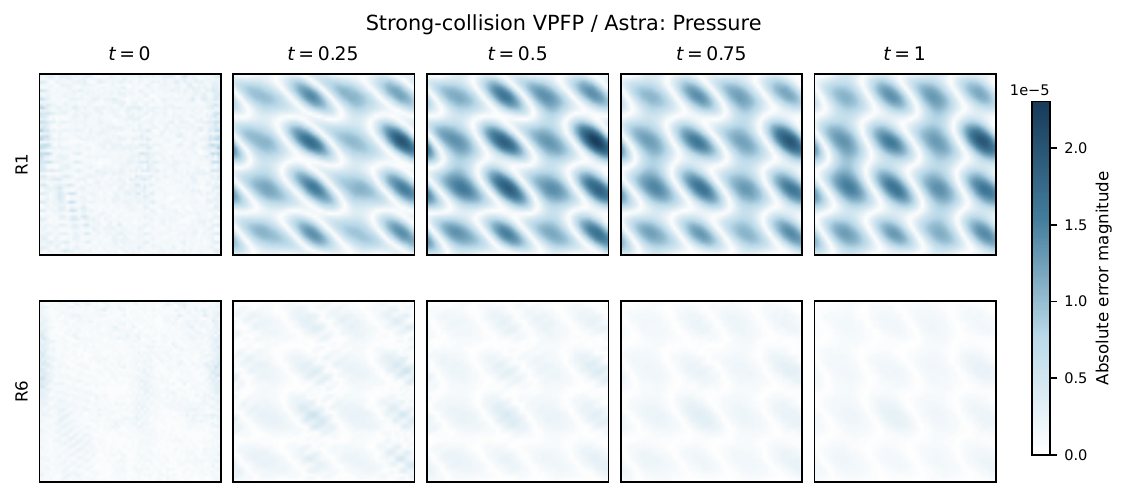}
\caption{\textbf{R1 and R6 errors on the strong-collision VPFP example.} Top: Sol density error $|\widehat\rho-\rho|$. Bottom: Astra central-pressure error $\|\widehat{\mathbf P}-\mathbf P\|_{\mathrm F}$, counting both off-diagonal tensor entries. Within each block, rows are R1 and R6 and columns are $t=0,0.25,0.5,0.75,1$. Each image covers $[0,2\pi)^2$ on a $64\times64$ grid, with $x$ horizontal and $y$ vertical. Each block has its own linear scale, fixed over both programs and all nine saved times. The errors are in nondimensional model units; the two blocks have different quantities and scales.}
\label{fig:kinetic-strong-errors}
\end{figure}

\begin{figure}[p]
\centering
\includegraphics[width=\linewidth]{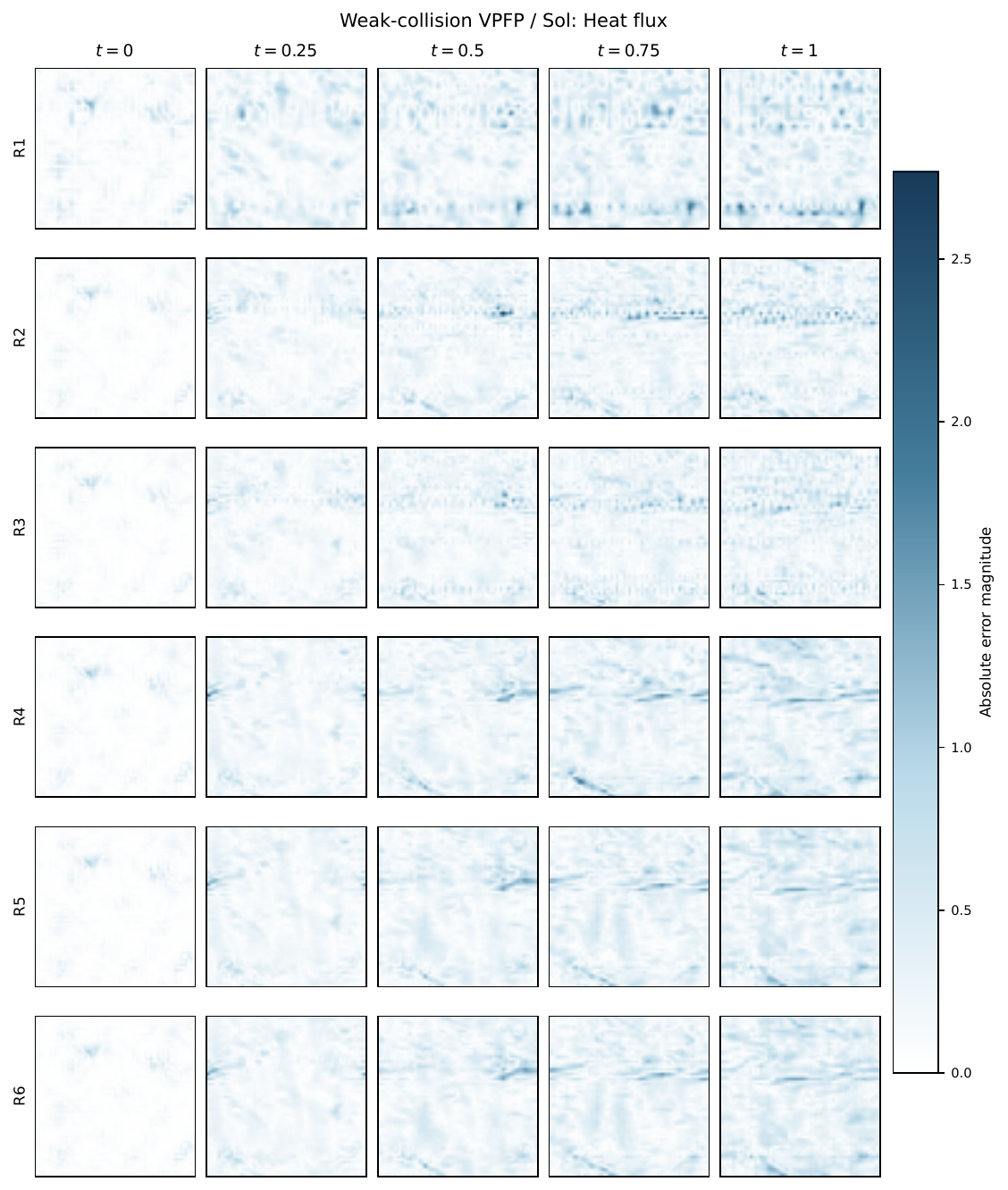}
\caption{\textbf{Weak-collision VPFP heat-flux errors across development rounds.} Sol's R1--R6 programs on the first unguided test problem. Panels show central-heat-flux error $\|\widehat{\mathbf q}-\mathbf q\|_2$ at $t=0,0.25,0.5,0.75,1$ on a $64\times64$ grid over $[0,2\pi)^2$ ($x$ horizontal, $y$ vertical). One linear color scale is shared by all rounds and nine saved times. Errors use nondimensional model units.}
\label{fig:kinetic-weak-sol-errors}
\end{figure}

\FloatBarrier

\section{Proofs}
\label{app:theory}

\subsection{\texorpdfstring{Proof of~\Cref{prop:relative-evaluation}}{Proof of Proposition \getrefnumber{prop:relative-evaluation}}}
\label{app:evaluation-theory}

\begin{proof}
Fix the learned problem distribution $\widehat P$ and a program $A$ chosen without using the test set.
All probabilities below are with respect to the $M$ independent test problems from $\widehat P$.
By \Cref{ass:evaluation}(ii), the expected loss under $\widehat P$ satisfies $\mu_{\widehat P}=\E_{c\sim\widehat P}[\ell_A(c)]\in(0,\infty)$.
Write $X(c)=\ell_A(c)/\mu_{\widehat P}-1$, so that $\E_{\widehat P}X=0$ and $X\geq-1$.
We first consider $\nu>0$ and treat $\nu=0$ at the end.
Throughout the proof, $1/\kappa=\infty$ when $\kappa=0$.

We first compare $\mu_P$ and $\mu_{\widehat P}$.
By the entropy inequality~\citep[Lemma~2.1]{baldasso2022large}, for any random variable $Z$ with $\E_P|Z|<\infty$ and $\E_{\widehat P}e^Z<\infty$, we have
\begin{equation}
 \E_P Z\leq D_{\mathrm{KL}}(P\|\widehat P)+\log\E_{\widehat P}e^Z.
 \label{eq:evaluation-entropy}
\end{equation}
We begin by verifying that $\mu_P$ is finite.
Fix $0<s<1/\kappa$.
For each integer $n\geq1$, apply~\eqref{eq:evaluation-entropy} to the bounded truncation $Z=s\min\{X+1,n\}$.
Since $\min\{X+1,n\}\leq X+1$, \Cref{ass:evaluation} gives
\[
 s\E_P\min\{X+1,n\}
 \leq \varepsilon+\log\E_{\widehat P}e^{s(X+1)}
 \leq \varepsilon+s+\frac{\nu^2s^2}{2}.
\]
Since $0\leq\min\{X+1,n\}\uparrow X+1$, monotone convergence gives $\E_P(X+1)\leq\varepsilon/s+1+\nu^2s/2<\infty$.
The bound $|X|\leq(X+1)+1$ then implies $\E_P|X|<\infty$.
Since $\ell_A=\mu_{\widehat P}(X+1)$ and $\mu_{\widehat P}<\infty$ by \Cref{ass:evaluation}(ii), we also have $\mu_P=\mu_{\widehat P}\E_P(X+1)<\infty$.

For $0<s<1/\kappa$, both $sX$ and $-sX$ are now integrable under $P$, and~\eqref{eq:normalized-tail} bounds their exponential moments under $\widehat P$.
Applying~\eqref{eq:evaluation-entropy} to these two variables gives
\[
 -\left(\varepsilon+\frac{\nu^2s^2}{2}\right)
 \leq s\E_P X
 \leq \varepsilon+\frac{\nu^2s^2}{2}.
\]
Dividing by $s>0$ and using $\E_P X=\mu_P/\mu_{\widehat P}-1$, we obtain
\[
 \left|\frac{\mu_P}{\mu_{\widehat P}}-1\right|
 =|\E_P X|
 \leq\frac{\varepsilon}{s}+\frac{\nu^2s}{2},
 \qquad 0<s<1/\kappa.
\]
If $\varepsilon=0$, letting $s\downarrow0$ gives $\mu_P=\mu_{\widehat P}$.
If $\varepsilon>0$, choose
\[
 s=\frac{\sqrt{2\varepsilon}}{\nu+\kappa\sqrt{2\varepsilon}}.
\]
This choice satisfies $0<s<1/\kappa$ and $s\leq\sqrt{2\varepsilon}/\nu$.
Substituting into the preceding bound gives
\[
 |\E_P X|
 \leq\nu\sqrt{\frac{\varepsilon}{2}}+\kappa\varepsilon
      +\nu\sqrt{\frac{\varepsilon}{2}}
 =a_\varepsilon.
\]
Thus, in both cases,
\begin{equation}
 |\mu_P-\mu_{\widehat P}|\leq a_\varepsilon\mu_{\widehat P}.
 \label{eq:evaluation-mean-shift}
\end{equation}

Next, we bound the deviation of the empirical mean loss $\widehat\mu_M$ from the expected loss $\mu_{\widehat P}$.
Let $\overline X_M=M^{-1}\sum_{j=1}^M X(\widetilde c_j)=\widehat\mu_M/\mu_{\widehat P}-1$.
For $|s|<M/\kappa$, we have $|s/M|<1/\kappa$, so independence of the test problems and~\eqref{eq:normalized-tail} give
\[
 \log\E e^{s\overline X_M}
 =\sum_{j=1}^M\log\E_{\widehat P}e^{(s/M)X}
 \leq\frac{\nu^2s^2}{2M}.
\]
Set $x=\log(2/\delta)$.
For any $0<s<M/\kappa$, applying Markov's inequality to $e^{s\overline X_M}$ yields
\[
 \mathbb{P}\left(\overline X_M>\frac{x}{s}+\frac{\nu^2s}{2M}\right)
 \leq e^{-x-\nu^2s^2/(2M)}\E e^{s\overline X_M}
 \leq e^{-x}.
\]
Replacing $s$ by $-s$ in the bound for $\log\E e^{s\overline X_M}$ gives $\log\E e^{-s\overline X_M}\leq\nu^2s^2/(2M)$.
Applying Markov's inequality to $e^{-s\overline X_M}$ therefore gives the same probability bound for $-\overline X_M$.
Choose
\[
 s=\frac{\sqrt{2Mx}}{\nu+\kappa\sqrt{2x/M}}.
\]
Then $0<s<M/\kappa$ and $s\leq\sqrt{2Mx}/\nu$, so
\[
 \frac{x}{s}+\frac{\nu^2s}{2M}
 \leq\nu\sqrt{\frac{x}{2M}}+\frac{\kappa x}{M}
      +\nu\sqrt{\frac{x}{2M}}
 =b_M.
\]
Thus the threshold in each tail bound is at most $b_M$.
Taking a union bound, we obtain
\[
 \mathbb{P}(|\overline X_M|>b_M)
 \leq \mathbb{P}(\overline X_M>b_M)+\mathbb{P}(-\overline X_M>b_M)
 \leq 2e^{-x}=\delta.
\]
Equivalently, with probability at least $1-\delta$,
\begin{equation}
 |\widehat\mu_M-\mu_{\widehat P}|\leq b_M\mu_{\widehat P}.
 \label{eq:evaluation-test-deviation}
\end{equation}

Combining~\eqref{eq:evaluation-mean-shift} and~\eqref{eq:evaluation-test-deviation} by the triangle inequality, we obtain, with probability at least $1-\delta$,
\[
 |\widehat\mu_M-\mu_P|
 \leq |\widehat\mu_M-\mu_{\widehat P}|+|\mu_{\widehat P}-\mu_P|
 \leq\mu_{\widehat P}(a_\varepsilon+b_M).
\]
Moreover, \eqref{eq:evaluation-mean-shift} and $a_\varepsilon<1$ give
\[
 \mu_P\geq(1-a_\varepsilon)\mu_{\widehat P}>0.
\]
Dividing the preceding error bound by $\mu_P$ yields
\[
 \frac{|\widehat\mu_M-\mu_P|}{\mu_P}
 \leq\frac{a_\varepsilon+b_M}{1-a_\varepsilon},
\]
as claimed.

Finally, suppose $\nu=0$.
For any $0<s<1/\kappa$, Jensen's inequality and~\eqref{eq:normalized-tail} give
\[
 1=e^{s\E_{\widehat P}X}
 \leq\E_{\widehat P}e^{sX}
 \leq e^0=1.
\]
Since the exponential function is strictly convex, equality in Jensen's inequality implies that $X$ is constant almost surely under $\widehat P$.
The identity $\E_{\widehat P}X=0$ shows that this constant is zero.
The finite KL divergence gives $P\ll\widehat P$, so $X=0$ also holds almost surely under $P$.
Consequently, $\widehat\mu_M=\mu_P=\mu_{\widehat P}>0$ with probability one, which proves the bound in this case and completes the proof.
\end{proof}

\subsection{\texorpdfstring{Proof of~\Cref{thm:guided-convergence}}{Proof of Theorem \getrefnumber{thm:guided-convergence}}}
\label{app:convergence}

The following lemma shows that the guided loss agrees with the robust objective at the current parameter and controls their difference after a parameter change.
It supplies the quadratic bound used in the convergence proof.

\begin{lemma}
\label{lem:guidance-model}
Suppose \Cref{ass:convex-guidance}(ii) holds, and let $\bar\beta\geq0$ be finite.
For $\rho>0$, let $R_t$ be the exponential tilt in~\eqref{eq:exact-guidance-tilt}, calibrated to $D_{\mathrm{KL}}(R_t\|\widehat P)=\rho$, with $0<\beta_t\leq\bar\beta$ for every $t\geq0$.
For $\rho=0$, take $R_t=\widehat P$.
Then, for every $t\geq0$ and $\theta\in\Theta$,
\begin{equation}
 0\leq F(\theta)-f_t(\theta)
 \leq\frac{\bar\beta G^2}{2}\|\theta-\theta_t\|^2.
 \label{eq:guidance-model}
\end{equation}
In particular, $f_t(\theta_t)=F(\theta_t)$.
\end{lemma}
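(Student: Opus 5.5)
The plan is to prove both inequalities by comparing an arbitrary feasible law $R$ with the tilt $R_t$ through a KL decomposition. Two tools do the work: the Gibbs identity relating $D_{\mathrm{KL}}(\cdot\|\widehat P)$ to $D_{\mathrm{KL}}(\cdot\|R_t)$, and a Donsker--Varadhan/Hoeffding bound for the bounded increment $\ell_\theta-\ell_{\theta_t}$.

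\emph{Trivial case and lower bound.} If $\rho=0$, the only feasible law is $\widehat P$, so $F=f_t$ and both sides of \eqref{eq:guidance-model} vanish. If $\rho>0$, $R_t$ satisfies $D_{\mathrm{KL}}(R_t\|\widehat P)=\rho$, so it is feasible in \eqref{eq:robust-objective}. Hence $f_t(\theta)\leq F(\theta)$ for every $\theta$. Equality at $\theta_t$ will follow from the upper bound with $\theta=\theta_t$, or directly from the stated maximality of $R_t$.

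\emph{Key identity.} Write $\beta=\beta_t\in(0,\bar\beta]$ and $Z=\E_{\widehat P}e^{\beta\ell_{\theta_t}}$. Because the losses are bounded, $dR_t/d\widehat P$ is bounded above and below by positive constants, so $R\ll\widehat P$ iff $R\ll R_t$. For any $R\ll\widehat P$ we then have
\[
D_{\mathrm{KL}}(R\|\widehat P)=D_{\mathrm{KL}}(R\|R_t)+\beta\E_R\ell_{\theta_t}-\log Z .
\]
Applying the same identity to $R=R_t$ gives $\rho=\beta\E_{R_t}\ell_{\theta_t}-\log Z$. Subtracting, every feasible $R$ satisfies
\[
D_{\mathrm{KL}}(R\|R_t)\leq\beta\Delta,\qquad \Delta:=\E_{R_t}\ell_{\theta_t}-\E_R\ell_{\theta_t}.
\]

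\emph{Upper bound.} Let $d=\ell_\theta-\ell_{\theta_t}$ and $L=G\|\theta-\theta_t\|$. By \Cref{ass:convex-guidance}(ii), $|d|\leq L$ on a set of full $\widehat P$-measure, and this set also has full $R$- and $R_t$-measure. Then
\[
\E_R\ell_\theta-f_t(\theta)=-\Delta+\bigl(\E_R d-\E_{R_t}d\bigr).
\]
Apply the entropy inequality \eqref{eq:evaluation-entropy} with base measure $R_t$ to $\lambda d$, and bound $\log\E_{R_t}e^{\lambda(d-\E_{R_t}d)}\leq\lambda^2(2L)^2/8$ by Hoeffding's lemma. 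For every $\lambda>0$ this gives
\[
\E_Rd-\E_{R_t}d\leq\frac{D_{\mathrm{KL}}(R\|R_t)}{\lambda}+\frac{\lambda L^2}{2}.
\]
Choosing $\lambda=\beta$ and inserting $D_{\mathrm{KL}}(R\|R_t)\leq\beta\Delta$ cancels the $-\Delta$ term:
\[
\E_R\ell_\theta-f_t(\theta)\leq\frac{\beta L^2}{2}\leq\frac{\bar\beta G^2}{2}\|\theta-\theta_t\|^2 .
\]
Taking the supremum over feasible $R$ yields \eqref{eq:guidance-model}. Setting $\theta=\theta_t$ gives $F(\theta_t)=f_t(\theta_t)$.

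\emph{Main obstacle.} The delicate step is seeing that the multiplier $\lambda$ must be matched to the tilt strength $\beta_t$. Only this choice makes the KL penalty for moving from $R_t$ to $R$ cancel exactly against the loss advantage $\Delta$ that $R_t$ has at $\theta_t$. The remaining measure-theoretic points are routine: mutual absolute continuity of $R$ and $R_t$, and finiteness of all the expectations. Both follow from the uniform boundedness in \Cref{ass:convex-guidance}(ii).
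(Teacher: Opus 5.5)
Your proposal is correct and follows essentially the same route as the paper: the change-of-measure identity between $D_{\mathrm{KL}}(R\|\widehat P)$ and $D_{\mathrm{KL}}(R\|R_t)$, combined with the calibration $D_{\mathrm{KL}}(R_t\|\widehat P)=\rho$, and then the entropy inequality with base measure $R_t$ at multiplier exactly $\beta_t$, which cancels the loss gap at $\theta_t$. The only difference is cosmetic: you cite Hoeffding's lemma for the centered cumulant bound, whereas the paper derives the same $\beta^2 a^2/2$ bound directly from $K''(s)\leq a^2$.
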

\noindent See~\Cref{app:guidance-model-proof} for the proof.

\begin{proof}
We first verify that the minima in~\eqref{eq:stabilized-update} and in the definition of $F^\star$ are attained.
For any distribution $R$ with $D_{\mathrm{KL}}(R\|\widehat P)\leq\rho$, finite KL divergence implies $R\ll\widehat P$.
Thus the boundedness and Lipschitz conditions in~\Cref{ass:convex-guidance}(ii) also hold $R$-almost surely.
The uniform bound on the losses makes $F$ and every $f_t$ finite.
Moreover, for any $\theta,\theta'\in\Theta$,
\[
 \bigl|\E_R\ell_\theta-\E_R\ell_{\theta'}\bigr|
 \leq\E_R|\ell_\theta-\ell_{\theta'}|
 \leq G\|\theta-\theta'\|.
\]
Taking $R=R_t$ shows that each $f_t$ is $G$-Lipschitz.
The same inequality also gives $\E_R\ell_\theta\leq F(\theta')+G\|\theta-\theta'\|$.
Taking the supremum over $R$ and then interchanging $\theta$ and $\theta'$, we obtain
\[
 |F(\theta)-F(\theta')|\leq G\|\theta-\theta'\|.
\]
Consequently, $F$ and the objective in~\eqref{eq:stabilized-update} are continuous on the nonempty compact set $\Theta$, so their minima are attained.
Fix $\theta^\star\in\Theta$ such that $F(\theta^\star)=F^\star$.

We next prove that $F(\theta_{t+1})\leq F(\theta_t)$ for every $t\geq0$.
Fix $t\geq0$ and set $C=\bar\beta G^2$.
By the definition of idealized guidance, $D_{\mathrm{KL}}(R_t\|\widehat P)\leq\rho$, so $f_t(\theta)\leq F(\theta)$ for all $\theta\in\Theta$.
For any such $\theta$, we have
\begin{align}
 F(\theta_{t+1})+\frac{\tau-C}{2}\|\theta_{t+1}-\theta_t\|^2
 &\leq f_t(\theta_{t+1})+\frac\tau2\|\theta_{t+1}-\theta_t\|^2 \notag\\
 &\leq f_t(\theta)+\frac\tau2\|\theta-\theta_t\|^2 \notag\\
 &\leq F(\theta)+\frac\tau2\|\theta-\theta_t\|^2.
 \label{eq:majorization-step}
\end{align}
The first inequality follows from \Cref{lem:guidance-model} at $\theta_{t+1}$, and the second follows from the minimizing property in~\eqref{eq:stabilized-update}.
Taking $\theta=\theta_t$ in~\eqref{eq:majorization-step} and using $\tau>C$, we obtain
\[
 F(\theta_{t+1})
 \leq F(\theta_t)-\frac{\tau-C}{2}\|\theta_{t+1}-\theta_t\|^2
 \leq F(\theta_t).
\]
This proves the first claim.

It remains to prove~\eqref{eq:convex-guided-rate}.
Write $D=\operatorname{diam}(\Theta)$ and $\Delta_t=F(\theta_t)-F^\star\geq0$.
For any $\alpha\in[0,1]$, convexity of $\Theta$ gives $(1-\alpha)\theta_t+\alpha\theta^\star\in\Theta$.
Substituting this parameter into~\eqref{eq:majorization-step} and using $\tau>C$, we obtain
\begin{align*}
 F(\theta_{t+1})
 &\leq F\bigl((1-\alpha)\theta_t+\alpha\theta^\star\bigr)
       +\frac{\tau\alpha^2}{2}\|\theta^\star-\theta_t\|^2\\
 &\leq(1-\alpha)F(\theta_t)+\alpha F^\star
       +\frac{\tau\alpha^2D^2}{2},
\end{align*}
where the last inequality follows from convexity of $F$ and $\|\theta^\star-\theta_t\|\leq D$.
Subtracting $F^\star$ from both sides gives
\begin{equation}
 \Delta_{t+1}\leq(1-\alpha)\Delta_t+\frac{\tau\alpha^2D^2}{2}.
 \label{eq:convex-comparison}
\end{equation}
For each $t\geq0$, choose $\alpha=2/(t+2)\in(0,1]$.
Then~\eqref{eq:convex-comparison} becomes
\[
 \Delta_{t+1}\leq\frac{t}{t+2}\Delta_t+\frac{2\tau D^2}{(t+2)^2}.
\]
Multiplying by $(t+1)(t+2)$ and rearranging yields
\[
 (t+1)(t+2)\Delta_{t+1}-t(t+1)\Delta_t
 \leq 2\tau D^2\frac{t+1}{t+2}.
\]
Summing over $t=0,\ldots,T-1$, the intermediate terms on the left cancel and the coefficient of $\Delta_0$ is zero.
Hence
\begin{align*}
 T(T+1)\Delta_T
 &=\sum_{t=0}^{T-1}\bigl((t+1)(t+2)\Delta_{t+1}-t(t+1)\Delta_t\bigr)\\
 &\leq 2\tau D^2\sum_{t=0}^{T-1}\frac{t+1}{t+2}
 \leq 2\tau D^2T,
\end{align*}
where the last inequality uses $(t+1)/(t+2)\leq1$.
Since $T\geq1$, dividing by $T(T+1)$ and recalling the definitions of $D$ and $\Delta_T$ proves~\eqref{eq:convex-guided-rate}.
\end{proof}

\subsubsection{\texorpdfstring{Proof of~\Cref{lem:guidance-model}}{Proof of Lemma \getrefnumber{lem:guidance-model}}}
\label{app:guidance-model-proof}

\begin{proof}
If $\rho=0$, the only feasible distribution is $\widehat P=R_t$, so $F=f_t$ and the claim holds.
Suppose $\rho>0$, and fix $t\geq0$ and $\theta\in\Theta$.
Write $\beta=\beta_t$ and $d(c)=\ell_\theta(c)-\ell_{\theta_t}(c)$.
The exponential tilt in~\eqref{eq:exact-guidance-tilt} is equivalent to $\widehat P$, and its log density is bounded by the boundedness of the losses.
For any feasible distribution $R$, changing the reference measure in the KL divergence therefore gives
\begin{align*}
 D_{\mathrm{KL}}(R\|R_t)
 &=D_{\mathrm{KL}}(R\|\widehat P)
   -\beta\E_R\ell_{\theta_t}
   +\log\E_{\widehat P}e^{\beta\ell_{\theta_t}}\\
 &=D_{\mathrm{KL}}(R\|\widehat P)-\rho
   -\beta\bigl(\E_R\ell_{\theta_t}-\E_{R_t}\ell_{\theta_t}\bigr),
\end{align*}
where the second equality uses $D_{\mathrm{KL}}(R_t\|\widehat P)=\rho$.
The first equality also shows that $R_t$ uniquely maximizes the KL-regularized objective: its value at any feasible $R$ is $\beta^{-1}\log\E_{\widehat P}e^{\beta\ell_{\theta_t}}-\beta^{-1}D_{\mathrm{KL}}(R\|R_t)$.
This identity holds for every $R$ with finite $D_{\mathrm{KL}}(R\|\widehat P)$, and the second equality shows that calibration makes $R_t$ a constrained maximizer as well.
Since $D_{\mathrm{KL}}(R\|\widehat P)\leq\rho$, it follows that
\[
 \E_R\ell_\theta-\E_{R_t}\ell_\theta
 \leq \E_Rd-\E_{R_t}d-\frac1\beta D_{\mathrm{KL}}(R\|R_t)
 \leq \frac1\beta\log\E_{R_t}e^{\beta(d-\E_{R_t}d)}.
\]
For the last inequality, apply nonnegativity of relative entropy to the probability law with density $e^{\beta d}/\E_{R_t}e^{\beta d}$ relative to $R_t$.
All these quantities are finite because the losses are bounded.

To bound the centered exponential moment, set $a=G\|\theta-\theta_t\|$.
The common Lipschitz bound gives $|d|\leq a$ almost surely under $R_t$.
For $s\in\mathbb R$, define $K(s)=\log\E_{R_t}e^{s(d-\E_{R_t}d)}$.
Boundedness of $d$ permits differentiation under the expectation, and $K''(s)$ is its variance under the probability law proportional to $e^{sd}\,dR_t$.
Under any such law, $|d|\leq a$, so $K''(s)\leq a^2$.
Since $K(0)=K'(0)=0$, integration over $[0,\beta]$ gives $K(\beta)\leq\beta^2a^2/2$.
Consequently, taking the supremum over feasible $R$ yields
\[
 F(\theta)-f_t(\theta)
 \leq\frac{\beta G^2}{2}\|\theta-\theta_t\|^2
 \leq\frac{\bar\beta G^2}{2}\|\theta-\theta_t\|^2.
\]
Feasibility of $R_t$ gives the lower bound in~\eqref{eq:guidance-model}; setting $\theta=\theta_t$ gives equality of the two objectives there, completing the proof.
\end{proof}

\end{document}